\newtheorem{theo}{Theorem}
\newtheorem{lemma}{Lemma}
\theoremstyle{theorem}
\theoremstyle{definition}
\theoremstyle{plain}
\newtheorem{proposition}{Proposition}
\theoremstyle{plain}
\newcommand{\muv}{\boldsymbol{\mu}}
\newcommand{\epv}{{\mbox{\boldmath $\epsilon$}}}
\newcommand{\nuv}{{\mbox{\boldmath $\nu$}}}
\newcommand{\Gamb}{{\mbox{\boldmath $\Gamma$}}}
\newcommand{\Phib}{{\mathbf{\Phi}}}
\newcommand{\av}{{\bf{a}}}
\newcommand{\bv}{{\bf{b}}}
\newcommand{\cv}{{\bf{c}}}
\newcommand{\dv}{{\bf{d}}}
\newcommand{\fv}{{\bf{f}}}
\newcommand{\mv}{{\bf{m}}}
\newcommand{\nv}{{\bf{n}}}
\newcommand{\uv}{{\bf{u}}}
\newcommand{\vv}{{\bf{v}}}
\newcommand{\wv}{{\bf{w}}}
\newcommand{\xv}{{\bf{x}}}
\newcommand{\yv}{{\bf{y}}}
\newcommand{\zv}{{\bf{z}}}
\newcommand{\zerov}{{\bf{0}}}
\newcommand{\onev}{{\bf{1}}}
\newcommand{\Hb}{\mathbf{H}}
\newcommand{\Ib}{\mathbf{I}}
\newcommand{\Lb}{\mathbf{L}}
\newcommand{\Mb}{\mathbf{M}}
\newcommand{\Nb}{\mathbf{N}}
\newcommand{\Qb}{\mathbf{Q}}
\newcommand{\Sb}{\mathbf{S}}
\newcommand{\Ub}{\mathbf{U}}
\newcommand{\Vb}{\mathbf{V}}
\newcommand{\Wb}{\mathbf{W}}
\newcommand{\Xb}{\mathbf{X}}
\newcommand{\Yb}{\mathbf{Y}}
\newcommand{\Zb}{\mathbf{Z}}
\newcommand{\Sigmab}{\mathbf{\Sigma}}
\newcommand{\Real}{\mathbb{R}}
\newcommand{\Id}{\mathbbm{1}}
\newcommand{\Prob}{\mathbbm{P}}
\newcommand{\Ac}{{\cal{A}}}
\newcommand{\Ec}{{\cal{E}}}
\newcommand{\Gc}{{\cal{G}}}
\newcommand{\Nc}{{\cal{N}}}
\newcommand{\Sc}{{\cal{S}}}
\newcommand{\Oc}{{\cal{O}}}
\newcommand{\Lc}{{\cal L}}
\newcommand{\Vc}{{\cal V}}
\newcommand{\Tc}{{\cal T}}
\newcommand\normop[1]{\left\lVert#1\right\rVert}
\newcommand{\argminn}{\mathop{ \mathsf{argmin}}}
\newcommand{\tr}{\mathop{ \mathrm{tr}}}
\newcommand{\asto}[1]{#1^\ast}
\newcommand{\diag}{\mathrm{diag}}
\newcommand{\Diag}{\mathrm{Diag}}
\newcommand{\cte}{\mathrm{const}}
\newcommand{\SR}{\mathrm{SR}}
\newcommand{\rank}{\mathrm{rank}}
\begin{document}

%
\title{Time-Varying Graph Learning for Data with Heavy-Tailed Distribution\thanks{This work was supported by the Hong Kong GRF 16206123 research grant and the Hong Kong RGC Postdoctoral Fellowship Scheme of Project No. PDFS2425-6S05.} }

\author{Amirhossein~Javaheri~\IEEEmembership{Graduate~Student~Member},~Jiaxi~Ying,~Daniel~P.~Palomar,~\IEEEmembership{Fellow,~IEEE},~and~Farokh~Marvasti,~\IEEEmembership{Life Senior Member,~IEEE}  
\thanks{Amirhossein Javaheri is with  the Hong Kong University of Science and Technology and Sharif University of Technology (e-mail: sajavaheri@connect.ust.hk, javaheri\_amirhossein@ee.sharif.edu). Jiaxi Ying and Daniel P. Palomar are with the 
Hong Kong University of Science and Technology, Hong Kong (e-mails: jx.ying@connect.ust.hk, palomar@ust.hk), Farokh Marvasti is with 
Sharif University of Technology, Iran (e-mail: marvasti@sharif.edu).

A conference version of this paper is presented at the 2024 EUSIPCO conference \cite{javaheri_EUSIPCO}.
}
}

\date{}
\maketitle
\thispagestyle{plain}
\pagestyle{plain}

\begin{abstract}
Graph models provide efficient tools to capture the underlying structure of data defined over networks. Many real-world network topologies are subject to change over time. Learning to model the dynamic interactions between entities in such networks is known as time-varying graph learning. Current methodology for learning such models often lacks robustness to outliers in the data and fails to handle heavy-tailed distributions, a common feature in many real-world datasets (e.g., financial data). This paper addresses the problem of learning time-varying graph models capable of efficiently representing heavy-tailed data. Unlike traditional approaches, we incorporate graph structures with specific spectral properties to enhance data clustering in our model. Our proposed method, which can also deal with noise and missing values in the data, is based on a stochastic approach, where a non-negative vector auto-regressive (VAR) model captures the variations in the graph and a Student-t distribution models the signal originating from this underlying time-varying graph. We propose an iterative method to learn time-varying graph topologies within a semi-online framework where only a mini-batch of data is used to update the graph. Simulations with both synthetic and real datasets demonstrate the efficacy of our model in analyzing heavy-tailed data, particularly those found in financial markets.
\end{abstract}

\begin{IEEEkeywords}
Time-varying, graph learning, Laplacian matrix, data clustering, heavy-tailed distribution, corrupted measurements, financial data
\end{IEEEkeywords}

%
\IEEEpeerreviewmaketitle

\section{Introduction}
\label{sec:Intro}

\IEEEPARstart{G}{raph} signal processing (GSP) is an interesting research area that combines graph theory and signal processing to model and analyze signals defined on graph structures, which has numerous real-world applications in social networks \cite{campbell_social_2013}, image processing \cite{zhang_improved_2021}, data mining, communications \cite{tamura_applications_1970}, finance \cite{cardoso_algorithms_2020}, and more. The process of inferring the structure of a graph from data is known as graph learning \cite{ortega_graph_2018}. There are generally two types of graphs, namely undirected and directed, with each type modeling different characteristics of data. An undirected graph models bilateral relationships or similarities in data, while a directed graph is commonly used to model unilateral causal dependencies \cite{dong_learning_2019}. 

There are various approaches in the literature for modeling signals via graphs. Many graph learning methods consider a probabilistic model for the data, in which a graph structure represents the statistics of the data. A foundational technique in this domain is the Gaussian Markov Random Field (GMRF) model \cite{rue_gaussian_2005}, which assumes that the data follows a multivariate Gaussian distribution. In a general GMRF, the precision matrix encodes the conditional independence relationships between variables, thereby defining the structure of the underlying undirected graph. Recently, there has been growing interest in Laplacian constrained GMRFs \cite{ying2020nonconvex,ying2021minimax,egilmez_graph_2017}, where the precision matrix is specifically constrained to be the Laplacian matrix of an undirected graph \cite{zhang_graph_2016}. 
This constraint is particularly desirable for modeling smooth signals on graphs, where a higher weight between two nodes signifies a stronger similarity between their signal values \cite{ying2020does}. 
Under this framework, the graph (the Laplacian matrix) can be inferred via maximum likelihood (ML) or maximum a posteriori (MAP) estimation. The Graphical LASSO (GLASSO) \cite{friedman_sparse_2008} is one of the early works in this regard, which was later improved by introducing structural \cite{lake_discovering_2010,  egilmez_graph_2017, 
zhao_optimization_2019, cardoso2022learning} and spectral constraints \cite{kumar_unified_2020} into the problem of learning the graph from data. Directed graph topologies, used for modeling directional dependencies, can also be learned using different approaches. Some topology identification methods are based on structural equation models \cite{kaplan_structural_2009}, while many other approaches use vector auto-regressive (VAR) models that can be represented with directed graphs \cite{songsiri_topology_2010, bolstad_causal_2011, mei_signal_2017}. There have also been some recent works that incorporate both directed and undirected graph topologies to model spatial and temporal correlations at the same time \cite{javaheri_learning_2024}.

The methodologies mentioned above are mostly applicable for learning static graphs. However, in many real-world applications, such as social networks or finance, the network structure is subject to change over time. Therefore, one of the challenges in graph learning is to learn a time-varying graph topology. There are several approaches in the literature for learning such dynamic graphs \cite{kalofolias_learning_2017, hallac_network_2017, yamada_time-varying_2020, cardoso_learning_2020, natali_online_2021-1, hamon_tracking_2013}.
However, the existing approaches lack robustness to data outliers and cannot efficiently model heavy-tailed distributions, such as those observed in financial data. Moreover, these models also fail to capture graph topologies that adhere to specific structural and spectral properties, such as $k$-component graphs. In this paper, we propose a novel framework for learning time-varying graphs that incorporates spectral and structural constraints in the problem of inferring graph topologies. Our approach is based on a stochastic model that can first characterize the statistical properties of heavy-tailed data and second be employed for data clustering. We subsequently investigate the applications of our framework for time-varying data analysis in financial markets.

\begin{figure}[!b]
    \centering
    \vspace{-10pt}
\begin{subfigure}[t]{0.35\textwidth}
\includegraphics[trim=0 35 0 13,clip,width=\textwidth]
{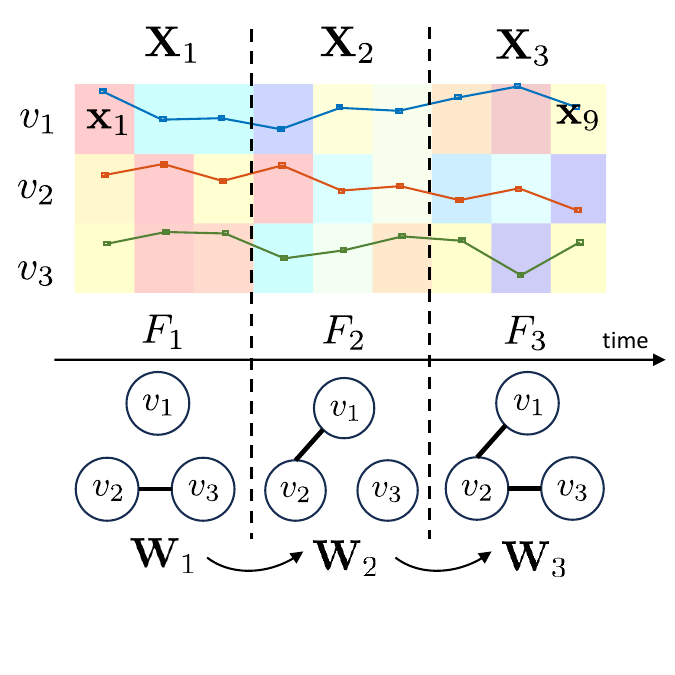}
\end{subfigure}
\caption{Illustration of  the  concept of time-varying graphs.}
\end{figure}

\begin{figure}[!t]
 \centering
\begin{subfigure}[t]{0.4\textwidth}
\includegraphics[trim=0 0 0 0,clip,width=\textwidth]
{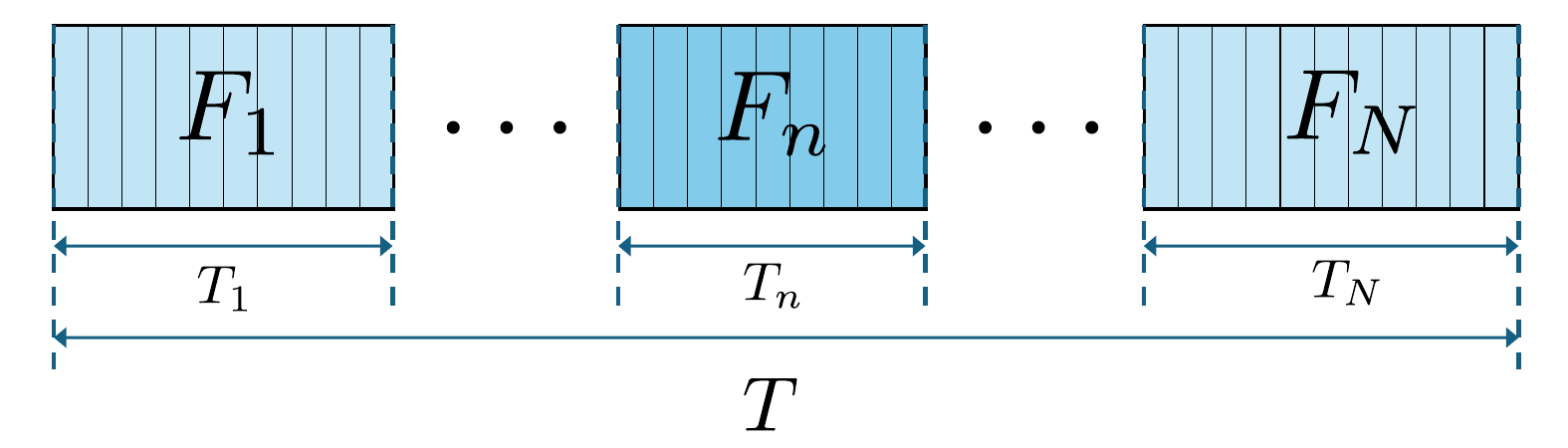}
\end{subfigure}
\caption{Illustration of  the time frames.}
\label{fig_frames}
\end{figure}

\section{Overview of Existing Works}
\subsection{Notation}
In this paper, bold lower-case letters are employed for vectors (e.g., $\mathbf{x}$) and bold upper-case letters are used to denote matrices (e.g., $\mathbf{X}$). The operator $\mathrm{det}^\star$ represents the generalized determinant (the product of non-zero eigenvalues of a matrix), and $\Id(\cdot)$ denotes the identity operator. The Hadamard (point-wise) product and division are respectively denoted by $\odot$ and $\oslash$. We also use $\normop{\mathbf{x}}_p$ for the $l_p$ vector norm (simply omitting the subscript for $p=2$) and $\normop{\mathbf{X}}_F$ for the matrix Frobenius norm. We use $\diag(\Xb)$ to denote the vector of diagonal elements of $\Xb$ and $\Diag(\xv)$ to denote a diagonal matrix with $\xv$ on its diagonals.  
The  graph Laplacian, the adjacency, and the degree operators \cite{kumar_unified_2020} are respectively denoted by $\Ac$, $\Lc$, and $\mathfrak{d}$.

   \subsection{Problem Statement}
   Assume a dynamic signal $\xv_t \in \mathbb{R}^p$ defined over a time-varying undirected graph structure, where each vertex of the graph represents an element of the signal, and the edge weights encode the (bilateral) interactions between these elements. Suppose the weights of the graph vary only at specific time instants (piece-wise constant graph). We may then segment the time indices into frames of length $T_n$, where the graph weights are assumed to remain constant within each time frame. Let $F_{n}$ denote the time indices of the $n$-th frame of data, where $n \in \{1,\ldots,N\}$. In the case of no overlap between consecutive frames, as shown in Fig. \ref{fig_frames}, we have
   \( T = \sum_{n=1}^{N} T_n \).
A time-varying undirected graph structure can subsequently be represented by $\Gc = \{\Vc, \Ec_{n}, \Wb_{n} \}$. Here, $\Vc = \{1, \ldots, p\}$ denotes the set of vertices (which remains fixed over time), $\Ec_{n} \subseteq \{ \{i, j\} \mid\, i, j \in \Vc \}$ represents the time-varying set of edges (unordered pairs of nodes connected to each other) at time frame $n$, and $\Wb_{n}$ is the (weighted) adjacency matrix. Each entry $W_{i,j}(n) \geq 0$ of $\Wb_{n}$ denotes the weight of the edge between vertex $i$ and vertex $j$ during time frame $n$. 
Alternatively, one can represent an undirected graph by the vector of all possible edge weights $\wv_{n} \in \Real_+^{p(p-1)/2}$. The edge weights can also be mapped to the adjacency matrix using the adjacency operator $\Ac$ \cite{kumar_unified_2020}, i.e., $\Wb_{n} = \Ac \wv_{n}$. Furthermore, the time-varying Laplacian matrix can be derived as $\Lb_{n} = \Diag(\Wb_{n} \onev) - \Wb_{n} = \Lc \wv_{n}$, where $\Lc$ denotes the Laplacian operator \cite{kumar_unified_2020}.
 
Considering $T$ snapshots (time measurements) of the signal vertically arranged in the columns of $\Xb = [\xv_1, \hdots, \xv_T] \in \Real^{p \times T}$. Let $\Xb_{n} = [\xv_t \mid \, t \in F_{n}]$ denote the data matrix at time frame $n$.

A time-varying graph learning problem can be generally formulated as follows:
\begin{equation}
\label{eq:TVGLASSO}
\begin{array}{cl}      
    \underset{  
 \{\Sb_{n} \mid_{n \in \Tc}\} \in \,\Omega_\Sb }{\mathsf{min}} &
 \hspace*{-8pt} \sum_{n \in \Tc} f_1(\Sb_n, \Sigmab_n) + f_2(\Sb_n) + f_3(\Sb_n, \Sb_{n-1}), 
\end{array}
\end{equation}
where $\Sb_n$ denotes the graph matrix (Laplacian or the weighted adjacency matrix), $\Omega_\Sb$ denotes the set of feasible graph matrices, $\Tc\subseteq \{1,\cdots,N\}$ denotes the set of frame indices for which the data is available, and $\Sigma_n$ denotes the data statistics matrix (e.g., sample covariance matrix). 
Here $f_1(\cdot)$ is  a fidelity criterion measuring  how well the graph matches the statistics of the data, $f_2(\cdot)$ is a regularization function used to promote  properties such as sparsity, and $f_3(\cdot)$ is a temporal consistency term formulating the smoothness of the graph variations. 

In the next part, we will discuss different choices for the objective function proposed by current methods.

\subsection{Related Works}
The notion of time-dependent graphs has roots dating back to the late 1990s \cite{harary_dynamic_1997}; however, the \emph{time-varying graph learning} methodology and concept were quite recently established. The time-varying graphical LASSO (TVGLASSO) \cite{hallac_network_2017} is one of the early works on this topic, which extends the well-known GLASSO \cite{friedman_sparse_2008} inference method to the case of time-varying topologies. Let $\Lb_n$ denote the Laplacian of a time-varying graph at frame $n$. Then, the graph learning problem in this paper is formulated as follows:
\begin{equation}
\label{eq:TVGLASSO}
\begin{array}{cl}      
    \underset{  
 \{\Lb_{n} \mid_{n = 1}^{N}\} \succ \zerov }{\mathsf{min}} &
   \hspace*{-8pt}\sum_{n=1}^{N} \Big( - T_n\log\det (\Lb_{n}) +  T_n \tr(\Sigmab_{n}\Lb_{n}) 
   \\
   &
   + \lambda \normop{\Lb_{n}}_{1,\text{off}}   \Big) + \beta\sum_{n=2}^{N} h(\Lb_{n} - \Lb_{n-1}),
\end{array}
\end{equation}
where $\Sigmab_{n}$ denotes the data statistics matrix at time frame $n$ (usually the covariance matrix) used as a similarity criterion. Moreover, $h(\cdot)$ is a regularization function utilized as a measure of the temporal smoothness of the graph variations. The TVGLASSO, however, does not incorporate the structural constraints of the CGL Laplacian matrix \cite{egilmez_graph_2017}, and the learned matrices $\{\Lb_{n} \mid_{n = 1}^N\}$ are only positive definite inverse covariance (precision) matrices. Another early work on time-varying graph topology identification is the method in \cite{kalofolias_learning_2017}.
This work incorporates the Laplacian structural constraints by reformulating the problem in terms of the weighted adjacency matrices $\{\Wb_{n} \mid_{n = 1}^N\}$ as follows:
\begin{equation}
\begin{array}{cr}        
    \underset{  
 \{\Wb_{n} \mid_{n = 1}^{N}\} \geq \zerov }{\mathsf{min}} &
  \hspace{-8pt} \sum_{n=1}^{N} \left( \tr(\Wb_{n}\Zb_{n}^\top) + f_W( \Wb_{n})  \right) 
   \\
   &
   + \gamma\sum_{n=2}^{N} h(\Wb_{n}, \Wb_{n-1}),
\end{array}
\end{equation}
where $\Zb_{n}$ denotes the distance matrix whose $(i,j)$-th entry equals the distance between signal elements $i$ and $j$ at time frame $n$. For Euclidean distance, $[\Zb_{n}]_{i,j} = \normop{\tilde{\xv}_{n_i} - \tilde{\xv}_{n_j}}$, where $\tilde{\xv}_{n_i}$ denotes the $i$-th row of the data matrix $\Xb_n$. Here, $f_W(\cdot)$ is a measure of graph smoothness, and $h(\cdot)$ is a regularization function for the temporal variations of the graph weights. Specifically, $f_W(\Wb) = -\alpha \onev^\top \log (\Wb \onev) + \beta\normop{\Wb}_F^2$ and $h(\Wb_{n}, \Wb_{n-1}) = \normop{\Wb_{n} - \Wb_{n-1}}_F^2$ are used in this paper. Another variant of the above formulation with $h(\Wb_{n}, \Wb_{n-1}) = \normop{\Wb_{n} - \Wb_{n-1}}_{1,1}$ is also considered in \cite{yamada_time-varying_2019}, which is solved using a primal-dual splitting method.

In \cite{yamada_time-varying_2020}, the static factor graph model in \cite{dong_learning_2016} is generalized to the time-varying graph factor analysis (TGFA) framework, where the graph signal at time-stamp $t$ is considered to have a Gaussian distribution as follows: 
\begin{align}
    \xv_t \sim \Nc (\zerov, \Lb_{n}^\dagger + \sigma_\epsilon^2 \Ib),\quad \forall t\in F_{n}.
\end{align}
The problem is then formulated as 
 \begin{equation}
 \label{eq:TGFA}
\begin{array}{cr}        
    \underset{  
 \{\Lb_{n}\vert_{n = 1}^{N}\} \in \,\Omega_\Lb  }{\mathsf{min}} &
   \hspace{-8pt}\sum_{n=1}^{N} \left( \tr(\Lb_{n}\Sigmab_{n}^\top) + f_L( \Lb_{n})  \right) 
   \\
   &
   + \eta\sum_{n=2}^{N} R(\Delta \Lb_{n} \odot \Hb),
\end{array}
\end{equation}
where $\Hb = \Ib - \onev\onev^\top$, and $\Omega_\Lb$ denotes the set of feasible CGL Laplacian matrices, with $ f_L(\Lb_n) = -\alpha \onev^\top \log (\diag(\Lb_n))$ $+$ $\beta \normop{\Lb_n}_{\text{off},F}^2 $.

 Choosing $R(\cdot) = \normop{\cdot}_{1,1}$ to promote the sparsity of the temporal variations, a formulation similar to the authors' prior work \cite{yamada_time-varying_2019} is yielded, which is also solved via a primal-dual splitting method. In \cite{cardoso_learning_2020}, several approaches for graph learning, including time-varying graphs, are proposed with applications in financial markets. The problem formulation for learning time-varying graphs in this paper is the same as that in the TVGLASSO method \eqref{eq:TVGLASSO}. However, here the structural constraints of the Laplacian matrices $\{\Lb_{n}\vert_{n=1}^{N}\}$ are incorporated into the formulation (as in \eqref{eq:TGFA}), and only causal batch data (past data samples) are used for graph learning.  
In \cite{natali_learning_2022}, a general framework for time-varying topology learning is introduced that can deal with online streaming data. Three types of models are studied in this paper, namely the time-varying Gaussian graphical model (TV-GGM), the time-varying smoothness-based model (TV-SBM), and the time-varying structural equation model (TV-SEM), where the first two apply to undirected graphical models, and the last one is utilized for directed topology identification. The methodology in this paper is based on a time-varying optimization framework proposed in \cite{simonetto_class_2016}. Here, the general formulation for time-varying graph learning is as follows:
 \begin{equation}
\begin{array}{cl}        
  \Lb_t^\star  = \underset{  
 \Lb  }{\mathsf{argmin}} &
   F(\Lb;t) = 
   f(\Lb;t) + \lambda\, g(\Lb;t),
\end{array}
\end{equation}
where $\Lb_t^\star$ denotes the matrix representation for the graph model (e.g., the Laplacian matrix for an undirected graph).
In this formulation, $f(\cdot)$ represents a smooth, strongly convex differentiable fidelity (similarity) measure, and $g(\cdot)$ denotes a potentially non-smooth convex regularization function. For instance, in the TV-GGM Gaussian graphical model, we have
\begin{align}
\begin{split}
    f(\Lb;t) &= -\log \det(\Lb) + \tr(\Lb\Sigmab_t),\\
    g(\Lb;t) &= g(\Lb) = {i}_{\Omega_\Lb}(\Lb) = \begin{cases}
        0, & \Lb \in \Omega_\Lb\\
        \infty, & \Lb \notin \Omega_\Lb
    \end{cases} ,
\end{split}
\end{align}
where $\Omega_\Lb = \{\Lb \succ \zerov\}$ denotes the set of positive definite matrices.
In this problem, a solution is found via recursive prediction-correction steps. In the prediction step, a quadratic second-order approximation $\hat{F}(\Lb;t+1)$ of the unobserved function $F(\Lb;t+1)$ is minimized. In the correction step, the exact function $F(\Lb;t+1)$ is optimized by exploiting the updated statistics of the data, with the new data sample $\xv_t$ being received.

In addition to time-varying graph learning approaches for undirected structures, there have been efforts to learn time-varying directed topologies arising in structural equation models, including the works in \cite{natali_online_2021-1} and \cite{baingana_tracking_2017}. 
Several other online approaches to graph topology identification have also been studied in the literature, e.g., \cite{vlaski_online_2018, shafipour_online_2020, money_online_2021, saboksayr_online_2021, sardellitti_online_2021, buciulea_online_2024}.

Most of the existing approaches for time-varying graph learning are either tailored for offline (full-batch) data, where all the samples $\{1, \ldots, T\}$ are collected for topology identification \cite{kalofolias_learning_2017, hallac_network_2017, yamada_time-varying_2020}, or they deal with online streaming data, where the graph topology is updated with every new data sample $\xv_t$ being acquired \cite{natali_online_2021-1, natali_learning_2022}. There is also a causal (batch) approach \cite{cardoso_learning_2020} that exploits all the past data frames (e.g., $1, \ldots, n$) for learning the graph at the current time frame (e.g., $n$). The offline (full-batch) and causal (batch) approaches suffer from delays and the need for large data storage, while the online approach is slow due to the high computational costs incurred by graph learning at every time stamp. This can make it impractical for real-time use unless some form of cache storage is utilized. Additionally, online approaches may struggle to efficiently capture the temporal consistency in the variations of the graph weights.

This challenge can be addressed by proposing a semi-online (mini-batch) approach in which the graph is updated using only the data samples from a single time frame. The length of the data frame \( T_n \) can then be adjusted based on how dynamic the desired graph topology is assumed to be, depending on the application.

The existing time-varying graph learning methods are also designed for Gaussian data, and they cannot efficiently deal with heavy-tailed data or data with outliers, which are very common in real-world applications, e.g., financial markets \cite{resnick_heavy-tail_2007}. Hence, these methods cannot be applied to learning time-varying topologies of markets in finance. They cannot either learn graphs with specific spectral properties that can be used for clustering, e.g., \( k \)-component graphs, which are very applicable in unsupervised machine learning (data mining) \cite{paratte_graph-based_2017}.

{\color{black} Another issue with the current approaches is that they
utilize a simple subtractive model for the graph variations, where the difference \( \wv_{n} - \wv_{n-1} \) is assumed to be smooth. However, in many real cases, this model is not sufficient and a
multiplicative factor \( \av \) as \( \wv_{n} - \av \odot \wv_{n-1} \) may be required to better model the variations.

Another drawback of the existing methods is that they rely on complete statistics of the data, while in many real-world applications (such as sensor networks), there may be missing entries in the data (due to sensor failure) or the data may be contaminated with noise (due to measurement errors). This issue has been addressed for static graph learning (e.g., \cite{dong_learning_2016, javaheri_joint_2024}) by adopting a joint signal and graph learning approach in which an additional signal denoising/imputation step is performed. However, this has not been well addressed for learning time-varying graphs. 

In the following section, we address these issues by proposing a robust predictive approach to learning time-varying graphs with specific properties that apply to heavy-tailed data.

\begin{table*}[!t]
 \scriptsize
\centering
 \caption{{Summary of various benchmark methods for time-varying graph learning in comparison to the proposed approach. The objective function comprises a temporal consistency term and a graph regularization term. In this context, \( t \) represents samples at each time instant, \( n \) denotes the index of the frame (mini-batch), and \( F_n \) refers to the time indices within the \( n \)-th frame.}
}
 \label{table:methods}
 \begin{tabular}{c|cccc} 
     & Temporal  Consistency  
     &  Graph Regularization + Data Fidelity 
     & Constraints  
     & Data \\
     \hline
     
    \multirow{2}*{TVGLASSO  \cite{hallac_network_2017} } 
    &  $\sum_{n=2}^{N} \normop{\Lb_{n} - \Lb_{n-1}}_{p,q}^q$ & 
      $ \sum_{n=1}^{N}\left(-T_n \log\det (\Lb_n)+\lambda\normop{\Lb_{n}}_{1,\text{off}}\right)$ &  
      \multirow{2}*{$\Lb_n \succ \zerov$}
      & $\Xb = [\Xb_n\vert_{n=1}^N]$\\
    &  $ p,q \in \{1,2,\infty\}$ & 
      $ +\sum_{n=1}^{N}T_n \tr(\Sigmab_{n}\Lb_{n})$ 
      &  
      & (Full-batch)\\[3pt]
     \hline   
    \multirow{2}*{Kalofolias et al.  \cite{kalofolias_learning_2017}} 
    & \multirow{2}*{ $\sum_{n=2}^{N}  \normop{\Wb_{n} - \Wb_{n-1}}_F^2$} 
    & 
     $\sum_{n=1}^{N} \left(-\alpha \onev^\top \log (\Wb_n \onev) + \beta\normop{\Wb_n}_F^2\right)$  & 
      $\Wb_n\geq \zerov, \,\Wb_n = \Wb_n^\top,$
       & $\Xb = [\Xb_n\vert_{n=1}^N]$\\
       &  
       & $+\sum_{n=1}^{N}\tr(\Wb_{n}\Zb_{n}^\top)$
       & 
      $\diag(\Wb_n)=\zerov$
       & (Full-batch)\\[3pt]
     \hline
     \multirow{2}*{Yamada et al. \cite{yamada_time-varying_2019}}   &
    \multirow{2}*{ $\sum_{n=2}^{N}  \normop{\Wb_{n} - \Wb_{n-1}}_{1,1}$} & 
      $\sum_{n=1}^{N} \left(-\alpha \onev^\top \log (\Wb_n \onev) + \beta\normop{\Wb_n}_F^2\right)$ & 
     $\Wb_n\geq \zerov, \,\Wb_n = \Wb_n^\top,$ 
   & $\Xb = [\Xb_n\vert_{n=1}^N]$ \\
      &
      & $+\sum_{n=1}^{N} \tr(\Wb_{n}\Zb_{n}^\top)$
      & $\diag(\Wb_n)=\zerov$
   & (Full-batch) \\[3pt]
     \hline \rule{0pt}{1.05\normalbaselineskip}
\multirow{1}*{TV-GGM \cite{natali_learning_2022}}  
    & $---$
    &  $ -\log \det(\Lb_t) + \tr(\Lb_t\Sigmab_t)$
    & $\Lb_t\succ \zerov$
    & \multirow{1}*{$\xv_t$ (Online)}\\[2pt]
      \hline \rule{0pt}{1.05\normalbaselineskip}
\multirow{2}*{TV-SBM \cite{natali_learning_2022}}   &
    \multirow{2}*{ $---$} & 
      $-\alpha \onev^\top \log (\Wb_t \onev) + \beta\normop{\Wb_t}_F^2$ & 
     $\Wb_t\geq \zerov, \,\Wb_t = \Wb_t^\top,$ 
   & $\xv_t$ \\
      &
      & $+ \tr\left( (\Diag(\Wb_{t}\onev) - \Wb_t)\Sigmab_{t}\right)$
      & $\diag(\Wb_t)=\zerov$
   & (Online) \\[1pt]
     \hline \rule{0pt}{1.05\normalbaselineskip}
\multirow{2}*{\multirow{2}*{Proposed}}   
    & \multirow{2}*{ \multirow{2}*{ $ \normop{\wv_{n}- \av\odot {\wv}_{n-1}}_1$} }  
     & $-  \log \det{\!^\ast} (\Lc\wv_{{n}})+\beta\normop{\wv_{n}}_0$ 
     & $\wv_n\geq \zerov,\, \mathfrak{d}\wv_n = \dv,$ 
      & $\Xb_n$ \\
      &
      & $+\frac{\nu+p}{T_n}  \sum_{t\in F_{n}}\log\left(1+\frac{\xv_{t}^\top\Lc\wv_{n}\xv_t}{\nu} \right)$
      & $\rank(\Lc\wv_n) = p-k$
   & (Mini-batch) \\[1pt]
     \hline
 \end{tabular}
 \end{table*}

\subsection{Contributions}

Our contributions can be summarized as follows:
\begin{itemize}
    \item We propose a probabilistic framework to model the signal and the weights of the graph in time-varying scenarios. Specifically, we utilize a non-negative VAR model to capture the temporal variations in the weights of the graph. Our method is based on MAP estimation of the graph model in a  semi-online approach in which the graph is only updated within frames, where the frame length can be adjusted to achieve the optimal balance between complexity and dynamics.

    \item We consider a heavy-tailed Student-\textit{t} distribution for the signal characterized by the Laplacian matrix of a time-varying graph. This distribution can efficiently model data with outliers (e.g., financial data) and it can also 
    handle Gaussian data by choosing the parameter $\nu$ large enough.
    Our method is also robust to measurement noise and missing data.


    \item We incorporate spectral and structural constraints into the problem of learning time-varying graphs. Our method can be used to learn $k$-component graphs applied to data clustering. We achieve this by imposing constraints on the node degrees and the rank of the Laplacian matrix. 

    \item We propose an iterative method with proof of convergence, to solve the problem using the alternating direction method of multipliers (ADMM), where a majorization of the original function is optimized in each subproblem.  Numerical results demonstrate that our proposed method outperforms some state-of-the-art algorithms for learning time-varying graph models, specifically from heavy-tailed data.  
\end{itemize}

\section{Proposed Approach}
In contrast to the deterministic approaches for time-varying graph learning, we propose a probabilistic framework to address the problem of learning a time-varying graph. To this end, we employ a non-negative vector autoregressive (VAR) equation to characterize the variations in the edge weights of the graph as follows:
\begin{align}
    \wv_{n} = \left( \av \odot \wv_{n-1} + \epv_{n} \right)_+, \quad n=1,\ldots, N,
\end{align}
where the positive part function \( (\cdot)_+ \) is used to ensure the weights remain non-negative. Here, 
\(\av\in \mathbb{R}_+^{p(p-1)/2}\) models the VAR coefficients vector assumed to have an exponential prior and 
\(\epv_{n}\) is a zero-mean temporally and spatially white innovation process with the Laplace distribution. The choice of the Laplace distribution is justified for promoting sparsity in the graph weights. Then, we have:
\begin{align}
\label{eq:laplace}
\begin{split}
    p(\av) &= \lambda^{p(p-1)/2} \exp\left(-\lambda \,\av^\top\onev\right), \quad \lambda> 0, \\
    p(\epv_{n}) &= \frac{1}{(2\sigma_\epsilon)^p} \exp\left(-\frac{\|\epv_{n}\|_1}{\sigma_\epsilon}\right).
\end{split}
\end{align}

We also adopt a stochastic model for the signal, presuming that $\xv_t$ for $t\in F_n$ follows a Laplacian heavy-tailed multivariate Student-\textit{t} distribution, as follows:
\begin{align}
    \label{eq:heavy-tail}
    p(\xv_t\vert\wv_{n})   \propto \mathrm{det^*}(\Lc \wv_{n})^{1/2} \left(1+\frac{\xv_t^\top \Lc \wv_{n} \xv_t}{\nu}\right)^{-(\nu + p)/2},\quad \nonumber \\
    t\in F_{n},\, \quad \nu>2. 
\end{align}

Choosing the Student-\textit{t} distribution for modeling heavy-tailed data, particularly in financial contexts, is extensively supported in the literature \cite{resnick_heavy-tail_2007}. Now, suppose we have corrupted measurements of the signal, i.e., some samples are missing, and there is also some noise. Therefore, the measurements $\yv_t$ are modeled as
\begin{align}
\begin{split} 
    \yv_t &= \mv_t \odot (\xv_t + \nv_t), \quad  t\in\{1, \hdots , T\}, \\
    \Yb &= \Mb\odot (\Xb + \Nb),
    \end{split}
\end{align}
where $\mv_t$ is a given sampling mask vector with binary elements (zeros correspond to missing samples), $\odot$ denotes the point-wise Hadamard product, and $\nv_t$ is a zero-mean i.i.d. Gaussian noise vector with distribution $\nv_t \sim \Nc(\zerov, \sigma_n^2 \Ib)$. We also have $\Yb = [\yv_t\mid_{t=1}^T]$, $\Mb = [\mv_t\mid_{t=1}^T]$, and $\Nb = [\nv_t\mid_{t=1}^T]$.

To estimate the time-varying graph weights $\wv_{n}$, we employ a maximum a posteriori (MAP) estimation through a semi-online (mini-batch) approach, where only a single data frame is utilized for graph learning.

Let \( \Yb_{n} = [\yv_t\vert  \,t\in F_{n}] = \Mb_{n}\odot (\Xb_{n} + \Nb_{n}) \) denote the matrix of the corrupted signal samples (observations) at the time frame \( n \). For inference of \( \wv_{n} \), the VAR model parameters, \( \av \), and the original (clean) signal \( \xv_t \), in a semi-online fashion, data collection is limited to the \( n \)-th time frame, i.e., we only utilize \( \Yb_{n} \). In this scenario, we may need to await the availability of \( T_n \) data samples in time frame \( n \). Nonetheless, this approach can also be adapted for online inference by setting \( T_n=1 \).
The problem can then be expressed as follows:
\begin{align}
 \label{eq:MAP_Online}
\begin{array}{cl}        
     \underset{  
 \wv_{n}\geq \zerov,\, \av\geq \zerov,\, \Xb_{n}}{\mathsf{min}} &
      -\log p(\wv_{n} ,\av, \Xb_{n}\vert {\wv}_{n-1}, \Yb_{n}, \Mb_n) \\
    \mathsf{s.t.} & \wv_{n} \in \Omega_\wv
\end{array} 
\end{align}
where 
\mbox{$
    -\log p(\wv_{n} ,\av, \Xb_{n}\vert {\wv}_{n-1}, \Yb_{n},\Mb_n)   =- \log p(\Xb_{n}\vert  \wv_{n})$ } $-\log p( \Yb_{n}\vert\Xb_{n}, \Mb_n)  
      -\log p(\wv_{n}\vert {\wv}_{n-1},\av) -\log p(\av) + \cte \nonumber
$
and \( \Omega_\wv \) represents the set of equality constraints that define the feasible region of the desired graph weights. In specific, we assume that the underlying graph structure is \( k \)-connected, and the degrees of the vertices are fixed and equal to a constant vector \( \dv \), i.e., 
\(
\Omega_\wv = \{\wv\vert\,\,    
\rank(\Lc \wv) = p-k,  \mathfrak{d}\wv  = \dv\},
\)
where \( \mathfrak{d} \) represents the degree operator, mapping the vector of edge weights to the vertex degrees \cite{de_miranda_cardoso_graphical_2021}.
 
\begin{proposition}
\label{prop:1}
Let $\wv_{n-1}$ in \eqref{eq:MAP_Online} be replaced by an estimate of the graph weights from the previous time frame, denoted as $\hat{\wv}_{n-1}$. By expanding the posterior probability for MAP estimation and simplifying, we obtain the following formulation for learning the time-varying graph:
 \begin{equation}
 \label{eq:problem_semi}
\begin{array}{cl}        
     \underset{  
 \wv_{n}\geq \zerov,\, \av\geq \zerov,\, \Xb_{n}}{\mathsf{min}} &\hspace{-5pt}
       \frac{1}{T_n\sigma_n^2}\normop{\Yb_{n}- \Mb_{n}\odot \Xb_{n}}_F^2 -  \log \det{\!^\ast} (\Lc\wv_{{n}} ) \\
       &\hspace{-5pt} +\frac{\nu+p}{T_n}  \sum_{t\in F_{n}}\log\left(1+\frac{\xv_{t}^\top\Lc\wv_{n}\xv_t}{\nu} \right) \\
       &\hspace{-5pt} +\alpha  \normop{\wv_{n}- \av\odot \hat{\wv}_{n-1}}_1 + \beta \normop{\wv_{n}}_0 + \gamma \av^\top\onev\\
    \mathsf{s.t.} &\hspace{-5pt} 
    \wv_{n} \in \Omega_\wv
\end{array}
\end{equation}
where \( \alpha = \frac{2} {T_n\sigma_\epsilon} \), \( \beta = \frac{2\log\sigma_\epsilon}{T_n} \), and \( \gamma = \frac{2\lambda}{T_n} \). 
\end{proposition}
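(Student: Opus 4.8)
The plan is to expand the negative log-posterior using the Bayesian factorization already written out after \eqref{eq:MAP_Online}, substitute the four model densities \eqref{eq:laplace}--\eqref{eq:heavy-tail} together with the Gaussian measurement model, lump all $\wv_n$-, $\av$- and $\Xb_n$-independent pieces into an additive $\cte$, and finally multiply the objective by the positive scalar $2/T_n$ to read off $\alpha,\beta,\gamma$.

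First I would treat the clean-signal likelihood. Assuming the samples $\{\xv_t : t\in F_n\}$ are conditionally i.i.d.\ given $\wv_n$, \eqref{eq:heavy-tail} gives $-\log p(\Xb_n\mid\wv_n)= -\tfrac{T_n}{2}\log\det{\!^\ast}(\Lc\wv_n)+\tfrac{\nu+p}{2}\sum_{t\in F_n}\log\!\big(1+\xv_t^\top\Lc\wv_n\xv_t/\nu\big)+\cte$, where the constant is the multivariate Student-$t$ normalizer, which does not depend on $\wv_n$ since the rank of $\Lc\wv_n$ is held fixed at $p-k$ by $\Omega_\wv$. Next, the measurement model $\yv_t=\mv_t\odot(\xv_t+\nv_t)$ with $\nv_t\sim\Nc(\zerov,\sigma_n^2\Ib)$ and binary $\mv_t$ is Gaussian on the observed coordinates, so $-\log p(\Yb_n\mid\Xb_n,\Mb_n)=\tfrac{1}{2\sigma_n^2}\sum_{t\in F_n}\|\mv_t\odot(\yv_t-\xv_t)\|^2+\cte=\tfrac{1}{2\sigma_n^2}\|\Yb_n-\Mb_n\odot\Xb_n\|_F^2+\cte$, using $\mv_t\odot\mv_t=\mv_t$ and $\mv_t\odot\yv_t=\yv_t$. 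Finally, $-\log p(\av)=\lambda\,\av^\top\onev+\cte$ from \eqref{eq:laplace}.

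The delicate term is the VAR prior $-\log p(\wv_n\mid\hat\wv_{n-1},\av)$. Writing $\muv=\av\odot\hat\wv_{n-1}\ge\zerov$, the map $\wv_n=(\muv+\epv_n)_+$ pushes the component-wise Laplace law of $\epv_n$ to a \emph{mixed} discrete--continuous law: on a coordinate $i$ with $[\wv_n]_i>0$ the density equals the Laplace value $\tfrac{1}{2\sigma_\epsilon}\exp(-|[\wv_n]_i-\mu_i|/\sigma_\epsilon)$, while a coordinate with $[\wv_n]_i=0$ carries an atom of mass $\Prob([\epv_n]_i\le-\mu_i)=\tfrac12 e^{-\mu_i/\sigma_\epsilon}$ (using $\mu_i\ge0$). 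Taking $-\log$ of the product over coordinates, the normalizers contribute $\|\wv_n\|_0\log(2\sigma_\epsilon)+\big(\tfrac{p(p-1)}{2}-\|\wv_n\|_0\big)\log 2$, and the exponents contribute $\tfrac{1}{\sigma_\epsilon}\sum_{i:[\wv_n]_i>0}|[\wv_n]_i-\mu_i|+\tfrac{1}{\sigma_\epsilon}\sum_{i:[\wv_n]_i=0}\mu_i$. Since $\mu_i=|[\wv_n]_i-\mu_i|$ whenever $[\wv_n]_i=0$, the two exponent sums recombine into $\tfrac{1}{\sigma_\epsilon}\|\wv_n-\av\odot\hat\wv_{n-1}\|_1$, while the normalizer terms collapse to $(\log\sigma_\epsilon)\|\wv_n\|_0$ plus a constant. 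Hence $-\log p(\wv_n\mid\hat\wv_{n-1},\av)=(\log\sigma_\epsilon)\|\wv_n\|_0+\tfrac{1}{\sigma_\epsilon}\|\wv_n-\av\odot\hat\wv_{n-1}\|_1+\cte$, which is precisely the source of the $\ell_0$ penalty. Summing the four pieces, discarding the (now-justified) constant, and scaling by $2/T_n>0$ (which does not change the minimizer) yields exactly \eqref{eq:problem_semi} with $\alpha=2/(T_n\sigma_\epsilon)$, $\beta=2\log\sigma_\epsilon/T_n$, $\gamma=2\lambda/T_n$; the non-negativity of $\wv_n,\av$ and the constraint $\wv_n\in\Omega_\wv$ are inherited verbatim from \eqref{eq:MAP_Online}.

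I expect the main obstacle to be this VAR-prior term: one must recognize that the push-forward of the truncated Laplace is genuinely a mixed density, keep track of both its atoms and its absolutely continuous part, and verify that the leftover linear-in-$\muv$ contributions from the atoms recombine with the continuous-part absolute values into the full $\ell_1$ norm $\|\wv_n-\av\odot\hat\wv_{n-1}\|_1$ — ignoring the atoms is the natural way to get this wrong, and it is also the only place the $\ell_0$ term can come from. A secondary point that deserves an explicit sentence is the claim that the degenerate Student-$t$ normalizing constant is independent of $\wv_n$ for fixed $k$, so that it may legitimately be absorbed into $\cte$; everything else reduces to routine bookkeeping of normalization constants and the positive rescaling.
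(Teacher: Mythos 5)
Your proposal is correct and follows essentially the same route as the paper's own proof: the same Bayesian factorization, the same case split of the truncated-Laplace VAR prior into a continuous part on coordinates with $[\wv_n]_i>0$ and an atom $\tfrac12 e^{-\mu_i/\sigma_\epsilon}$ at zero, the same recombination into the $\ell_1$ term plus $(\log\sigma_\epsilon)\|\wv_n\|_0$, and the same final rescaling by $2/T_n$. Your write-up is in fact slightly more careful than the paper's on two minor points (the correct edge count $p(p-1)/2$ in the normalizer, and the explicit justification that the degenerate Student-$t$ normalizing constant is $\wv_n$-independent because the rank is fixed by $\Omega_\wv$), but these do not change the argument.
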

\begin{proof}
    See appendix  \ref{App:1}.
\end{proof}

The non-convex formulation of problem \eqref{eq:problem_semi}, coupled with the interleaved equality constraints on \( \wv_{n} \), renders it challenging to solve. Nevertheless, taking advantage of splitting techniques in convex optimization, particularly ADMM \cite{boyd_distributed_2010}, a promising approach to address this problem can be devised. We begin by reformulating the problem, introducing the auxiliary variables \( \Lb_{n} = \Lc\wv_{n} \) and \( \uv_{n} = \wv_{n} - \av\odot \hat{\wv}_{n-1} \). We further incorporate an additional penalty function \( \tr\left(\Lc\wv_n\Vb_n\Vb_n^\top\right) \) with \( \Vb_n\in\Real^{p\times k},\, \Vb_n^\top\Vb_n = \Ib \) to more effectively control the rank of \( \Lc\wv_n \). Thus, the problem formulation becomes:
 \begin{equation}
 \label{eq:objective}
\begin{array}{cl}       
\begin{array}[t]{c}
    {\mathsf{min}}\\
    \scriptstyle \wv_{n}\geq \zerov,\, \av\geq \zerov,\\[-2pt]
    \scriptstyle \Xb_{n},\, \uv_{n}, \, \Lb_{n},\, \Vb_n
    \end{array} 
     & \hspace{-8pt}\begin{array}[t]{l}
    f(\wv_{n},\Xb_{n}, \av, \uv_{n}, \Lb_{n}, \Vb_n
    )\triangleq\\
    
     \hspace{10pt} - \log \det{\!^\ast} (\Lb_{n} ) + \alpha \normop{\uv_{n}}_1 + \beta \normop{\wv_{n}}_0  \\
     \hspace{10pt}+\, \frac{\nu+p}{T_n} \sum_{t\in F_{n}}\log\left(1+\frac{\xv_t^\top\Lc\wv_{n}\xv_t}{\nu} \right) \\[3pt]
     \hspace{10pt}+\, \frac{1}{T_n\sigma_n^2} \normop{\Yb_{n} - \Mb_{n}\odot \Xb_{n}}_F^2 + \gamma \av^\top \onev\\
     \hspace{10pt}+\,\eta \tr\left(\Lc\wv_n\Vb_n\Vb_n^\top\right)
      \end{array}
     \vspace{5pt} \\
    \mathsf{s.t.} &  \hspace{-8pt} \begin{array}[t]{l}\Lb_{n} = \Lc \wv_{n}, \,\, \uv_{n} = \wv_{n}-\av\odot\hat{\wv}_{n-1} ,\\
    \mathfrak{d}\wv_{n} =\dv,\,\, \rank(\Lb_{n}) = p-k, \, \Vb_n^\top\Vb_n = \Ib. 
    \end{array}
\end{array}
\end{equation}

Hence, the augmented Lagrangian for this problem yields as  follows:
\begin{align}
    &L_\rho(\wv_{n},\Xb_{n}, \av, \uv_{n}, \Lb_{n}, \Vb_n, \Phib_n,\muv_n, \zv_n)  =\nonumber \\
    & \hspace{5pt}f(\wv_{n},\Xb_{n}, \av, \uv_{n}, \Lb_{n},\Vb_n
    )\nonumber \\
   & \hspace{5pt}+ \frac{\rho}{2}\normop{\Lc\wv_{n}-\Lb_{n}}_F^2 + \langle \Lc\wv_{n}-\Lb_{n}, \Phib_{n}\rangle
    \nonumber \\
    &\hspace{5pt}+\frac{\rho}{2}\normop{\uv_{n}-\wv_{n}+\av\odot \hat{\wv}_{n-1}}^2+ \langle \uv_{n}-\wv_{n}+\av\odot \hat{\wv}_{n-1}, \muv_{n}\rangle \nonumber \\
   &\hspace{5pt} + \frac{\rho}{2}\normop{\mathfrak{d}\wv_{n}-\dv}^2 +\langle \mathfrak{d}\wv_{n}-\dv ,\zv_{n}\rangle.
\end{align}
Now, employing ADMM, we derive an iterative solution consisting of  six update steps for the primal variables $\wv_{n}$, $\Xb_{n}$, $\av$, $\uv_{n}$,  $\Lb_{n}$ and $\Vb_n$, along with three update steps for the dual variables $\Phib_{n}$, $\muv_{n}$ and $\zv_{n}$.

\subsection*{$\Lb_{n}$-update step}
The subproblem associated with the update step of $\Lb$ possesses a closed-form solution given by:
\begin{align}
\label{eq:Theta_update}
\begin{split}
   \Lb_{n}^{l+1} &= \hspace{-19pt}\underset{\Lb_{n} \succeq \zerov,\, \rank(\Lb_{n}) = p-k}{\argminn}
    \hspace{-19pt} -\log \mathrm{det^\ast}(\Lb_{n})+\frac{\rho}{2}\normop{\Lb_{n} - \Lc\wv_{n}^l - \frac{1}{\rho}\Phib_{n}^l }_F^2 \\
    &= \frac{1}{2} \Ub^l \left(\Gamb^l + \left({ {\Gamb^{l}}^ 2 + \frac{ 4}{\rho} \Ib}\right)^{1/2}\right)\Ub^{l \top}.
    \end{split}
\end{align}

Here, $\Gamb^l$ is a diagonal matrix comprising only the largest $p-k$ eigenvalues of $ \Lc \wv_{n}^l + \Phib_{n}^l/\rho $, with their corresponding eigenvectors contained in $ \Ub^l\in \Real^{p\times (p-k)} $.

\subsection*{$\wv_{n}$-update step}
The subproblem related to  the update step  of $\wv$ is expressed as follows: 
\begin{equation}
\begin{split}
    \wv_{n}^{l+1} \!=\argminn_{\wv_{n} \geq \zerov} &\hspace{3pt}\frac{p+\nu}{T_n}\hspace{-3pt}\displaystyle\sum_{t\in F_{n}}\log\left(1 + \dfrac{\xv_t^{l \top}\Lc\wv_{n}\,{\xv_{t}^l}}{\nu} \right)  +\beta \normop{\wv_{n}}_0   \\
    &+ \frac{\rho}{2}\normop{\Lc\wv_{n}-\Lb_{n}^l +\frac{1}{\rho}\Phib_{n}^l}_F^2 
     \\
     &+\frac{\rho}{2}\normop{\uv_{n}^l-\wv_{n}+\av^l\odot \hat{\wv}_{n-1} + \frac{1}{\rho}\muv_{n}^l}^2\\
      &+ \frac{\rho}{2}\normop{\mathfrak{d}\wv_{n}-\dv+\frac{1}{\rho}\zv_{n}^l}^2 \!\!+ \eta \tr\left(\Lc\wv_n\Vb_n^l\Vb_n^{l\top}\right).   
    \end{split}
\end{equation}

To address this challenging problem, we employ the majorization minimization (MM)  technique \cite{sun_majorization-minimization_2017} to minimize a surrogate majorization function of the original cost. First, using the inequality $\log(x)\leq x-1,\,\, \forall x>0$, we obtain:
\begin{align}
\label{eq:surr_1}
\begin{split}   
    \log\left(1 + \dfrac{\xv_t^{l \top}\Lc\wv_{n}\,{\xv_{t}^l}}{\nu} \right) &\leq \log\left(1 + \dfrac{\xv_t^{l \top}\Lc\wv^l_{n}\,{\xv_{t}^l}}{\nu} \right) 
    \\
    &\hspace{10pt}
    + \frac{\xv_t^{l \top}\Lc\wv_{n}\,{\xv_{t}^l}+\nu}{\xv_t^{l \top}\Lc\wv^l_{n}\,{\xv_{t}^l}+\nu} - 1 \\
    &=\left\langle \wv_{n}, \Lc^\ast \left( \frac{{\xv_{t}^l}\xv_t^{l \top }}{\xv_t^{l \top}\Lc\wv^l_{n}\,{\xv_{t}^l}+\nu}\right) \right\rangle 
    +\\
    &\hspace{132pt}\cte.
    \end{split}
\end{align}
Here, $\wv^l_{n}$ represents a fixed point, selected  as  the solution from the previous iteration.
We can also define a majorization function for the term $\normop{\Lc \wv_{n}}_F^2 + \normop{\mathfrak{d}\wv_n}^2 = \wv_{n}^ \top \Hb\wv_{n} $ as 
\begin{align}
\label{eq:surr_2}
\begin{split}
    \wv_{n}^ \top \Hb\wv_{n} 
    &\leq \wv_{n}^ \top \Hb\wv_{n} +
    (\wv_{n}-\wv_{n}^l)^\top (\zeta\Ib- \Hb ) (\wv_{n}-\wv_{n}^l) \\
    & = \zeta \normop{\wv_{n}-\wv_{n}^l}^2 + 2 \langle\wv_{n}, \Hb \wv_{n}^l \rangle + \cte,
    \end{split}
\end{align}
where $\Hb = \Lc^\ast\Lc +\mathfrak{d}^\ast \mathfrak{d}$ and $\zeta \geq \lambda_{\max}(\Hb) =  4p-2$ \cite{kumar_unified_2020}.
Using the proposed majorization  functions (upper-bounds) in \eqref{eq:surr_1} and \eqref{eq:surr_2} with $\zeta=4p-2$, the update step for $\wv_{n}$ would be simplified as follows:
\begin{equation}
\begin{split}  
    \wv_{n}^{l+1} &= \argminn_{\wv_{n}\geq \zerov} \,\,\,\frac{\rho(4p  - 1)}{2}\normop{\wv_{n} - \cv^l}^2 + \beta \normop{\wv_{n}}_0  
    \\
    &=\Id(\cv^l>\cv_{th})\odot \cv^l,
    \label{eq:w_update}
    \end{split}
\end{equation}
where $\cv_{th} = \sqrt{\frac{2\beta}{\rho(4 p -1)}}\onev$ and
\begin{equation}
\begin{array}{rl}
 \cv^l &\hspace{-8pt}=  \left(1-\frac{\rho}{\rho(4p-1)}\right) \wv_{n}^l -  \dfrac{ 1}{\rho(4p -1)} \left(\av^l + \bv^l \right), \\
 \av^l &\hspace{-8pt}= \Lc^*\left(\tilde{\Sb}^l 
  + \Phib_{n}^l + \rho \left(  \Lc\wv_n^l-\Lb_{n}^{l+1}\right) 
  +\eta \Vb_n^l\Vb_n^{l\top}
  \right), \\
  \bv^l &\hspace{-8pt}= -\muv_{n}^l - \rho\left(\uv_n^l  + \av^{l}\odot\hat{\wv}_{n-1}\right) +\mathfrak{d}^*\left(\zv_{n}^l - \rho\left(\dv - \mathfrak{d}\wv_{n}^l\right)\right),\\
  \tilde{\Sb}^l &\hspace{-8pt}= \frac{p+\nu}{T_n} \sum_{t\in F_{{n}}} \dfrac{ \xv_t\xv_t^\top}{\xv_t^\top \Lc \wv_{n}^l \xv_t + \nu}.
  \label{eq:w-update-a-b-c}
  \end{array}
\end{equation}
In the equation above,  $\Lc^\ast$ and $\mathfrak{d}^\ast$ represent  the
adjoints of the Laplacian and the degree operators, as defined in \cite{kumar_unified_2020}.

\subsection*{$\uv_{n}$-update step}
The subproblem associated with to the update step of $\uv_{n}$ admits a closed-form solution given by:
\begin{align}
\label{eq:u_update}
\begin{split} 
\uv_{n}^{l+1} &=\argminn_{\uv } 
    \frac{\rho}{2}\normop{\uv_{n}-\wv^{l+1}_{n}+\av^l\odot\hat{\wv}_{n-1}+\frac{1}{\rho}  \muv_{n}^l}^2 
    \\
    &\hspace{155pt}+ \,\alpha\normop{\uv_{n}}_1 \\
    &= \Sc_{\frac{\alpha}{\rho}}\left( \wv^{l+1}_{n} - \av^l\odot\hat{\wv}_{n-1} - \frac{1}{\rho} \muv_{n}^l\right),
\end{split}
\end{align}

where $\Sc$ denotes the soft-thresholding operator \cite{tibshirani_regression_1996}.

\subsection*{$\Xb_{n}$-update step}
The update step for \(\Xb_{n}\) is obtained by solving the following problem:
\begin{align}
\begin{split}  
\Xb_{n}^{l+1} &= \{\xv_t^{l+1}\vert_{t\in F_{n}}\} =\argminn_{\{\xv_t\vert_{t\in F_{n}}\}} 
    \, \displaystyle\sum_{t\in F_{n}} f_{\xv_t}(\xv_t) \\
    f_{\xv_t}(\xv_t) &=  \frac{1}{T_n\sigma_n^2} \normop{\yv_{t}-\mv_{t}\odot\xv_{t}}^2 
    \\
    &\hspace{50pt}
    +\frac{p+\nu}{T_n}\log\left(1 + \dfrac{\xv^\top_{t}\Lc\wv^{l+1}_{n}\,{\xv_{t}}}{\nu} \right). 
    \end{split}
\end{align}
This can be decomposed into smaller problems for each \(\xv_t\) as follows:
\begin{align}
\begin{split}  
\xv_t^{l+1} &=\argminn_{\xv_t } 
    f_{\xv_t}(\xv_t),\quad t\in F_n.
\end{split}
\end{align}
To find a closed-form solution to this problem, we   replace $f_{\xv_t}(\xv_t)$ with a majorization function as proposed by the following proposition.  
\begin{proposition}
    \label{prop:2}
    Let \(\tau \geq \frac{1}{\sigma_n^2} + \frac{p+\nu}{\xv^{l\top}_{t} \Lc \wv^{l+1}_{n} \xv_{t}^{l} + \nu} \lambda_{\max}(\Lc \wv^{l+1}_{n})\). Define the function 
    \begin{align}
        f_{\xv_t}^S(\xv_t, \xv_0) = \frac{\tau}{T_n} \normop{\xv_t - \xv_0 + \frac{\Qb_t \xv_0 - \cv_t}{\tau}}^2 + C(\xv_0),
    \end{align}
    where
    \begin{align}
    \label{eq:Q_c}
    \begin{split} 
        \Qb_t &= \frac{1}{\sigma_n^2} \Diag(\mv_t) + \frac{p+\nu}{\xv^{l\top}_{t} \Lc \wv^{l+1}_{n} \xv_{t}^{l} + \nu} \Lc \wv^{l+1}_{n}, \\
        \cv_t &= \frac{1}{\sigma_n^2} \yv_t.
            \end{split}
    \end{align}
    Here, \(\xv_0\) and \(C(\xv_0)\) are constants. Then, \(f^S_{\xv_t}(\xv_t, \xv_0)\) serves as a majorization function for \(f_{\xv_t}(\xv_t)\), satisfying the inequality 
    \(
        f_{\xv_t}(\xv_t) \leq f^S_{\xv_t}(\xv_t, \xv_0), \, \forall \xv_0.
    \)
\end{proposition}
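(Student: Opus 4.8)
The plan is to realize $f^S_{\xv_t}$ as the composition of two successive majorizations of $f_{\xv_t}$: first a linearization of the logarithmic term, which turns it into a convex quadratic, and then a quadratic (Lipschitz-type) upper bound on that quadratic.

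\textbf{Step 1 (linearizing the log).} First I would split $f_{\xv_t}$ into the quadratic data-fidelity term $\frac{1}{T_n\sigma_n^2}\normop{\yv_t-\mv_t\odot\xv_t}^2$ and the term $\frac{p+\nu}{T_n}\log\!\big(1+\xv_t^\top\Lc\wv^{l+1}_n\xv_t/\nu\big)$. Since $\Lc\wv^{l+1}_n$ is a graph Laplacian it is positive semidefinite, so the quadratic form is nonnegative; and since $u\mapsto\log(1+u)$ is concave, its graph lies below the tangent at $u^l=\xv_t^{l\top}\Lc\wv^{l+1}_n\xv_t^l/\nu$, giving $\log(1+u)\le\log(1+u^l)+(u-u^l)/(1+u^l)$. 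Plugging in $u=\xv_t^\top\Lc\wv^{l+1}_n\xv_t/\nu$ and using $\tfrac{1}{\nu(1+u^l)}=\tfrac{1}{\nu+\xv_t^{l\top}\Lc\wv^{l+1}_n\xv_t^l}$, then adding back the data term and using that $\mv_t$ is binary (so $\Diag(\mv_t)^2=\Diag(\mv_t)$ and $\Diag(\mv_t)\yv_t=\yv_t$), the $\xv_t$-dependent part collapses to $\hat f_{\xv_t}(\xv_t)=\tfrac{1}{T_n}\big(\xv_t^\top\Qb_t\xv_t-2\cv_t^\top\xv_t\big)+\cte$ with $\Qb_t$ and $\cv_t$ exactly as in \eqref{eq:Q_c}. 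By construction $\hat f_{\xv_t}\ge f_{\xv_t}$ everywhere, with equality at $\xv_t=\xv_t^l$.

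\textbf{Step 2 (quadratic upper bound).} Because $\Qb_t$ is a sum of two positive semidefinite matrices, subadditivity of the top eigenvalue together with $\lambda_{\max}(\Diag(\mv_t))\le 1$ yields
\[
\lambda_{\max}(\Qb_t)\ \le\ \frac{1}{\sigma_n^2}+\frac{p+\nu}{\xv_t^{l\top}\Lc\wv^{l+1}_n\xv_t^l+\nu}\,\lambda_{\max}\big(\Lc\wv^{l+1}_n\big)\ \le\ \tau ,
\]
so $\Qb_t\preceq\tau\Ib$. For the convex quadratic $\hat f_{\xv_t}$, the exact second-order expansion about an arbitrary $\xv_0$ then gives
\[
\hat f_{\xv_t}(\xv_t)\ \le\ \hat f_{\xv_t}(\xv_0)+\frac{2}{T_n}\big\langle\Qb_t\xv_0-\cv_t,\ \xv_t-\xv_0\big\rangle+\frac{\tau}{T_n}\normop{\xv_t-\xv_0}^2 ,
\]
since $(\xv_t-\xv_0)^\top\Qb_t(\xv_t-\xv_0)\le\tau\normop{\xv_t-\xv_0}^2$. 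Completing the square on the right-hand side rewrites it as $\tfrac{\tau}{T_n}\normop{\xv_t-\xv_0+(\Qb_t\xv_0-\cv_t)/\tau}^2+C(\xv_0)$, where $C(\xv_0)$ absorbs $\hat f_{\xv_t}(\xv_0)-\tfrac{1}{\tau T_n}\normop{\Qb_t\xv_0-\cv_t}^2$ together with the constant produced in Step~1; this is precisely $f^S_{\xv_t}(\xv_t,\xv_0)$. Chaining the two inequalities gives $f_{\xv_t}(\xv_t)\le\hat f_{\xv_t}(\xv_t)\le f^S_{\xv_t}(\xv_t,\xv_0)$ for every $\xv_0$, which is the claim.

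The only real subtlety is keeping the two anchor points distinct: the logarithm must be linearized at the \emph{fixed} previous iterate $\xv_t^l$ — this is exactly what makes $\Qb_t$ and $\cv_t$ constants that do not depend on $\xv_0$ — whereas the quadratic bound of Step~2 is taken at the generic point $\xv_0$. The remaining work is purely bookkeeping: collecting all constants into $C(\xv_0)$, and invoking the binary structure of $\mv_t$ and positive semidefiniteness of $\Lc\wv^{l+1}_n$ to justify the spectral bound on $\Qb_t$.
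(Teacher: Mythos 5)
Your proposal is correct and follows essentially the same route as the paper's proof: linearize the concave logarithm at the fixed iterate $\xv_t^l$ (the paper does this via $\log(x)\leq x-1$, which is the same tangent-line bound), obtain the quadratic $\frac{1}{T_n}(\xv_t^\top\Qb_t\xv_t - 2\cv_t^\top\xv_t)$, and then upper-bound it by adding $(\xv_t-\xv_0)^\top(\tau\Ib-\Qb_t)(\xv_t-\xv_0)\geq 0$ with $\tau\geq\lambda_{\max}(\Qb_t)$ controlled by Weyl's inequality (your ``subadditivity of the top eigenvalue''). Your explicit care in distinguishing the fixed anchor $\xv_t^l$ of the log-linearization from the generic anchor $\xv_0$ of the quadratic bound is a small but welcome refinement over the paper, which carries out the second step only at $\xv_0=\xv_t^l$.
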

\begin{proof}
    See appendix \ref{App:2}.
\end{proof}

By applying this proposition with \( \xv_0 = \xv_t^l \) in the context of a majorization-minimization framework, we formulate and solve the following problem for the update step of \( \xv_t \):
\begin{align}
\label{eq:x_update}
\begin{split}  
\xv_t^{l+1} &=\argminn_{\xv_t } 
    f_{\xv_t}^S(\xv_t, \xv_t^l)\\
    &= \xv^l_{t} - \frac{1}{\tau} \left( \Qb_t\xv^l_{t} - \frac{1}{\sigma_n^2}\yv_t\right).
    \end{split}
\end{align}
\subsection*{$\av$-update step}
The VAR parameters vector $\av$ can also be updated using the following closed-form solution:
\begin{align}
\label{eq:a_update}
\begin{split}  
\av^{l+1} &=\argminn_{\av\geq \zerov } 
    \,\frac{\rho}{2}\normop{\av\odot\hat{\wv}_{n-1}-\fv^l}^2 + \gamma \av^\top \onev 
    \\
    &= \Sc_{\frac{\gamma}{\rho\hat{\wv}_{n-1}^{\circ 2}} } \left((\fv^l)_+ \oslash \hat{\wv}_{n-1} \right) \odot \Id(\hat{\wv}_{n-1} >0),
    \end{split}
\end{align}
where $\fv^l = \wv^{l+1}_{n} -\uv^{l+1}_{n} -\frac{1}{\rho}  \muv_{n}^l $.

\subsection*{$\Vb_n$-update step}
Next, we have the update formula for $\Vb_n$ as follows:
\begin{align}
\label{eq:V_update}
    \Vb_n^{l+1}  &= \hspace{-5pt}\underset{ \Vb_n\in \Real^{p\times k},\Vb_n^\top\Vb_n =\Ib  }{\mathsf{argmin}} ~ \hspace{-5pt}\tr\left(\Lc\wv_n^{l+1}\Vb_n\Vb_n^\top\right) = \Qb_n^{l+1}[:,1:k].
\end{align}
Here, 
 $\Qb_n^{l+1}[:,1:k]$ denotes the set of eigenvectors of $\Lc\wv_n^{l+1}$ corresponding  to the first $k$ eigenvalues, sorted in ascending order. 

\subsection*{Update step for the dual variables}
Finally, we have the update step for the dual variables  as follows:
\begin{align}
\label{eq:dual_update}
\begin{split}
\Phib_{n}^{l+1} &= \Phib_{n}^{l} + \rho \left(\Lc\wv_{n}^{l+1} - \Lb_{n}^{l+1}\right),  \\
 \muv_{n}^{l+1} &=  \muv_{n}^{l} + \rho \left( \uv_{n}^{l+1} - \wv_{n}^{l+1} + \av^{l+1}\odot\hat{\wv}_{n-1} \right),  \\
\zv_{n}^{l+1} &= \zv_{n}^{l} + \rho \left(\mathfrak{d}\wv_{n}^{l+1} - \dv \right). 
\end{split}
\end{align}
The proposed method is  summarized in Algorithm \ref{algorithm:alg1}, with Theorem~\ref{thm:convergence} establishing the convergence.

\begin{theo}\label{thm:convergence}
The sequence of the augmented Lagrangian $\big\{ L_\rho \big(\Lb_{n}^l, \wv_{n}^l, \uv_{n}^l, \Xb_{n}^l, \av^l, \Vb_n^l, \Phib_{n}^l, \muv_{n}^l, \zv_{n}^l \big) \big\}$ generated by Algorithm~\ref{algorithm:alg1} converges for any sufficiently large \(\rho\). At the limit point, the equality constraints \(\Lb_{n} = \Lc \wv_{n}\), \(\uv_{n} = \wv_{n} - \av \odot \hat{\wv}_{n-1}\), and \(\mathfrak{d} \wv_{n} = \dv\) are also satisfied, i.e., 
\begin{align}
\begin{split}
&\lim_{l \to + \infty} \big\| \Lc\wv_{n}^{l} - \Lb_{n}^{l} \big\|_F = 0\\
&\lim_{l \to + \infty} \big\| \mathfrak{d}\wv_{n}^{l} - \dv \big\| = 0\\ 
&\lim_{l \to + \infty} \big\| \uv_{n}^{l} - \wv_{n}^{l} + \av^{l} \odot \hat{\wv}_{n-1} \big\| = 0.
\end{split}
\end{align}
\end{theo}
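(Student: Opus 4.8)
The plan is to follow the standard template for convergence of nonconvex multi-block ADMM, adapted to the fact that the $\wv_n$- and $\Xb_n$-blocks are updated by minimizing the majorizing surrogates rather than the exact subproblems. The proof rests on three pillars: (i) a sufficient-decrease inequality for $L_\rho$ along the six-block primal sweep; (ii) an upper bound on the ascent of $L_\rho$ caused by the three dual updates in terms of the primal block increments; and (iii) a uniform lower bound on $L_\rho$ together with boundedness of the generated iterates. Combining (i)--(iii) shows that $\{L_\rho^l\}$ is eventually monotone and bounded below, hence convergent, and that all block increments --- and therefore all constraint residuals --- vanish.

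\textbf{Sufficient decrease.} For the blocks solved exactly ($\Lb_n$, $\uv_n$, $\av$, $\Vb_n$) the subproblem objective is the sum of the $\tfrac{\rho}{2}\|\cdot\|^2$ penalties --- strongly convex in the active block, the $\Vb_n$-manifold being handled through its closed-form eigenvector solution \eqref{eq:V_update} --- and a closed proper term, so each update decreases $L_\rho$ by at least a positive multiple of the squared block increment. For $\wv_n$ and $\Xb_n$ I would invoke the defining properties of the surrogates in \eqref{eq:surr_1}, \eqref{eq:surr_2} and Proposition~\ref{prop:2} (tightness at the previous iterate, global upper bound elsewhere), so that minimizing the surrogate --- a strongly convex quadratic with modulus $\rho(4p-1)$ for $\wv_n$ (regularized by $\beta\normop{\cdot}_0$) and $\propto\tau/T_n$ per column for $\Xb_n$ --- drives the true $L_\rho$ down by a multiple of the corresponding squared increment. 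Summing over the sweep gives $L_\rho^{l+1}\le L_\rho^l - c_1\delta_l + (\text{dual ascent})$, where $\delta_l$ collects the squared primal increments.

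\textbf{Dual ascent, lower bound, and conclusion.} Each dual step raises $L_\rho$ by exactly $\rho^{-1}$ times $\normop{\Phib_n^{l+1}-\Phib_n^l}_F^2$, $\normop{\muv_n^{l+1}-\muv_n^l}^2$, $\normop{\zv_n^{l+1}-\zv_n^l}^2$. The key is to dominate these by the $\delta$'s: using the optimality conditions of the subproblems that ``carry'' each multiplier --- the $\Lb_n$-update for $\Phib_n$ (through the closed form \eqref{eq:Theta_update}, which expresses $\Phib_n^{l+1}$ as $-(\Lb_n^{l+1})^\dagger$ plus terms linear in the $\wv_n$-increments), the $\uv_n$/$\av$-updates for $\muv_n$, and the $\wv_n$-update for $\zv_n$, with the bounded subdifferentials of $\normop{\cdot}_1$ and $\normop{\cdot}_0$ keeping $\{\muv_n^l\}$, $\{\zv_n^l\}$ bounded. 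Crucially, \eqref{eq:Theta_update} also forces every nonzero eigenvalue of $\Lb_n^{l+1}$ to exceed $1/\sqrt{\rho}$, so $\Lb\mapsto\Lb^\dagger$ is Lipschitz along the sequence; together with boundedness of the iterates this bounds the dual increments by $\delta_l+\delta_{l-1}$, and for $\rho$ large the net one-iteration change is $\le -c_2(\delta_l+\delta_{l-1})<0$. For the lower bound, I would rewrite $L_\rho^l$ via the dual identities as $f$ plus completed squares and argue coercivity: the barrier $-\log\det{\!^\ast}(\Lb_n)$ is bounded below on the relevant spectra, the data term $\tfrac{1}{T_n\sigma_n^2}\normop{\Yb_n-\Mb_n\odot\Xb_n}_F^2$ and the quadratic penalties dominate at infinity, and the remaining terms are nonnegative, so $\{L_\rho^l\}$ is bounded below and the iterates are bounded. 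Monotonicity plus boundedness below then gives convergence of $\{L_\rho^l\}$; telescoping the decrease gives $\sum_l\delta_l<\infty$, so every primal increment tends to $0$, hence every dual increment tends to $0$; and since the residuals $\Lc\wv_n^l-\Lb_n^l$, $\mathfrak{d}\wv_n^l-\dv$, $\uv_n^l-\wv_n^l+\av^l\odot\hat{\wv}_{n-1}$ equal $\rho^{-1}$ times the respective dual increments, they all vanish in the limit.

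\textbf{Main obstacle.} I expect the delicate point to be the second pillar: making the dual increments genuinely dominated by the primal increments when the ``carrier'' subproblems contain the nonsmooth, nonconvex term $\beta\normop{\wv_n}_0$ and the $\ell_1$ terms, whose (limiting) subdifferentials appear in the optimality conditions. This needs the uniform spectral bounds on $\Lb_n^l$ (the lower bound $1/\sqrt{\rho}$ from \eqref{eq:Theta_update} for Lipschitzness of the pseudoinverse, and an upper bound resting on the a priori boundedness of the iterates), careful bookkeeping of the block update order so each multiplier is actually pinned by the preceding subproblem, and control of how the hard-thresholding in the $\wv_n$-step interacts with the decrease estimate --- in particular one may have to track that the support of $\wv_n^l$ stabilizes after finitely many iterations before the clean quadratic descent for that block takes over.
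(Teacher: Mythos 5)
Your overall architecture (per-block sufficient decrease, control of the dual ascent, lower bound, telescoping to kill the residuals) is the same skeleton the paper uses, but you diverge at the decisive step, and that divergence is where the gaps lie. The paper does \emph{not} prove that the dual increments are dominated by the primal increments via optimality conditions; it only quantifies the decrease contributed by the $\av$-block (through the variational inequality of the $\av$-subproblem, yielding a drop of $\tfrac{\rho}{2}\normop{\av^{l+1}\odot\hat{\wv}_{n-1}-\av^{l}\odot\hat{\wv}_{n-1}}^2$, all other blocks giving only ``$\leq$''), and then simply \emph{assumes}, as part of the ``sufficiently large $\rho$'' hypothesis, the condition $\rho \geq \max_l c\,(\normop{\Phib_n^{l+1}-\Phib_n^l}_F^2+\normop{\muv_n^{l+1}-\muv_n^l}^2+\normop{\zv_n^{l+1}-\zv_n^l}^2)^{1/2}/\normop{\av^{l+1}\odot\hat{\wv}_{n-1}-\av^l\odot\hat{\wv}_{n-1}}$ with $c>\sqrt{2}$, which makes the $\av$-decrease swallow the $\tfrac{1}{\rho}$-scaled dual ascent. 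Boundedness is handled by an induction over $l$ (coercivity of each subproblem in its own block given bounded inputs), and the lower bound on $L_\rho$ follows from iterate boundedness rather than from a global coercivity argument. Your route is the more principled Wang--Yin--Zeng-style analysis; the paper's buys a short proof at the price of an assumption on the iterates disguised as a condition on $\rho$.

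The concrete gaps in your plan are the following. First, your pillar (ii) relies on each multiplier being ``pinned'' by the optimality condition of a carrier block, but here $\Lb_n$ is updated \emph{first} in the sweep, so the $\Lb$-stationarity gives $\Phib_n^{l+1}=-(\Lb_n^{l+1})^{\dagger}+\rho\,\Lc(\wv_n^{l+1}-\wv_n^{l})$ (restricted to the relevant subspace) rather than $\Phib_n^{l+1}=-(\Lb_n^{l+1})^{\dagger}$; the residual term is $O(\rho\normop{\wv_n^{l+1}-\wv_n^l})$, so $\tfrac{1}{\rho}\normop{\Phib_n^{l+1}-\Phib_n^l}_F^2$ is of the \emph{same order} $O(\rho\normop{\Delta\wv}^2)$ as the available decrease, and the domination you need does not follow without a constant-level comparison you have not supplied. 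Second, $\zv_n$ is carried by the $\wv_n$-step, which is a hard-thresholding of a surrogate containing $\beta\normop{\wv_n}_0$ over the nonconvex rank/degree-coupled geometry; its limiting stationarity conditions do not bound $\normop{\zv_n^{l+1}-\zv_n^l}$ by primal increments unless, as you note, the support stabilizes --- but you give no argument that it does. Third, your claim that every exactly-solved block decreases $L_\rho$ by a positive multiple of its squared increment is unjustified for $\Lb_n$ (the feasible set $\{\Lb\succeq\zerov,\ \rank(\Lb)=p-k\}$ is nonconvex, so strong convexity of the penalty does not transfer) and for $\Vb_n$ (Stiefel manifold: exact minimization gives only ``$\leq$''); the paper avoids this by extracting quantified decrease solely from the $\av$-block. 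If you want a complete proof along the paper's lines, it suffices to prove boundedness by induction, take the quantified decrease from the $\av$-step only, and either adopt or justify the condition relating the dual increments to $\normop{\av^{l+1}\odot\hat{\wv}_{n-1}-\av^l\odot\hat{\wv}_{n-1}}$.
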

\begin{proof}
    See Appendix \ref{App:3}.
\end{proof}

{\centering
\begin{minipage}{\linewidth}
\begin{algorithm}[H]
\caption{Proposed algorithm 
for learning time-varying $k$-component graph from heavy-tailed data}
\vspace*{1ex}
\begin{algorithmic}[1]
\State \textbf{Input:}   The observation matrix $\Yb_{n}=[\yv_t\vert t\in F_{n}]$, the sampling mask $\Mb_{n}$ at  time frame $n$, and $\hat{\wv}_{n-1}$
\\ \textbf{Parameters:}   $k$, $\dv$, $\nu$, $\sigma_\epsilon$, $\gamma$, $\rho$
\State \textbf{Output:} The signal and the estimated graph weights at time frame $n$, i.e., $\Xb^l_n$ and $\wv^l_{n}$
\State \textbf{Initialization:} \textcolor{black}{$\wv_{n}^{0} = \hat{\wv}_{n-1}$,~ $\Phib_{n}^{0}= \zerov$, ~$\muv_{n}^{0}= \zerov$, ~$\zv_{n}^{0}= \zerov$, ~$\Xb^0_{n} = \Yb_{n}$,~ $\av^{0}$}
 \State Set $l=0$
\Repeat
\State   Update $\Lb_{n}^{l+1}$ using  \eqref{eq:Theta_update}.
\State  Update  $\wv_{n}^{l+1}$ via \eqref{eq:w_update}.
\State  Update  $\uv_{n}^{l+1}$ via \eqref{eq:u_update}.
\State  Update $\xv_t^{l+1}$ for $t\in F_{n}$ via \eqref{eq:x_update}.
\State   Update $\av^{l+1}$ using  \eqref{eq:a_update}.
\State   Update $\Vb_n^{l+1}$ using  \eqref{eq:V_update}.
\State Update the dual variables via \eqref{eq:dual_update}.
\State Set $l \leftarrow l+1$.
\Until {a stopping criterion is satisfied}
\State Set $\hat{\wv}_{n} = \wv^{l}_{n}$.
\end{algorithmic}
\label{algorithm:alg1}
\end{algorithm}
\end{minipage}
\par
}

{\color{black}
\subsection*{Computational Complexity}
The update step for \(\wv_n\) involves the computation of \(\tilde{\Sb}\), which has a complexity of \(\Oc(T_n p^2)\). Given this, the complexity of the closed-form solution in \eqref{eq:w_update} would be \(\Oc(p^2)\). This also holds for the update of \(\uv_n\) via \eqref{eq:u_update}. However, the update steps in \eqref{eq:Theta_update} and \eqref{eq:V_update} require eigenvalue decomposition of \(p \times p\) matrices, which is generally \(\Oc(p^3)\) complex. Hence, the overall complexity of the proposed algorithm is \(\Oc(p^3 + T_n p^2)\). While this may indicate that the proposed method may not be scalable to very large graphs, it is intrinsic to every graph-based clustering method that deals with the eigenvectors of the graph Laplacian.
}

\section{Numerical Results}
\label{sec: Simulation}
In this section, we present the numerical results of the proposed algorithm, comparing it to several state-of-the-art methods for time-varying graph learning across different scenarios. First, we evaluate the performance of our algorithm in learning a time-varying graph topology through a simulated experiment using synthetic data. Subsequently, we explore the application of this methodology in financial market analysis, focusing on data clustering and portfolio design. The results are detailed in the following two subsections.

\begin{figure*}[t]
\centering 
\begin{subfigure}[t]{0.2\textwidth}
   \includegraphics[trim=0 0 0 30,clip,width=\textwidth]{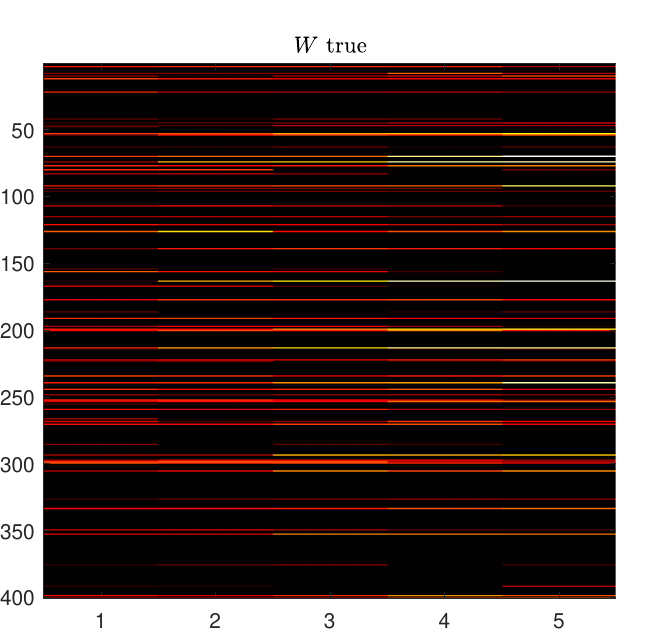}
     \caption{True weights
     }
\end{subfigure} 
\hspace{-10pt}
\begin{subfigure}[t]{0.2\textwidth}
   \includegraphics[trim=0 0 0 30,clip,width=\textwidth]{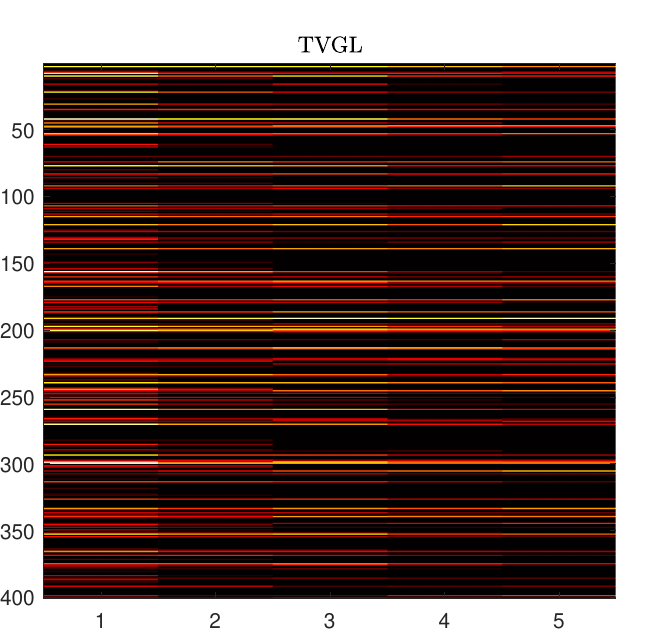}
     \caption{Proposed Method
     }
\end{subfigure} 
\hspace{-10pt}
\begin{subfigure}[t]{0.2\textwidth}
   \includegraphics[trim=0 0 0 30,clip,width=\textwidth]{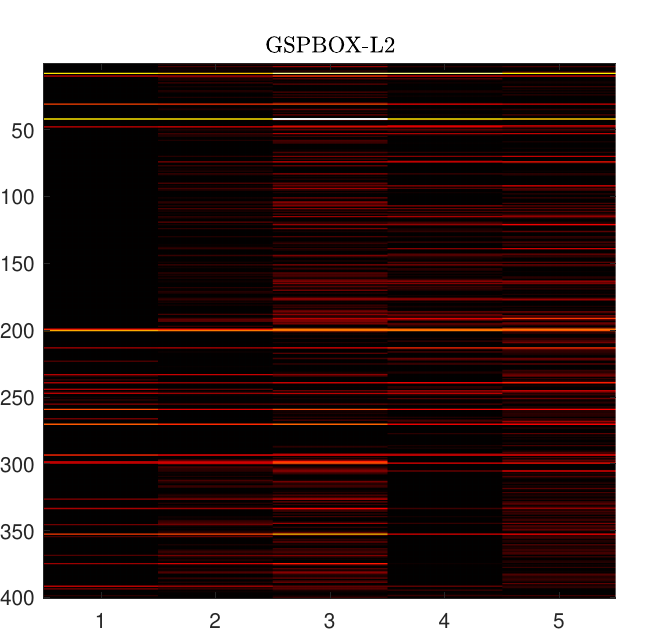}
     \caption{Kalofolias et al. \cite{kalofolias_learning_2017}
     }
\end{subfigure} 
\hspace{-10pt}
\begin{subfigure}[t]{0.2\textwidth}
 \includegraphics[trim=0 0 0 30,clip,width=\textwidth]{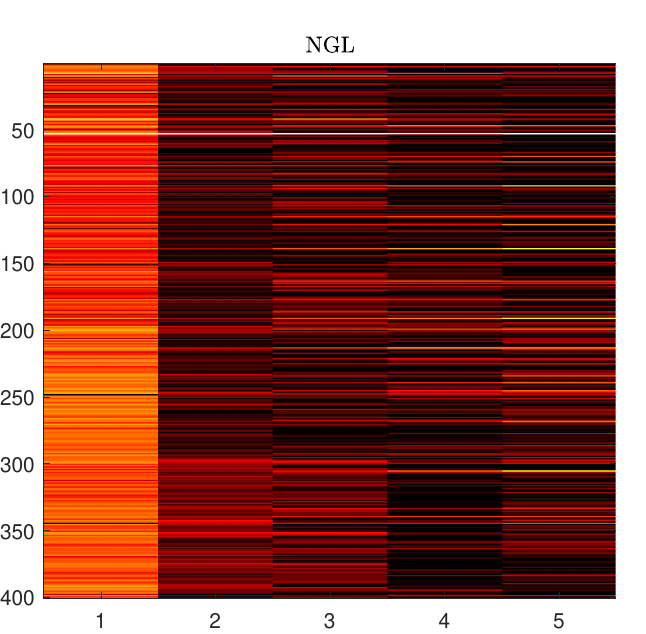}
     \caption{Cardoso et al. \cite{cardoso_learning_2020}
     }
\end{subfigure} 
\hspace{-10pt}
\begin{subfigure}[t]{0.2\textwidth}
   \includegraphics[trim=0 0 0 30,clip,width=\textwidth]{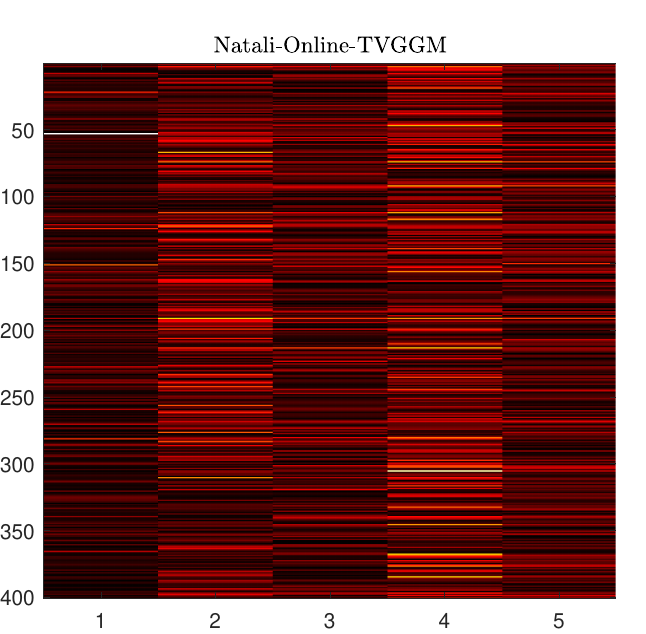}
     \caption{TV-GGM \cite{natali_learning_2022}
     }
\end{subfigure} 
 \\[3ex]
 \begin{subfigure}[t]{0.2\textwidth}
   \includegraphics[trim=0 0 0 30,clip,width=\textwidth]{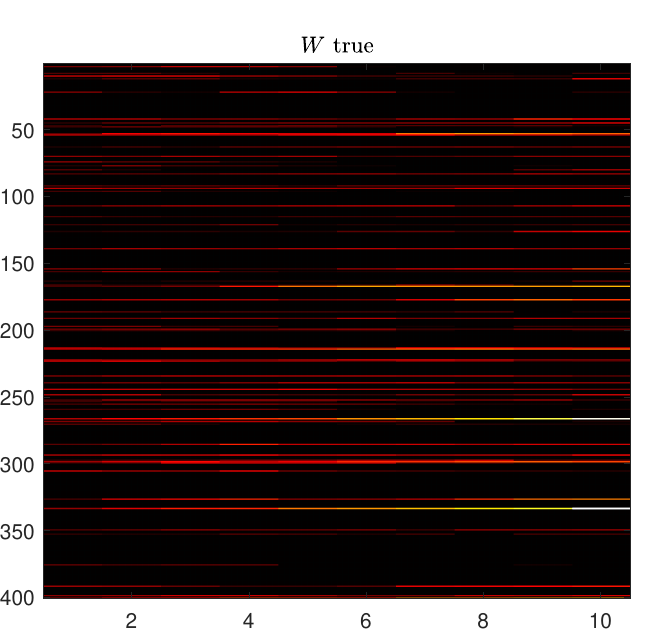}
     \caption{True weights
     }
\end{subfigure} 
\hspace{-10pt}
\begin{subfigure}[t]{0.2\textwidth}
   \includegraphics[trim=0 0 0 30,clip,width=\textwidth]{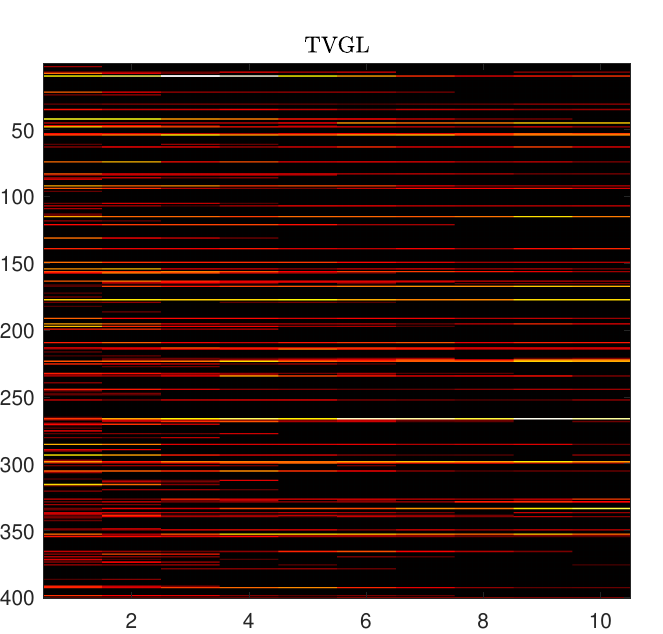}
     \caption{Proposed Method
     }
\end{subfigure}  
\hspace{-10pt}
\begin{subfigure}[t]{0.2\textwidth}
   \includegraphics[trim=0 0 0 30,clip,width=\textwidth]
   {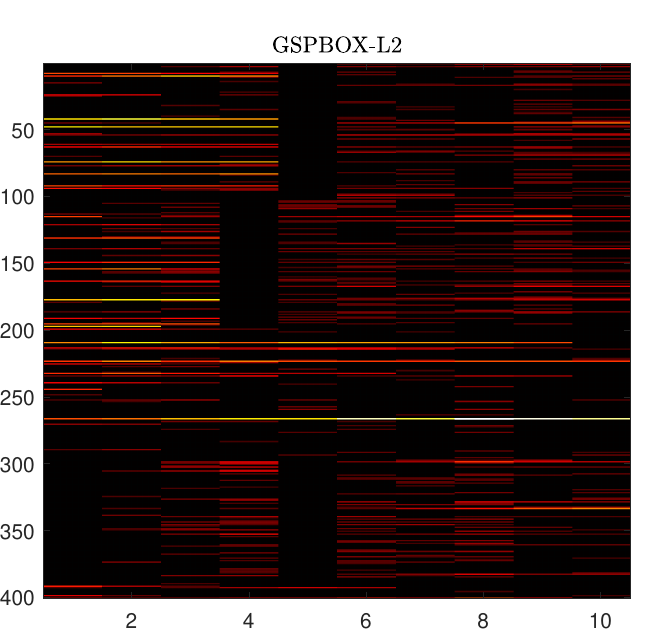}
     \caption{Kalofolias et al. \cite{kalofolias_learning_2017}
     }
\end{subfigure} 
\hspace{-10pt}
\begin{subfigure}[t]{0.2\textwidth}
 \includegraphics[trim=0 0 0 30,clip,width=\textwidth]{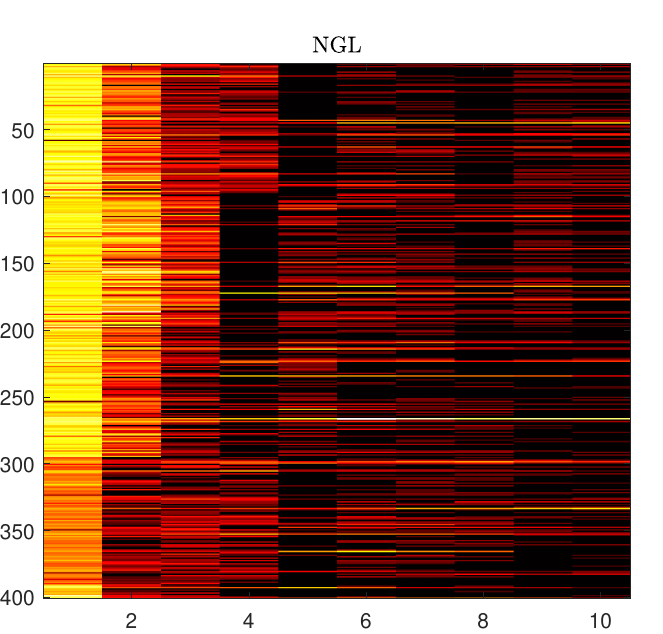}
     \caption{Cardoso et al. \cite{cardoso_learning_2020}
     }
\end{subfigure} 
\hspace{-10pt}
\begin{subfigure}[t]{0.2\textwidth}
   \includegraphics[trim=0 0 0 30,clip,width=\textwidth]{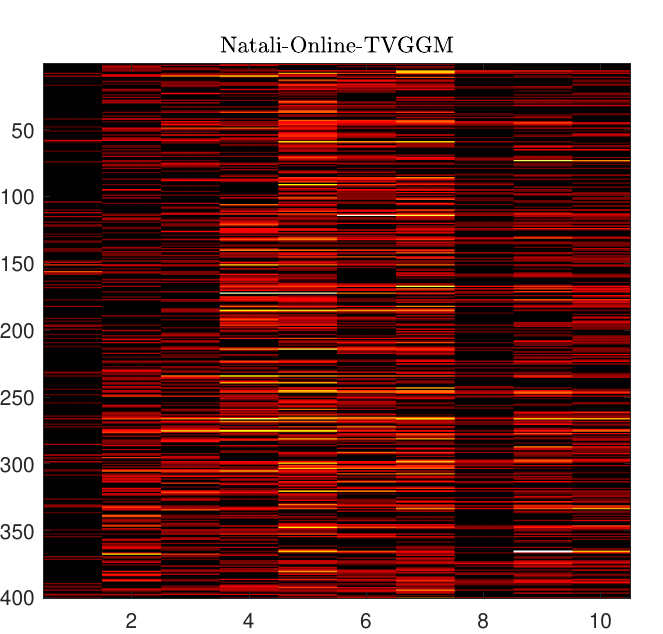}
     \caption{TV-GGM \cite{natali_learning_2022}
     }
\end{subfigure}
\caption{Visualization of the learned (weighted) adjacency matrices at different time intervals, with frames of length $T_n=200$ (top) and $T_n=100$ (bottom).
}
\label{fig_w_visual}
\end{figure*}

\begin{table*}[t]
 \scriptsize
 \centering
 \captionsetup{justification=centering}
 \caption {Performance of the graph learning methods for time-varying topology identification at different sampling rates \( \mathrm{SR} \) and fixed noise level \( \sigma_n = 0 \) (values are reported for the final frame with length \( T_n=200 \)).}
 \label{table:1}
 \begin{tabular}{m{1cm}||c|c||c|c||c|c} 
\multicolumn{1}{c||}{}  & \multicolumn{2}{c||}{$\mathrm{SR}=1$} &\multicolumn{2}{c||}{$\mathrm{SR}=0.8$} & \multicolumn{2}{c}{$\mathrm{SR}=0.6$} 
 \\
 \cline{2-7}
 & F-score  & RelErr 
 & F-score  & RelErr   
 & F-score  & RelErr \\
 \hline
 \hline
 \multicolumn{1}{c||}{Kalofolias et al. \cite{kalofolias_learning_2017}}  
 &  0.40 & 0.42 
 & 0.37 & 0.47
 & 0.33 & 0.52  
  \\
  \hline
 \multicolumn{1}{c||}{Cardoso et al. \cite{cardoso_learning_2020}}  
  &  0.36 & 0.37  
  & 0.31 & 0.40  
  & 0.29 & 0.41  
 \\
 \hline
 \multicolumn{1}{c||}{TV-SBM\cite{natali_learning_2022}}  
 &  0.38 & 0.48 
 & 0.35 & 0.49  
 & 0.32 & 0.50 
  \\
 \hline
 \multicolumn{1}{c||}{TV-GGM\cite{natali_online_2021-1} } 
 & 0.28 & 0.38 
 & 0.27 & 0.38 
 & 0.27 & 0.39 
 \\
 \hline
 \multicolumn{1}{c||}{Saboksayr et al. \cite{saboksayr_online_2021}}  
  &  0.28 & 0.49  
  & 0.28 & 0.49 
  & 0.28 & 0.51 
 \\
  \hline
 \multicolumn{1}{c||}{Proposed (Alg. \ref{algorithm:alg1})}   
  & \bf 0.60 & \bf 0.36 
  & \bf 0.56 & \bf 0.36 
  & \bf 0.51 & \bf 0.37
 \\
  \hline
 \end{tabular}
 \end{table*} 

\begin{table*}[t]
 \scriptsize
 \centering
 \captionsetup{justification=centering}
 \caption {Performance of the graph learning methods for time-varying topology identification at different noise levels  $\sigma_n$ and fixed sampling rate  $\mathrm{SR} = 1$ (values are reported for final frame with length $T_n=200$).}
 \label{table:1_noise}
 \begin{tabular}{m{1cm}||c|c||c|c||c|c} 
\multicolumn{1}{c||}{}  & \multicolumn{2}{c||}{$\sigma_n=0.1$} &\multicolumn{2}{c||}{$\sigma_n=0.3$} & \multicolumn{2}{c}{$\sigma_n=0.5$} 
 \\
 \cline{2-7}
 & F-score  & RelErr 
 & F-score  & RelErr   
 & F-score  & RelErr  \\
 \hline
 \hline
 \multicolumn{1}{c||}{Kalofolias et al. \cite{kalofolias_learning_2017}}  
 &  0.39 & 0.43 
 & 0.37 & 0.43 
 & 0.35 & 0.44  
  \\
  \hline
 \multicolumn{1}{c||}{Cardoso et al. \cite{cardoso_learning_2020}}  
  &  0.36 & 0.37  
  & 0.35 & 0.37  
  & 0.32 & 0.38 
 \\
 \hline
 \multicolumn{1}{c||}{TV-SBM\cite{natali_learning_2022}}  
 &  0.38 & 0.46 
 & 0.38 & 0.47  
 & 0.36 & 0.48  
  \\
 \hline
 \multicolumn{1}{c||}{TV-GGM\cite{natali_online_2021-1} } 
 &  0.28 & 0.38 
 & 0.28 & 0.38 
 & 0.28 & 0.38
 \\
 \hline
 \multicolumn{1}{c||}{Saboksayr et al. \cite{saboksayr_online_2021}}  
  &  0.29 & 0.47 
  & 0.28 & 0.47  
  & 0.28 & 0.49 
 \\
  \hline
 \multicolumn{1}{c||}{Proposed (Alg. \ref{algorithm:alg1})}   
  &  \bf 0.59 & \bf 0.36 
  & \bf 0.54 & \bf 0.36 
  & \bf 0.46 & \bf 0.37  
 \\
  \hline
 \end{tabular}
 \end{table*}

\subsection{Synthetic data}
For synthetic data generation, we consider $ p = 100 $ and $ T = 1000 $. We divide the $ T $ time-stamps into equal frames (windows) of length $ T_n $ (with no overlap), where the graph is assumed to be constant during each time frame. Let $ F_{n} $ denote the time indices in frame $ n $. Random samples of the signal in each frame are generated via
\(
\xv_t = \left(\Lb_{n}^\dagger\right)^{1/2} \nuv_t, \quad \nuv_t \sim \mathrm{St}(\zerov, \Ib), \quad t \in F_{n}
\),
where $ \Lb_{n}^\dagger $ represents the pseudo-inverse of the Laplacian matrix at time frame $ n $ and $ \mathrm{St} $ denotes the Student-\textit{t} distribution with zero mean and identity covariance matrix. To model the temporal variations of the graph weights, we use the equation 
\(
\wv_{n} = \left(\av \odot \wv_{n-1} + \epv_{n}\right)_+, \quad n \in \{1, \ldots, N\}.
\)
Here, $\av$ is sampled from an exponential distribution and $\epv_{n}$ is sampled from a normal distribution, where $N$ denotes the number of time frames. The initial values for the graph weights $\wv_{0}$ (and subsequently the Laplacian matrix $\Lb_0$) are sampled from the Stochastic Block Model, where the nodes are partitioned into 4 clusters (blocks) with random intra-cluster and inter-cluster edge weights.

We construct the original data matrix \( \Xb \) by concatenating the vectors \( \xv_t \) column-wise, covering the range from \( t=1 \) to \( t=T \). Following this, we normalize the data matrix such that each row is centered and scaled by its standard deviation. Next, we create a random binary sampling matrix \( \Mb \) defined by the sampling rate parameter \( \SR \), along with the observation noise matrix \( \Nb \), which consists of i.i.d. Gaussian random entries with zero mean and variance \( \sigma_n^2 \). The observation matrix is then formed as \( \Yb = \Mb \odot (\Xb + \Nb) \).

We then introduce the matrices \( \Yb \) and \( \Mb \) as inputs to the  time-varying graph learning algorithms.
For the parameters of our proposed method, we  choose \( \dv = \onev \) and \( \rho = 3 \). We also set \( \sigma_\epsilon = \exp(0.005T_n) \), \( \nu = 3 \), and \( \gamma = 0.01 \). 
We compare our method with several benchmark algorithms, including the TV-GGM \cite{natali_online_2021-1} and the TV-SBM \cite{natali_learning_2022} methods, for online time-varying graph learning under Gaussian graphical and smoothness-based models. Additionally, we include the time-varying graph learning method in \cite{cardoso_learning_2020}, the online graph learning algorithm by Saboksayr et al. \cite{saboksayr_online_2021}, and the time-varying version of the GSP Toolbox\footnote{\href{https://epfl-lts2.github.io/gspbox-html/}{https://epfl-lts2.github.io/gspbox-html/}} graph learning method by Kalofolias et al. \cite{kalofolias_learning_2017} in our comparison.

To evaluate the performance of these  algorithms in terms of learning accuracy, we utilize the relative error (RelErr) and the F-score criteria. Let \( \asto{\Lb_N} \in \mathbb{R}^{p \times p} \) be the ground-truth Laplacian at frame \( N \) (last frame), and \( \widehat{\Lb}_N \in \mathbb{R}^{p \times p} \) be the estimated one. The relative error and the F-score measures on the last frame of data are then defined as follows:
\begin{align}
\text{RelErr} = \dfrac{\|\asto{\Lb_N} - \widehat{\Lb}_N\|_F}{\|\asto{\Lb_N}\|_F}, \qquad
\text{F-score} = \dfrac{2 \, \text{TP}}{2 \, \text{TP} + \text{FP} + \text{FN}}. \nonumber
\end{align} 

In the above equations, TP, FP, and FN respectively denote the number of correctly identified connections in the original graph, the number of connections falsely identified in the estimated graph (not present in the original one), and the number of connections from the original graph that are missing in the estimated one. 

Visual representations of the time-varying graph weights across each time frame (for 400 edges) learned using different algorithms are shown in Fig. \ref{fig_w_visual}. In these figures, the horizontal axis represents the frame index $n$, and the vertical axis represents the index of the graph's edges. The top row shows the results for frame lengths of 
$T_n= 200$, while the bottom row shows the results for $T_n=100$. Table \ref{table:1} and Table \ref{table:1_noise} also provide the relative error and the F-score performance of the graphs at the final time frame (with $T_n = 200$)
for different signal sampling rates and noise levels, respectively. As observed in Fig. \ref{fig_w_visual}, the time-varying weights of the graph learned using the proposed method align more closely with the ground-truth graph weights. This is also evident in Tables \ref{table:1} and \ref{table:1_noise} in terms of relative error and  F-score measures.

\begin{figure*}[!t]
\centering 
\begin{subfigure}[t]{0.35\textwidth}
   \includegraphics[trim=0 80 0 60,clip,width=\textwidth]
   {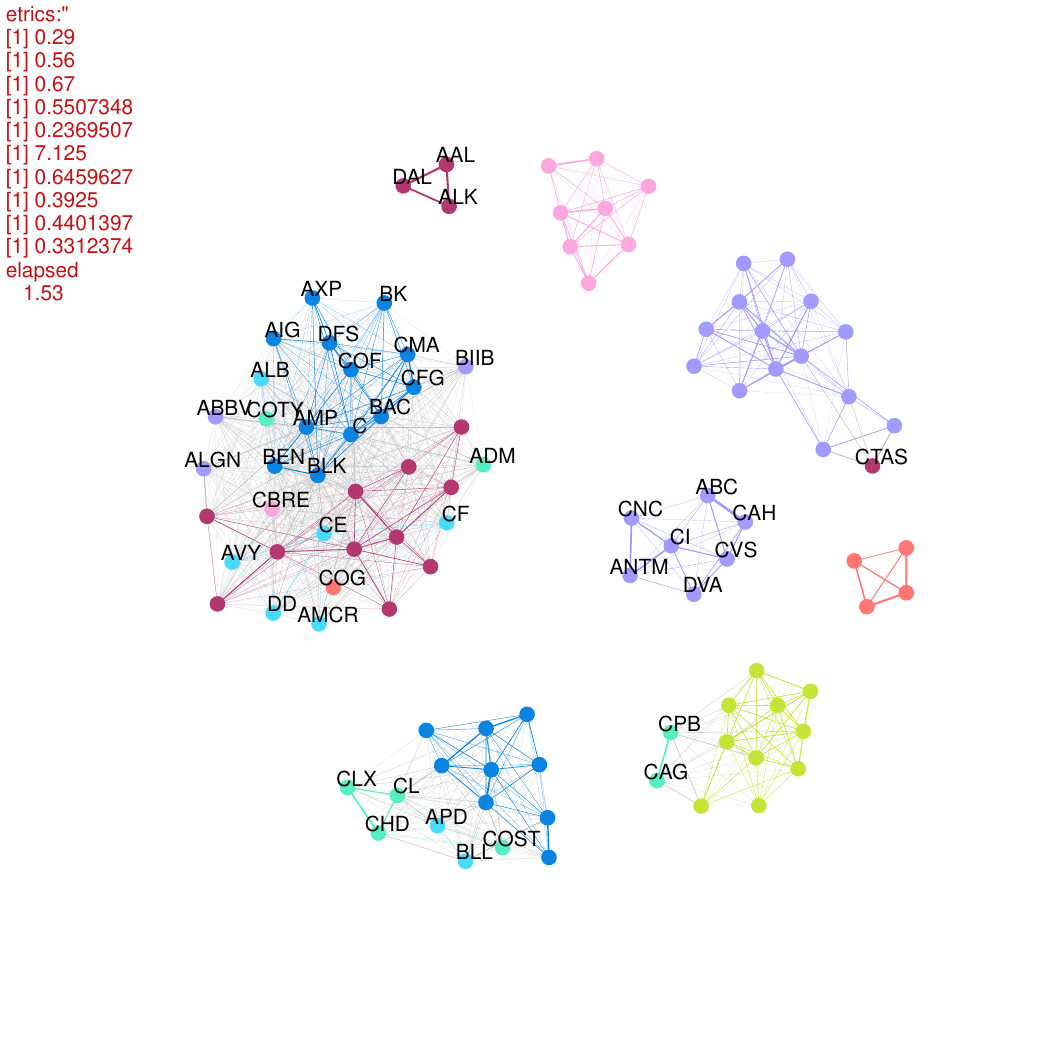}
     \caption{CLR  method \cite{nie_constrained_2016}
     }
\end{subfigure} 
\hspace{-20pt}
\begin{subfigure}[t]{0.35\textwidth}
   \includegraphics[trim=0 80 0 60,clip,width=\textwidth]
      {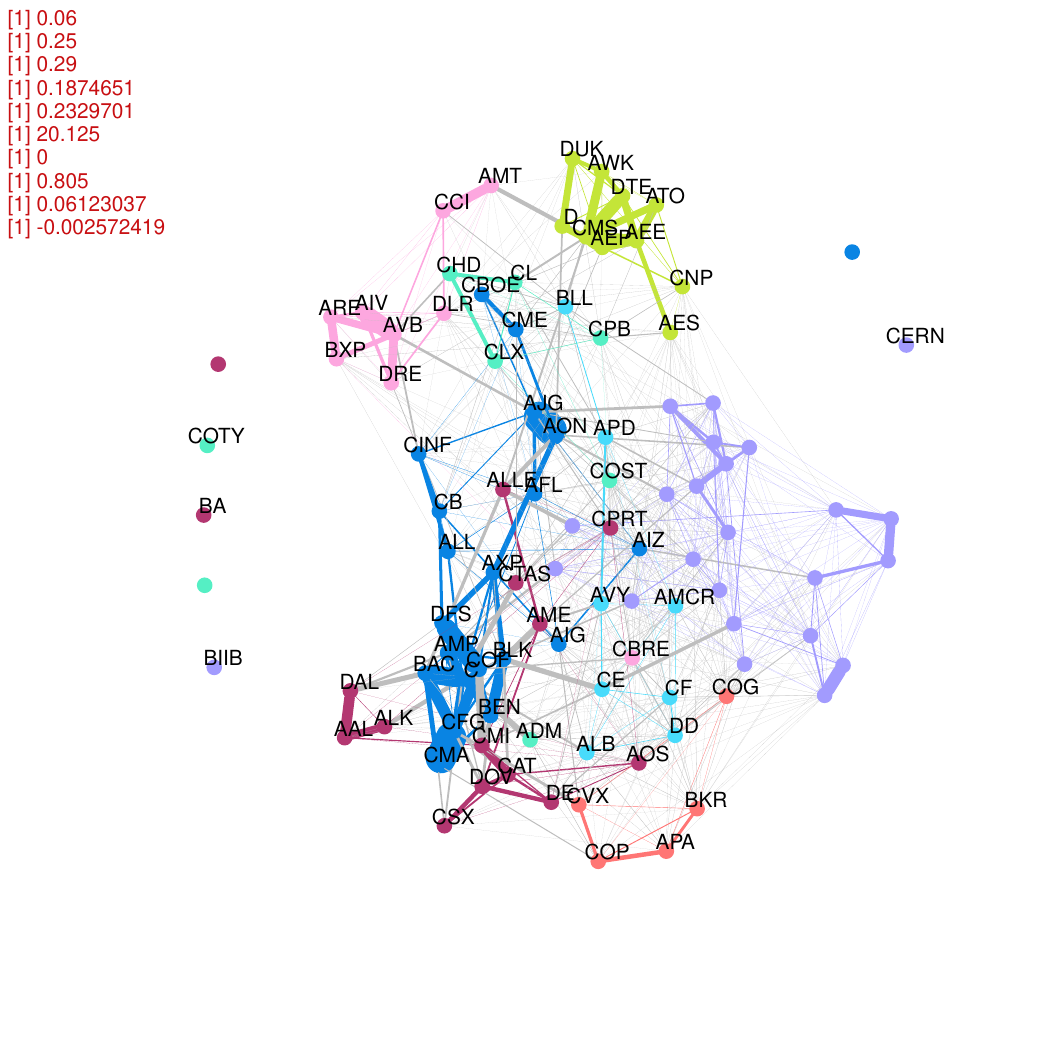}
     \caption{SGLA method \cite{kumar_unified_2020}
     }
\end{subfigure} 
\hspace{-20pt}
\begin{subfigure}[t]{0.35\textwidth}
   \includegraphics[trim=0 80 0 60,clip,width=\textwidth]
      {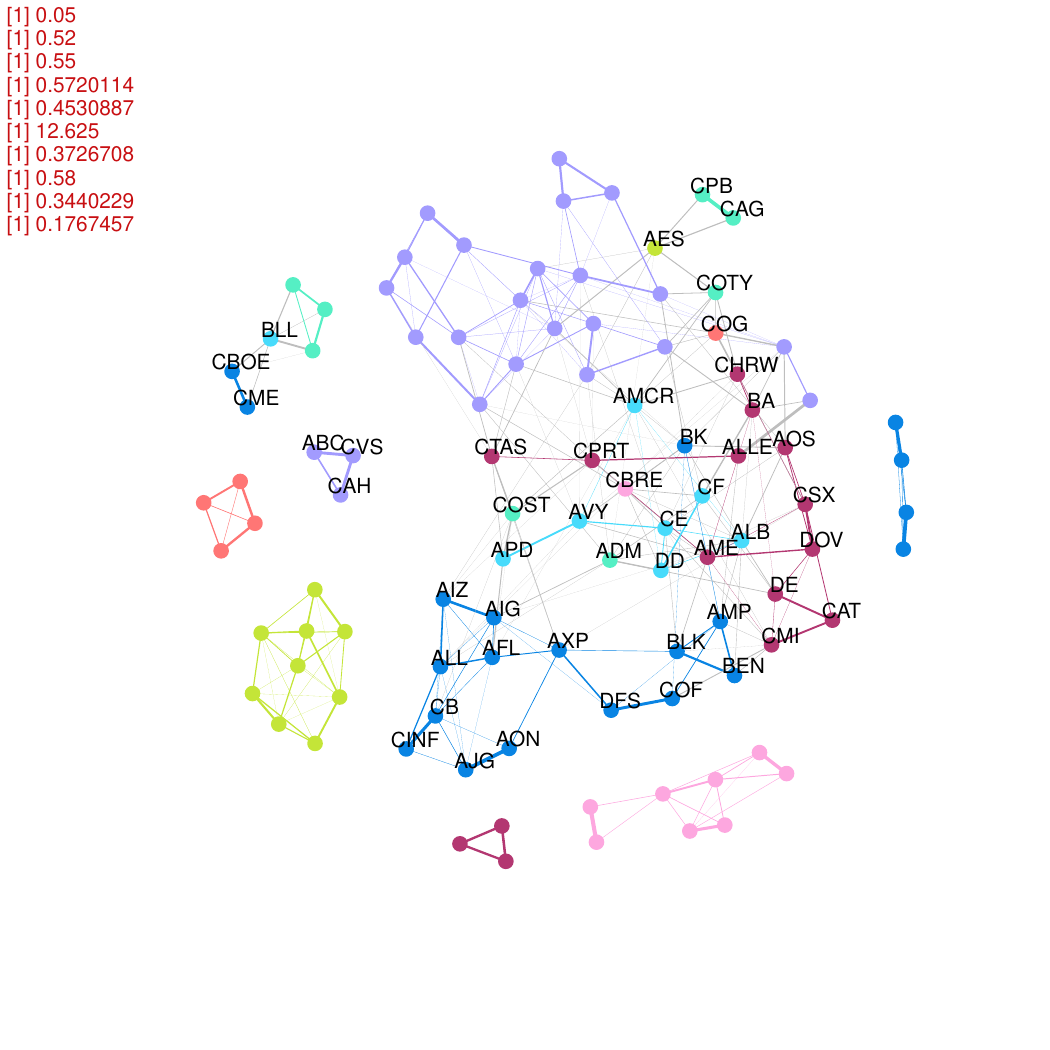}

     \caption{Fingraph method  \cite{de_miranda_cardoso_graphical_2021}
     }
\end{subfigure}
\hspace*{-20pt }
\\[10pt]
\hspace*{65pt}
\begin{subfigure}[t]{0.35\textwidth}
   \includegraphics[trim=0 80 0 60,clip,width=\textwidth]
{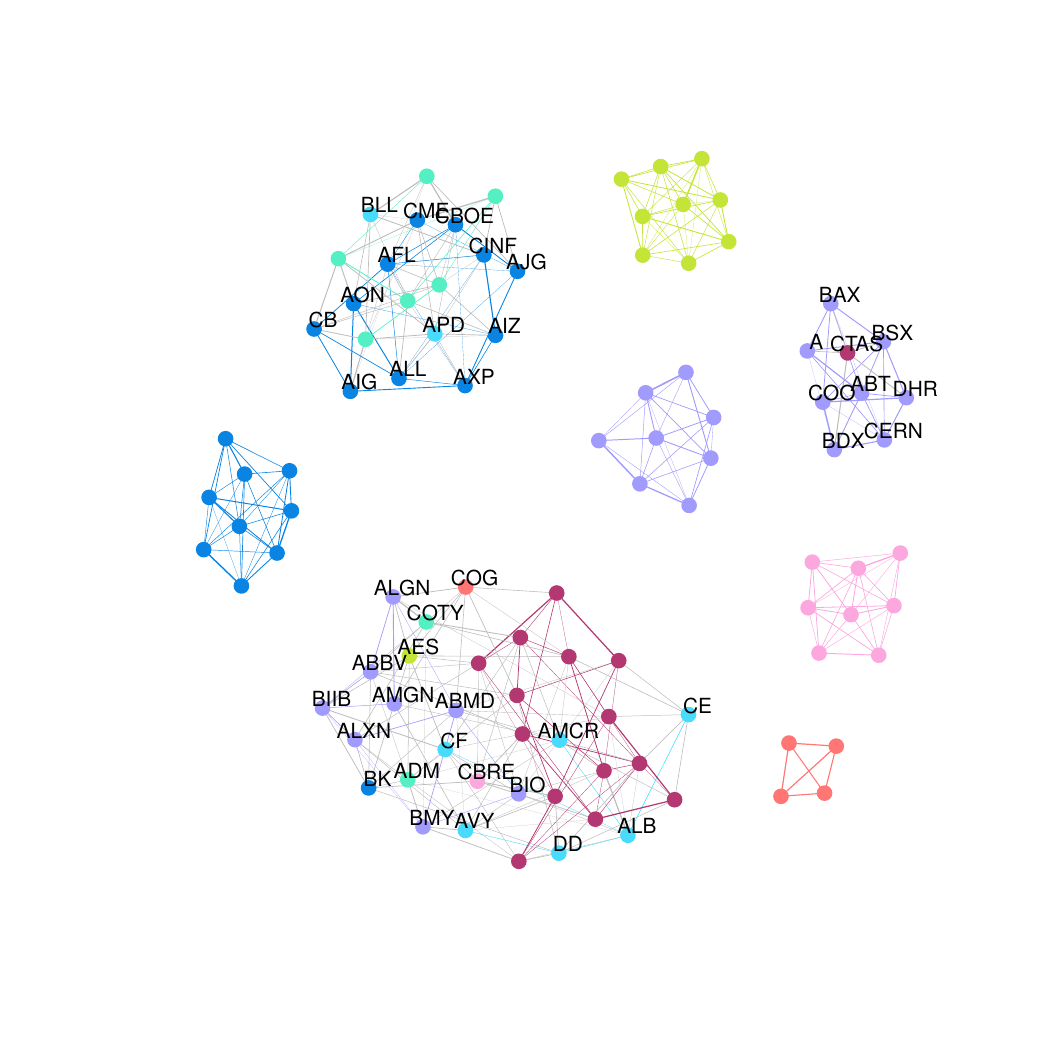}
     \caption{Javaheri et al.   \cite{javaheri_graph_2023} 
     }
\end{subfigure} 
\hspace{-10pt}
\begin{subfigure}[t]{0.35\textwidth}
   \includegraphics[trim=0 80 0 60,clip,width=\textwidth]      
   {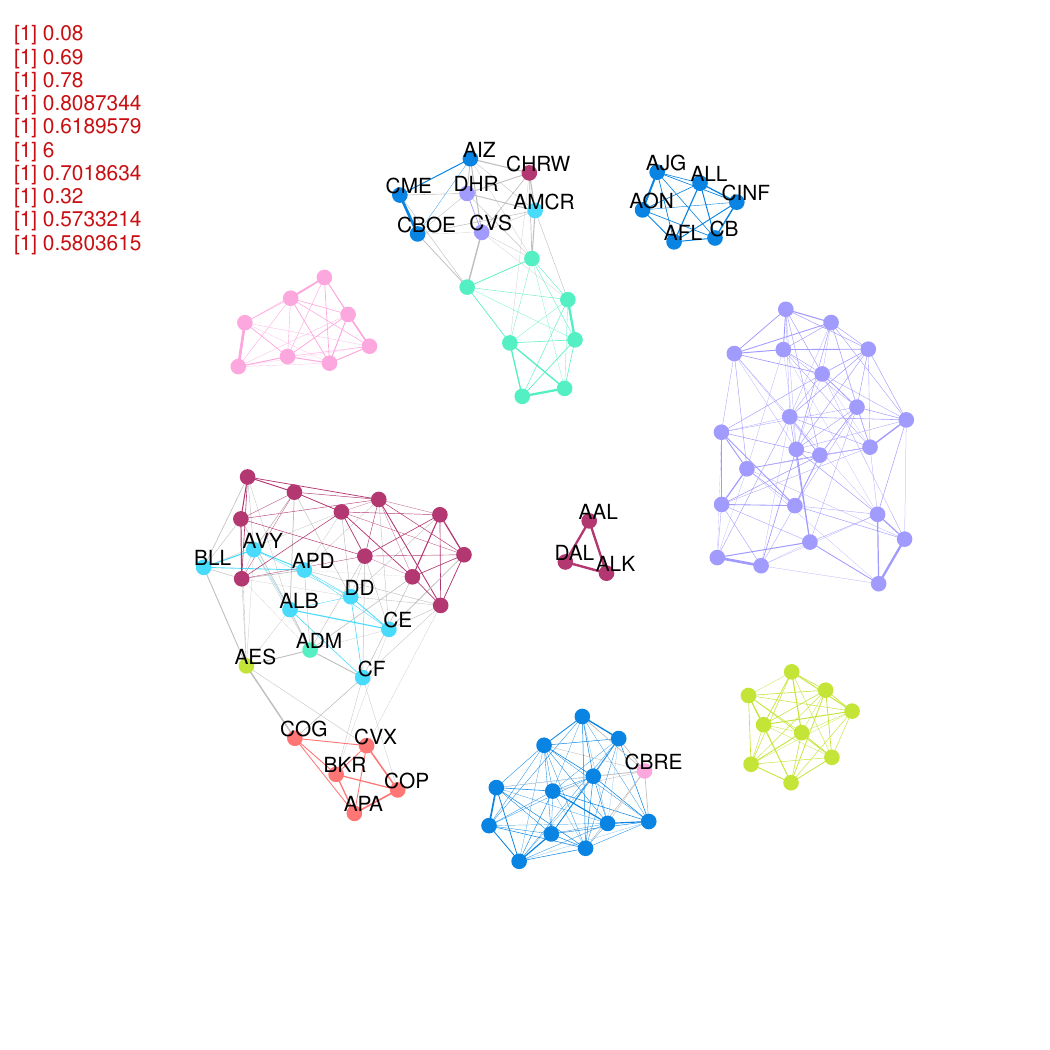}
     \caption{Proposed method (Alg. \ref{algorithm:alg1}).
     }
\end{subfigure} 
\hspace*{20pt}
\begin{subfigure}[t]{0.11\textwidth}
   \includegraphics[trim=0 0 0 0,clip,width=\textwidth]{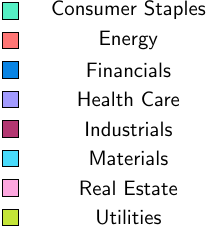}
\end{subfigure} 
\caption{The \( k \)-component graphs learned from financial data corresponding to the log-returns of 100 stocks in the S\&P 500 index (comprising \( k = 8 \) sectors). The graphs are shown for the last data frame (with length $T_n = 200$).
}
\label{fig_fingraph_orig}
\end{figure*}

\subsection{Real Data}
In this section, we utilize real-world data from financial markets, particularly the log-returns of the stocks in the S\&P500 index. For this experiment, we select a subset of 100 stocks, categorized into $k=8$ sectors (clusters), including Utilities", "Real State", "Materials", Industrials", Health Care", "Financials", "Energy", and "Consumer Staples". The ground-truth labels of the sectors are determined by the GICS classification standard\footnote{\href{https://www.msci.com/our-solutions/indexes/gics}{https://www.msci.com/our-solutions/indexes/gics}}. We subsequently compute the log-return of these stocks over a 1000-day period spanning from January 2016 to January 2020. Following this, we construct the matrix $ \Xb \in \mathbb{R}^{p \times T} $ with $ p=100 $ (number of stocks) and $ T=1000 $ (number of days). We segment the time domain (days) into  frames of length $ T_n $.

We first partition data into non-overlapping frames of length $T_n = 200$, resulting in $N=5$ data frames. 
Next, we evaluate the effectiveness of our method for two different applications: stock classification or clustering and portfolio design. For the clustering task, we employ several criteria to evaluate performance, including accuracy (ACC), purity \cite{everitt_cluster_2011}, 
the modularity (MOD) \cite{newman_modularity_2006},
and the adjusted Rand index (ARI) \cite{rand_objective_1971}. Both accuracy and purity are calculated by determining the ratio of the true-positive (correctly classified) labels (TP) to $p$. However, there is a distinction between these two metrics: in accuracy, we consider the best ordering of the labels assigned to the inferred clusters (among all $k!$ permutations), whereas in purity, the label of each cluster is assumed to be the ground-truth label of the majority of the nodes in that cluster.
The modularity is also a measure that evaluates how disjoint the nodes with different labels are (the higher the value, the more disjoint they are).
The ARI is another metric used to evaluate the similarity between the true labels and the cluster labels. Here, the parameter $ \nu $ in the proposed method is obtained by fitting a multivariate Student-\textit{t} distribution to the data (using the \texttt{fitHeavyTail} R-package\footnote{\href{https://cran.r-project.org/package=fitHeavyTail}{https://CRAN.R-project.org/package=fitHeavyTail}}).

\begin{table}[!b]
 \scriptsize
\centering
 \caption{Clustering performance of the graphs shown in Fig. \ref{fig_fingraph_orig}. The values are reported for the last data frame (with   $T_n=200$).}
 \label{table:3}
 \begin{tabular}{m{1cm}||c|c|c|c|c} 
     & ACC  & Purity & MOD & ARI & Time (s)  \\
     \hline
     
     \multicolumn{1}{c||}{CLR  \cite{nie_constrained_2016}} & 
      0.56 & 
      0.67 &  
      0.23 &
      0.33 &
      1.53\\
     \hline
     \multicolumn{1}{c||}{SGLA  \cite{kumar_unified_2020} } & 
      0.29 & 
      0.31 & 
      0.24 &
     0.02 &
    \bf 1.21 \\
     \hline
     \multicolumn{1}{c||}{Fingraph \cite{de_miranda_cardoso_graphical_2021}  }  & 
      0.52 & 
      0.55 & 
      0.43 &
    0.18 &
    23.95\\
     \hline
     \multicolumn{1}{c||}{Javaheri et al.  \cite{javaheri_graph_2023}  }  & 
    0.58 & 
    0.71 & 
    0.45 &
    0.34 &
    2.72 \\
        \hline
     \multicolumn{1}{c||}{  \multirow{1}{*}Proposed (Alg. \ref{algorithm:alg1})}  
     &  \bf 0.69 
     &  \bf 0.78
     & \bf 0.62
     &  \bf 0.58
     & 2.14 \\
     \hline
 \end{tabular}
 \end{table}

Fig. \ref{fig_fingraph_orig} illustrates the graphs learned from the last frame of data ($n=N=5$), using the proposed method, along with several existing benchmark algorithms for multi-component graph learning (which are static methods).
The colors of the nodes (stocks) indicate their respective ground-truth clusters (sector indices), while labels adjacent to certain nodes denote the misclassified stocks. These algorithms include the constrained Laplacian rank (CLR) method \cite{nie_constrained_2016}, the SGLA method\footnote{\href{https://CRAN.R-project.org/package=spectralGraphTopology}{https://CRAN.R-project.org/package=spectralGraphTopology}} \cite{kumar_unified_2020}, the Fingraph algorithm\footnote{\href{https://CRAN.R-project.org/package=fingraph}{https://CRAN.R-project.org/package=fingraph}} \cite{de_miranda_cardoso_graphical_2021}, and the method proposed by Javaheri et al. \cite{javaheri_graph_2023} for balanced clustering. Notably, the latter two methods are tailored for heavy-tailed data. For these benchmark algorithms, which are all designed for offline static graph learning, 
we provide 
the static graph learned from each data frame as the initial guess for graph learning in the next frame, i.e., $\wv^0_{n+1} = \hat{\wv}_{n}$.

\begin{figure*}[!t]
\centering 
\begin{subfigure}[t]{0.32\textwidth}
\includegraphics[trim=0 80 0 60,clip,width=\textwidth]
{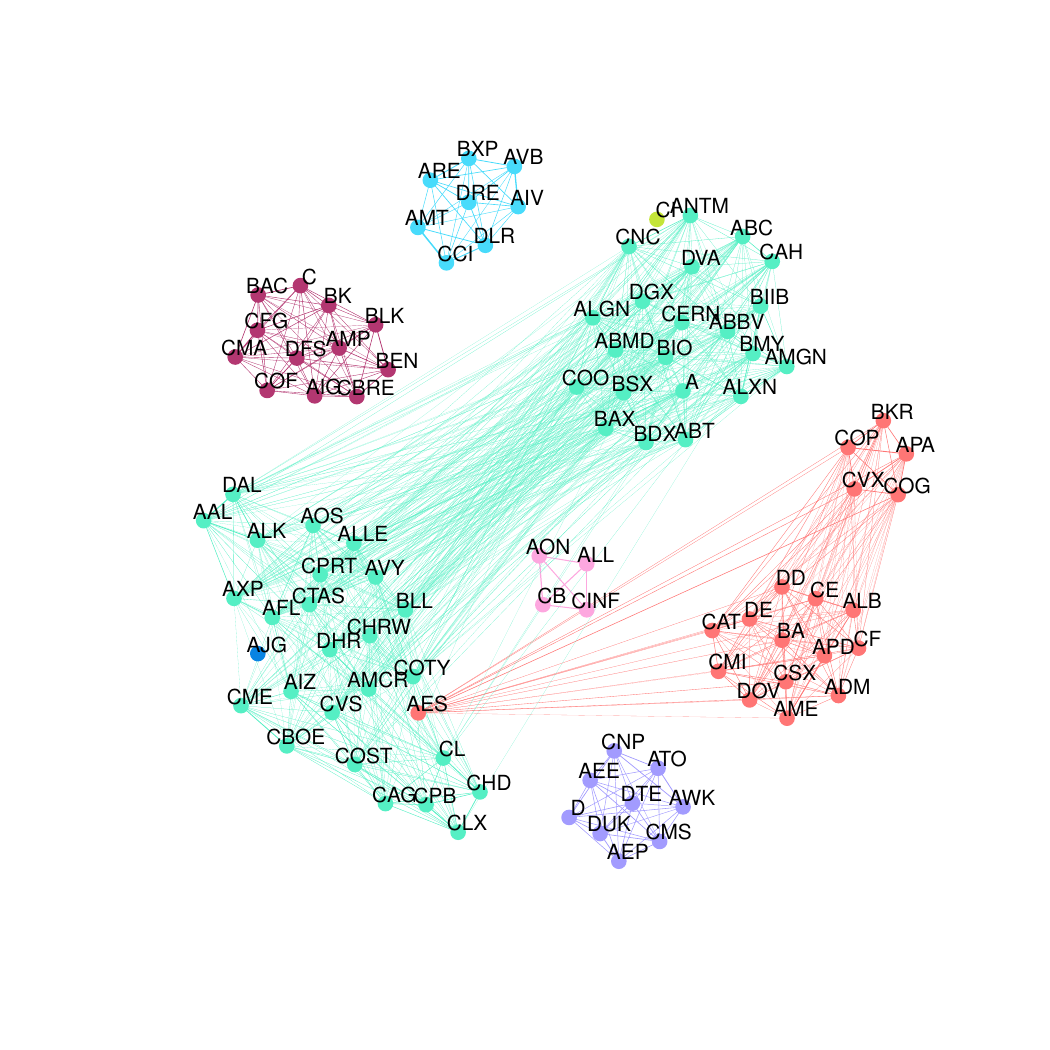}
     \caption{Graph learned  from the 1st frame ($T_n = 200$).
     }
\end{subfigure} 
\begin{subfigure}[t]{0.32\textwidth}
\includegraphics[trim=0 80 0 60,clip,width=\textwidth]
  {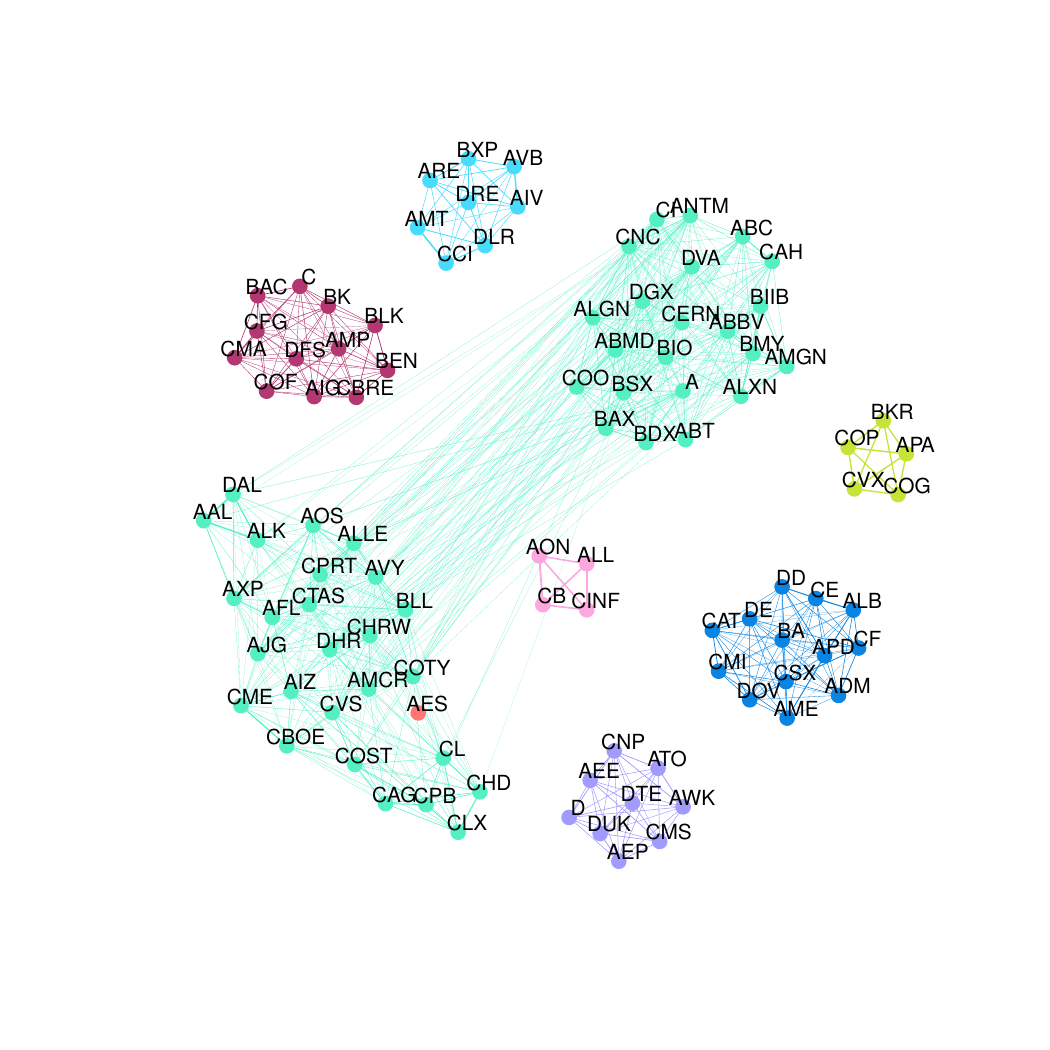}
     \caption{ Graph  learned from the 2nd frame ($T_n = 200$).
     }
\end{subfigure} 
\begin{subfigure}[t]{0.32\textwidth}
 \includegraphics[trim=0 80 0 60,clip,width=\textwidth]
{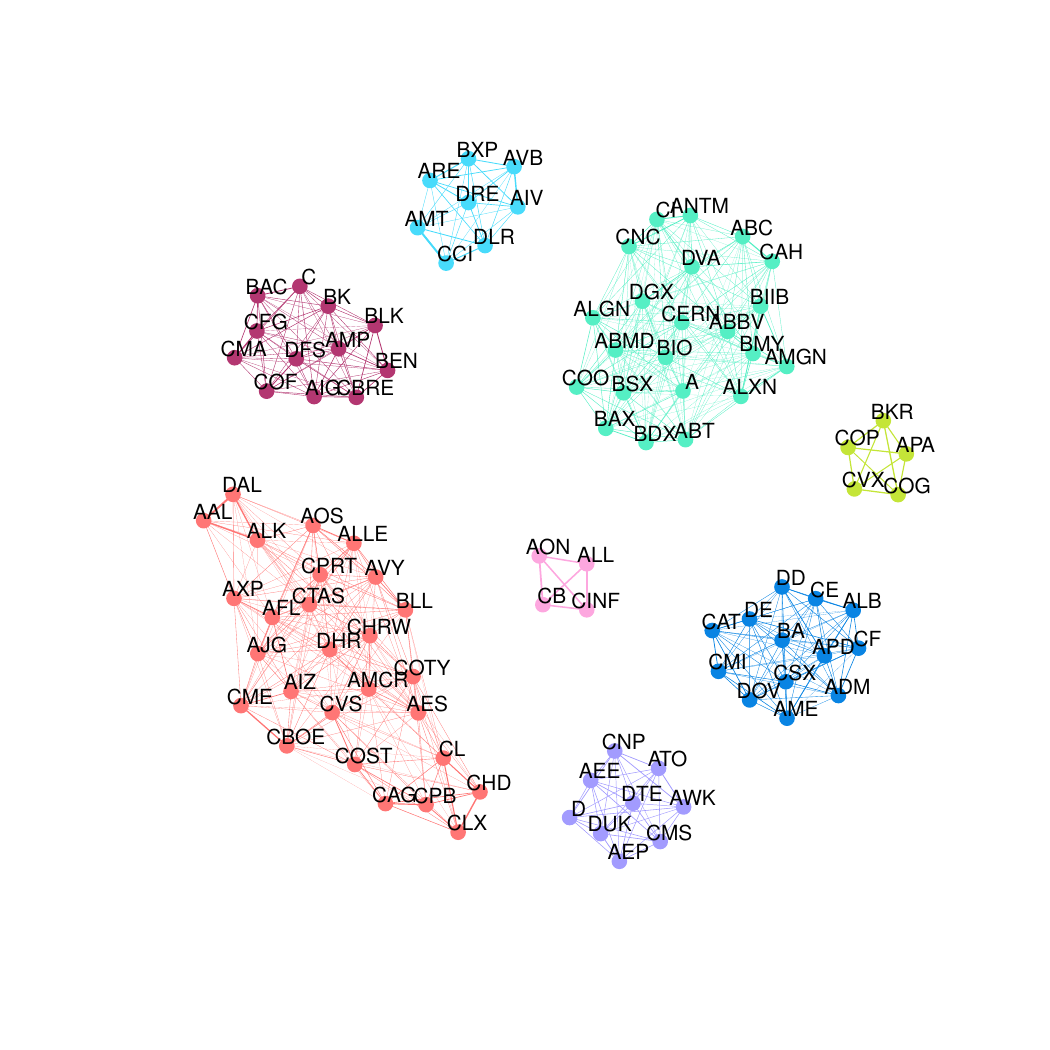}
     \caption{Graph  learned from the 3rd frame ($T_n = 200$).
     }
\end{subfigure} 

\caption{The time evolution of the graphs learned from  S\&P500 data via the proposed method for the frame length of $T_n = 200$. Colors represent the inferred clusters ($k=8$). 
}
\label{fig_tvgplot}
\end{figure*}

\begin{figure}[!t]
\centering 
\begin{subfigure}[t]{0.3\textwidth}
   \includegraphics[trim=0 0 0 0,clip,width=\textwidth]  {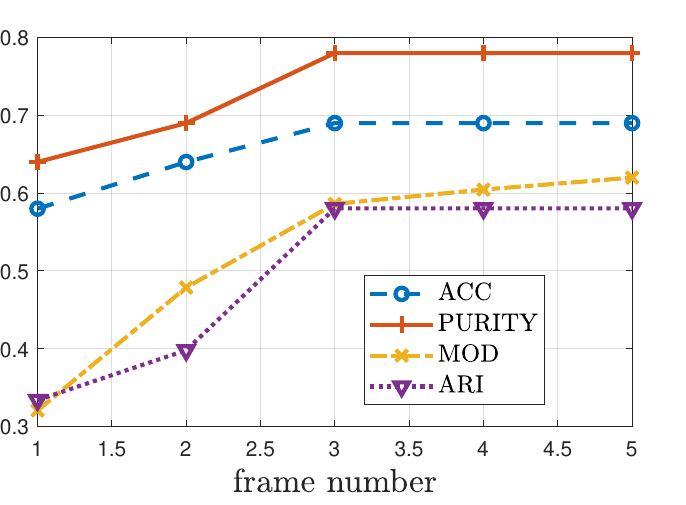}
\end{subfigure} 
\caption{Performance  of the proposed method 
for clustering SP500 data versus the frame number. 
}
\label{fig:acc_vec}
\end{figure}

\begin{table}[!b]
 \scriptsize
\centering
 \caption{Clustering performance of the proposed method for different choices of the frame length $T_n$. 
 }
 \label{table:diff_Tn}
 \begin{tabular}{m{1cm}||c|c|c|c|c} 
     & ACC  & Purity & MOD & ARI  & Delay (days) \\
     \hline
    \multicolumn{1}{c||}{  \multirow{1}{*} {$T_n = 100$}}  
     &  0.68 
     &  0.72
     &  0.59
     &  0.53
     &  \bf 100\\
     \hline
     \multicolumn{1}{c||}{  \multirow{1}{*} {$T_n = 200$}}  
     &  0.69 
     &  0.78
     & \bf 0.62
     &  0.58
     &  200\\
     \hline
     \multicolumn{1}{c||}{  \multirow{1}{*} {$T_n = 500$}} 
     &  0.71 
     &  0.84 
     & 0.60
     &  0.67
     &  500\\
     \hline
     \multicolumn{1}{c||}{  \multirow{1}{*}
     {\color{black} 
     $T_n = 1000$}}   
      &  {\color{black}\bf  0.79}
     & {\color{black}\bf  0.89}
     &  0.59 
     & {\color{black}\bf 0.75}
     &  1000\\
     \hline
  
     \hline
 \end{tabular}
 \end{table}

 The clustering performance corresponding to these methods as well as the runtime of the algorithms is summarized in Table \ref{table:3}. The results indicate that the 
 time-varying 
 graph learned using the proposed algorithm (Alg. \ref{algorithm:alg1}) exhibits better   clustering performance by taking advantage of the graph temporal variations, compared to the static methods for graph learning.
 As expected, the performance of the proposed time-varying clustering  method is improved through time-evolution as shown in Fig. \ref{fig_tvgplot}.
 This is also visually depicted in Fig. \ref{fig:acc_vec}, where the clustering performance is plotted against the frame number, illustrating its improvement over time.
We then evaluate the performance of our proposed method for  different choices of the frame length, namely $T_n = 200$, $T_n=500$, and $T_n = 1000$. The results are given in Table \ref{table:diff_Tn}, where the last column represents the number of days we have to wait to update the graph learned form financial data.
As illustrated in the table, the accuracy of the clustering improves  as the frame length increases, due to the availability of more data samples and more accurate statistical estimates. However, this also requires access to more data and introduces larger delay. Therefore, depending on the application, a trade-off between delay and accuracy should be considered.

We then compare our method with benchmark algorithms for time-varying graph learning, including the method proposed by Natali et al. (TV-GGM) \cite{natali_online_2021-1} for online time-varying graph topology identification under the Gaussian graphical model, the time-varying graph learning method proposed by Cardoso et al. \cite{cardoso_learning_2020}, the online time-varying graph topology inference method by Saboksayr et al. \cite{saboksayr_online_2021}, and the time-varying graph learning algorithm by Kalofolias et al. \cite{kalofolias_learning_2017}. These methods are not applicable for learning $k$-component graphs (based on which we can cluster the data). Therefore, we utilize the spectral clustering approach \cite{ng_spectral_2001} for node classification. Specifically, we apply $k$-means clustering \cite{macqueen_methods_1967} on the set of eigenvectors of the inferred Laplacian matrix corresponding to the $k$ smallest eigenvalues. Table \ref{table:clust_connect} presents the performance comparison of the proposed method with benchmark time-varying graph learning algorithms for clustering financial (S\&P 500) data. Here, the frame length is fixed at $T_n = 200$ for all methods, and the reported values represent the clustering performance on the last frame of the data ($n=N$). It is evident that the proposed method outperforms the others significantly, attributed to its capability to learn $k$-component graphs from heavy-tailed data.

\begin{table}[!t]
\scriptsize
\centering
 \caption{Clustering performance comparison between the proposed method and the benchmark algorithms for learning time-varying graphs, where $k$-means is employed for spectral clustering (the frame length is $T_n = 200$). 
 }
 \label{table:clust_connect}
 \begin{tabular}{m{1cm}||c|c|c|c|c} 
     & ACC  & Purity & MOD & ARI  & Time (s)  \\
     \hline
     
     \multicolumn{1}{c||}{Proposed (Alg. \ref{algorithm:alg1})
     }  & 
      \bf 0.69 & 
      \bf0.78 &
      \bf 0.62 &  
     \bf 0.58 &
     2.14 \\
     \hline
     \multicolumn{1}{c||}{Kalofolias et al.  \cite{kalofolias_learning_2017} }  & 
      0.47 & 
      0.49 & 
      0.59 &
      0.16 &
     \bf 0.18 \\
     \hline
     \multicolumn{1}{c||}{Cardoso et. al \cite{cardoso_learning_2020} }  & 
      0.57 & 
    0.67 & 
    \textcolor{black}{  0.36} &
     0.41 &
    1.45
    \\
    \hline
     \multicolumn{1}{c||}{  \multirow{1}{*}TV-GGM \cite{natali_online_2021-1}}  
      &  0.29 
     &  0.31
     &  0
     &  0.08
     & 6.76 \\
     \hline
     \multicolumn{1}{c||}{  \multirow{1}{*}TV-SBM \cite{natali_online_2021-1}}  
      &  0.61 
     &  0.63
     &  0.58
     & 0.36
     & 522.58 \\
     \hline
     
     \multicolumn{1}{c||}{  \multirow{1}{*}Saboksayr et al. \cite{saboksayr_online_2021}} 
     & 0.41 
     &  0.44 
     &  0.44
     & 0.12
     & 2.26\\
     \hline
 \end{tabular}
 \end{table}

Next, we explore the application of our time-varying graph learning method for portfolio design. We consider a dynamic portfolio design strategy based on maximizing the ratio of the portfolio return over the portfolio graph smoothness. Let $\Lb_{n}$ denote the time-varying graph learned by the proposed method (Algorithm \ref{algorithm:alg1}) given the $n$-th data frame. Let $\uv_n$ denote the portfolio weights at time frame $n$ and assume the expected value (mean) and the covariance matrix of the stock returns for the $n$-th data frame are given as $\hat{\muv}_n$ and $\hat{\Sigmab}_n$. Here, we use Student-\textit{t} robust estimators for $\hat{\muv}_n$ and $\hat{\Sigmab}_n$ (using the \texttt{fitHeavyTail} R-package similar to the previous part). The Maximum Sharpe Ratio Portfolio (MSRP) design scheme aims at maximizing the Sharpe ratio (S) \cite{sharpe_mutual_1966} defined as the ratio of the portfolio expected return over the volatility, i.e., $\text{S} = \frac{\hat{\muv}_n^\top \uv_n}{\uv_n^\top \hat{\Sigmab}_n \uv_n}$. Using the Schaible transform \cite{schaible_parameter-free_1974}, the dynamic MSRP optimization problem can be reformulated as:
\begin{align}
\label{eq:MSRP}
    \uv_{\text{MSRP},n}^\star = \argminn_{\uv_{n}\geq \zerov,\, \hat{\muv}_{n}^\top \uv_{n} = 1} \uv_{n}^\top \hat{\Sigmab}_{n} \uv_{n} 
\end{align}
where $\uv_{\text{MSRP},n}^\star$ denotes the optimal MSRP weights at time frame $n$. Inspired by the notion of the Sharpe ratio, we propose the Graph-based Ratio (GR) as 
\begin{align*}
    \text{GR} \triangleq \frac{\hat{\muv}_n^\top \uv_n}{\sqrt{\uv_n^\top \Lb_n \uv_n}}.
\end{align*}
To maximize this ratio, we design a portfolio referred to as the Maximum Time-Varying Graph Ratio Portfolio (MTVGRP). The problem for this portfolio design is stated as follows:
\begin{align}
\label{eq:MGSP}
    \uv_{\text{MTVGRP},n}^\star = \argminn_{\uv_{n}\geq \zerov,\, \hat{\muv}_{n}^\top \uv_{n} = 1} \uv_{n}^\top \Lb_{n} \uv_{n} 
\end{align}

Consider the returns of 50 randomly chosen stocks from the S\&P500 index over the period 2010-12-01 to 2018-12-01. From this dataset, we select 100 different subsets (time intervals), each of length $T=504$ days, with different starting time indices. Each dataset is partitioned into frames of length $T_n=200$ days with 180 days overlap. We then design dynamic portfolios (with re-optimization frequency of 20 days) based on our proposed MTVGRP scheme, and evaluate the performance using the portfolio backtest\footnote{\hyperlink{https://CRAN.R-project.org/package=portfolioBacktest }{https://CRAN.R-project.org/package=portfolioBacktest}} package in R. We compare our design scheme with the dynamic MSRP portfolio and the Equally Weighted Portfolio (EWP). The weights for the MSRP are obtained by solving \eqref{eq:MSRP} and the EWP weights are given as $\uv^\star_{\text{EWP},n} = 1/p$. Table \ref{table:backtest} shows the results of this backtest in terms of different performance criteria, including the Sharpe ratio, the annualized return, the annualized volatility, and the maximum drawdown. From this table, it is implied that the proposed method delivers better performance through the (dynamic) time-varying graph-based portfolio design scheme. This is also evident in Fig. \ref{fig:sr_barchart} where the Sharpe ratio and the maximum drawdown of the portfolios in Table \ref{table:backtest} are compared with the market index.

\begin{figure}[!t]
\centering 
\hspace*{30pt}\begin{subfigure}[t]{0.45\textwidth}
   \includegraphics[trim=0 0 0 0,clip,width=\textwidth]
    {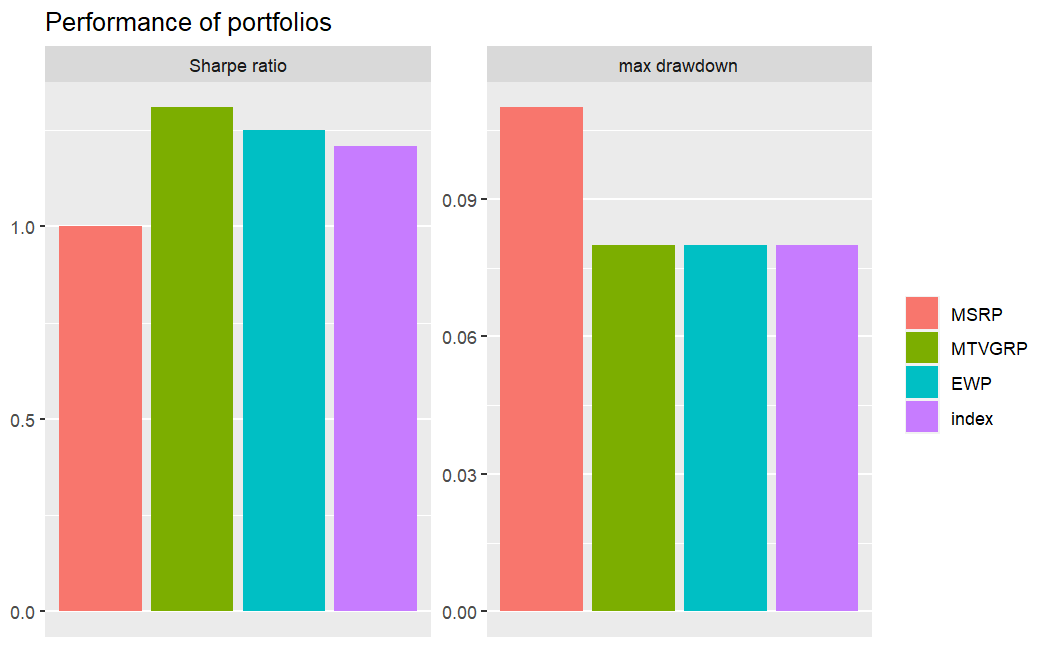}
\end{subfigure} \\
\begin{subfigure}[t]{0.22\textwidth}
   \includegraphics[trim=0 0 0 0,clip,width=\textwidth]
    {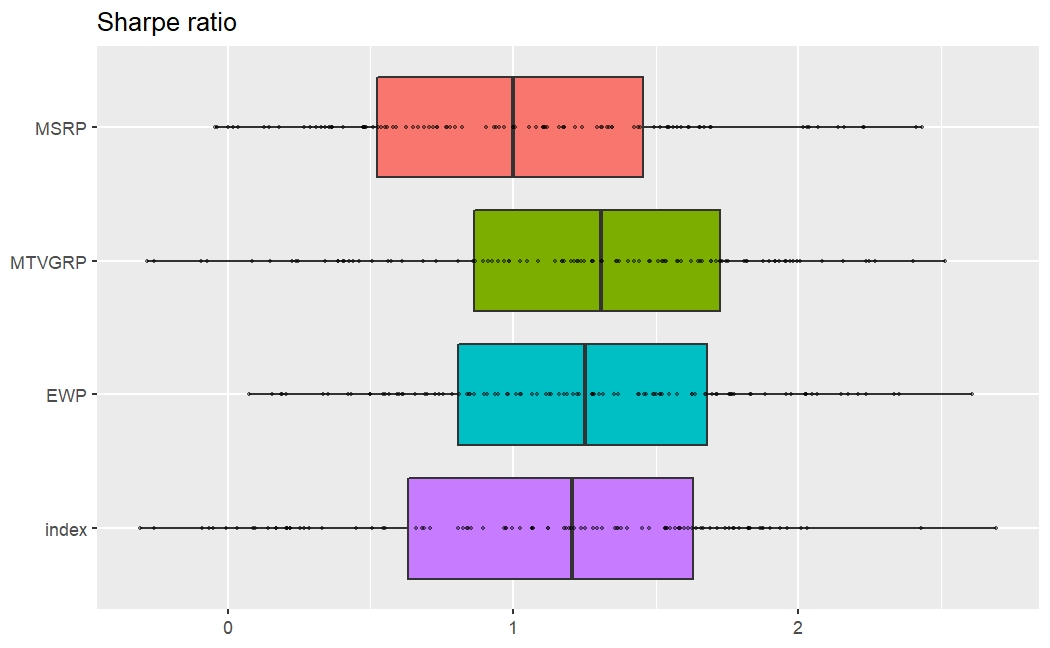}
\end{subfigure}
\begin{subfigure}[t]{0.22\textwidth}
   \includegraphics[trim=0 0 0 0,clip,width=\textwidth]
    {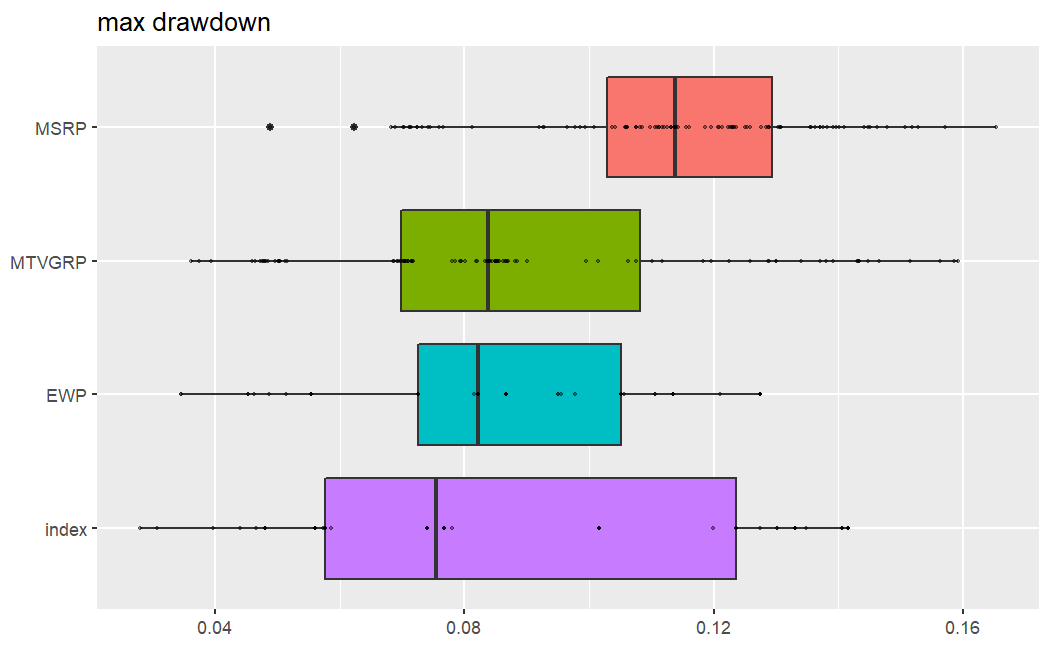}
\end{subfigure}
\caption{Sharpe ratio (left) and  the maximum draw-down (right) performance of the proposed MTVGRP portfolio compared to the market index, the  MSRP, and the EWP portfolios. The barplots on the top  represent the mean values of Sharpe ratio and maximum drawdown and the boxplots on the bottom  depict the distribution  of these performance measures. 
}
\label{fig:sr_barchart}
\end{figure}

\begin{table}[H]
\scriptsize
\centering
 \caption{Backtesting results of  the proposed MTVGRP compared to the MSRP and the EWP portfolios.}
 \label{table:backtest}
 \begin{tabular}{m{1cm}||m{1.1cm}|m{1cm}|m{0.9cm}|m{1.1cm}} 
     & Ann. Return  &  Ann. Volatility & Sharpe Ratio & Max Drawdown   \\
     \hline

     \multicolumn{1}{c||}{ MTVGRP (Proposed) }  & 
     \bf 0.14 & 
     \bf 0.11 & 
    \bf 1.31 &
    \bf  0.08 \\
     \hline
    
    \multicolumn{1}{c||}{ MSRP}   
     &  0.13 
     & 0.14
     &  1.00
     & 0.11 \\
     \hline

     \multicolumn{1}{c||}{ EWP (Uniform)}   
     &  0.13 
     &  0.11
     &   1.25
     & 0.08 \\
     \hline
 \end{tabular}
 
 \end{table}

\section{Conclusion}

This paper explores the problem of learning time-varying graphs specifically designed for heavy-tailed data. We propose a novel approach for time-varying graph learning that is tailored to infer graph-structured models capable of effectively capturing heavy-tailed distributions. Our proposed approach employs a probabilistic model to formulate the problem of learning time-varying graphs. We also incorporate spectral constraints into the problem, enabling us to learn multi-component graphs suitable for clustering. We present a solution based on a maximum-a-posteriori estimation framework using a semi-online strategy, wherein a single data frame is utilized to  update the graph. To demonstrate the effectiveness and robustness of our method in graphical modeling of time-varying heavy-tailed data, particularly within financial markets, we conduct  numerical experiments using both synthetic and real datasets.


\appendices
\section{Proof of Proposition \ref{prop:1}}
\label{App:1}
\begin{proof}

The MAP estimation rule can be  expressed as follows:
 \begin{equation*}
\begin{array}{cl}        
     \underset{  
 \wv_{n}\geq \zerov,\, \av\geq \zerov,\, \Xb_{n}}{\mathsf{min}} &
      -\log p(\wv_{n} ,\av, \Xb_{n}\vert {\wv}_{n-1}, \Yb_{n}, \Mb_n)= \\
      & -\log p( \Yb_{n}\vert\Xb_{n},\Mb_n) - \log p(\Xb_{n}\vert  \wv_{n}) \\
      &- \log p(\wv_{n}\vert {\wv}_{n-1},\av) -\log p(\av) + \cte \\
    \mathsf{s.t.} & \wv_{n} \in \Omega_\wv.
\end{array}
\end{equation*}

Now, given the Gaussian distribution of the measurement noise, we may write:
 \begin{align*}
 \begin{split}   
p(\Yb_n \vert \Xb_n,\Mb_n ) &= \hspace{-3pt}\prod_{t\in F_n} p(\yv_t\vert\xv_t,\mv_t) 
\\
&= C_0 \exp\left(-\frac{1}{2\sigma_n^2} \normop{\Yb_n -\Mb_n\odot  \Xb_n}_F^2\right), 
 \end{split}
\end{align*}
where $C_0$ is a constant.
Moreover, assuming the Student-\textit{t} distribution in \eqref{eq:heavy-tail}  for the data given the underlying graph model, we have:
 \begin{align*}
p(\Xb \vert \wv_{n}  ) &= \prod_{t\in F_n} p(\xv_t\vert\wv_{n}) \nonumber\\
&= C_1 \prod_{t\in F_n} \mathrm{det^*}(\Lc \wv_{n})\left(1+\frac{\xv_t^\top \Lc \wv_{n} \xv_t}{\nu}\right)^{-(\nu + p)/2}.
\end{align*}
with $C_1$ being another constant.

Now, let $\vv_n =  \av \odot\wv_{n-1} \geq \zerov$. 
Then, we may write:
\begin{align*}
     p(\wv_{n}\vert \wv_{n-1},\av) &= 
     p(\wv_{n}\vert \vv_n)=  \prod_i p(w_{n}(i)\vert v_n(i)).
\end{align*}
Using    the non-negative VAR equation $w_n(i) = ( v_n(i) + \epsilon_n(i) )_+$, we have:
\begin{align*}
     p(w_{n}(i)\vert v_n(i)) = \left\{
     \begin{array}{lc}
         \Prob[\epsilon_n(i)= w_n(i) -v_n(i)] &  w_n(i) >0 \\
         \Prob[\epsilon_n(i)\leq -v_n(i)]  &  w_n(i) =0
     \end{array}
     \right.
\end{align*}
Assuming i.i.d. Laplace distribution for the elements of $\epv_n$ as in \eqref{eq:laplace}, we get:
\begin{align}
     p(w_{n}(i)\vert v_n(i)) &= \left\{
     \begin{array}{lc}
         \frac{1}{2\sigma_\epsilon} \exp\left(-\frac{\vert w_n(i) -v_n(i) \vert }{\sigma_\epsilon}\right) &  w_n(i) >0 \\
         \frac{1}{2} \exp\left(\frac{- v_n(i)  }{\sigma_\epsilon}\right)  &  w_n(i) =0
     \end{array}
     \right. \nonumber\\
     &= \frac{1}{2 \sigma_\epsilon^{\Id(w_n(i) >0)}} \exp\left(-\frac{\vert w_n(i) -v_n(i) \vert }{\sigma_\epsilon}\right).\nonumber
\end{align}
Hence, we obtain:
\begin{align*}
     p(\wv_{n}\vert \wv_{n-1},\av) &= \frac{1}{2^p} \frac{1}{\sigma_\epsilon^{\normop{\wv_{n}}_0}} \exp\left(-\frac{1}{\sigma_\epsilon} \normop{\wv_{n}-\av\odot\wv_{n-1}}_1\right).
\end{align*}

Finally, plugging these probabilities in \eqref{eq:MAP_Online}, also considering 
%
exponential distribution for the VAR parameters $\av$ as in \eqref{eq:laplace},
%
we
can obtain \eqref{eq:problem_semi} after simplification.
\end{proof}

\section{Proof of Proposition \ref{prop:2}}
\label{App:2}
\begin{proof}
Let $g(\xv_t) = \log\left(1 + \dfrac{\xv^\top_{t}\Lc\wv^{l+1}_{n}\,{\xv_{t}}}{\nu} \right)$. Then, using the inequality $\log(x)\leq x-1,\,\, \forall x>0$, we have:
\begin{align}
\begin{array}{rl}
   g(\xv_t) &= \log\left(1 + \dfrac{\xv^\top_{t}\Lc\wv^{l+1}_{n}\,{\xv_{t}}}{\nu}  \right)\nonumber\\
  &\leq \log\left(1 + \dfrac{\xv^{l\top}_{t}\Lc\wv^{l+1}_{n}\,{\xv^l_{t}}}{\nu} \right) + \frac{\xv^\top_{t}\Lc\wv^{l+1}_{n}\,{\xv_{t}}+\nu}{\xv^{l\top}_{t}\Lc\wv^{l+1}_{n}\,{\xv^l_{t}}+\nu} - 1 \nonumber\\
  &=g(\xv_t^l) - 1 + \xv_t^\top \frac{\Lc \wv^{l+1}_n}{\xv^{l\top}_{t}\Lc\wv^{l+1}_{n}\,{\xv^l_{t}}+\nu} \xv_t + \frac{\nu}{\xv^{l\top}_{t}\Lc\wv^{l+1}_{n}\,{\xv^l_{t}}+\nu}\nonumber\\
  &= \frac{1}{\xv^{l\top}_{t}\Lc\wv^{l+1}_{n}\xv^l_t+\nu }  \xv_t^\top\Lc \wv^{l+1}_n \xv_t + h(\xv_t^l),
  \end{array}
\end{align} 
where $h(\xv_t^l)$ is a constant.
Then, we may write:
\begin{align*}
\begin{split}
    f_{\xv_t}(\xv_t) &= \frac{1}{T_n\sigma_n^2} \normop{\Diag(\mv_{t})\xv_{t}-\yv_{t}}^2 +\frac{p+\nu}{T_n} g(\xv_t)
    \\
    &\leq \frac{1}{T_n}\left( \xv_t^\top\Qb_t \xv_t - 2 \cv_t^\top \xv_t\right) + r(\xv_t^l),
    \end{split}
\end{align*}
 with $\Qb_t$ and $\cv_t$  given in \eqref{eq:Q_c} and $r(\xv_t^l) = \frac{p+\nu}{T_n} h(\xv_t^l) + \frac{\normop{\yv_t}^2}{T_n\sigma_n^2}$.
Now, for $\tau > \lambda_{\max}(\Qb_t)$, we may propose another majorization function as follows:
\begin{align*}
    f_\xv(\xv_t)  &\leq  \frac{1}{T_n}\Big(\xv_t^\top \Qb_t \xv_t+ (\xv_t - \xv^l_t)^\top (\tau \Ib - \Qb_t) (\xv_t-\xv_t^l) 
    \nonumber\\
    &\hspace{40pt}
    - 2 \cv_t^\top \xv_t\Big) + r(\xv_t^l) \nonumber\\
    &= \frac{\tau}{T_n} \normop{\xv_t - \xv_t^l + \frac{\Qb_t\xv_t^l - \cv_t}{\tau}}^2 + C(\xv^l_t),
\end{align*}
where $C(\xv_t^l)$ is a constant.
To find $\tau> \lambda_{\max}(\Qb_t)$, we write:
\begin{align}
\begin{array}{ll}
  \lambda_{\max}(\Qb_t)\!\!\!\! &= \lambda_{\max} \left( \frac{1}{\sigma_n^2} \Diag(\mv_t) + \frac{p+\nu}{\xv^{^l\top}_{t}\Lc\wv^{l+1}_{n}\,{\xv^l_{t}}+\nu} \Lc\wv^{l+1}_{n} \right) \nonumber\\
  &\leq \frac{1}{\sigma_n^2}\lambda_{\max}\left(\Diag(\mv_t) \right) + \frac{(p+\nu)\lambda_{\max}\left(\Lc\wv^{l+1}_{n}\right)}{\xv^{l\top}_{t}\Lc\wv^{l+1}_{n}{\xv_{t}^l}+\nu} \nonumber\\
  &=\frac{1}{\sigma_n^2}+ \frac{p+\nu}{\xv^{l\top}_{t}\Lc\wv^{l+1}_{n}{\xv_{t}^l}+\nu}\lambda_{\max}\left(\Lc\wv^{l+1}_{n}\right),
  \end{array}
\end{align}
 where we applied 
 Weyl's inequality \cite{laub_matrix_2005} in the first expression. Hence, it suffices to choose $\tau \geq \frac{1}{\sigma_n^2}+ \frac{p+\nu}{\xv^{l\top}_{t}\Lc\wv^{l+1}_{n}{\xv_{t}^l}+\nu}\lambda_{\max}\left(\Lc\wv^{l+1}_{n}\right)$.
\end{proof}

\section{Proof of Theorem \ref{thm:convergence}}
\label{App:3}
\begin{proof}
To prove Theorem
~\ref{thm:convergence},
 we first use the following Lemma   to establish the boundedness of the sequence $\big\{ \big(\Lb_{n}^l, \wv_{n}^l, \uv_{n}^l, \Xb_{n}^l, \av^l, \Vb_n^l, \Phib_{n}^l, \muv_{n}^l, \zv_{n}^l \big) \big\}$.

\begin{lemma}\label{lemma:3}
The sequence of the augmented Lagrangian  $\big\{ \big(\Lb_{n}^l, \wv_{n}^l, \uv_{n}^l, \Xb_{n}^l, \av^l, \Vb_n^l, \Phib_{n}^l, \muv_{n}^l, \zv_{n}^l \big) \big\}$ generated by 
Algorithm~1
is bounded.
\end{lemma}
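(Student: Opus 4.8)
The plan is to follow the now-standard template for analyzing ADMM on nonconvex problems: (i) bound the dual sequences $\{\Phib_n^l\}$, $\{\muv_n^l\}$, $\{\zv_n^l\}$ in terms of the \emph{primal increments}; (ii) establish a sufficient-decrease inequality for the augmented Lagrangian $L_\rho$; (iii) show $L_\rho$ is bounded below; and (iv) combine monotone decrease, lower-boundedness, and coercivity of $L_\rho$ (in the primal blocks, once the duals are known to be bounded) to conclude that every block of the iterate sequence is bounded. Throughout I would write down the first-order optimality condition for each of the six primal subproblems and exploit the fact that the $\Lb_n$, $\uv_n$ and degree-constraint residuals each carry a quadratic penalty, so the dual updates in \eqref{eq:dual_update} can be re-expressed, via those optimality conditions, as (sub)gradients of the corresponding separable/smooth pieces of $f$ evaluated at the new iterate.

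The dual bounds are the crux. For $\muv_n$: the $\uv_n$-subproblem optimality gives $0\in\rho(\uv_n^{l+1}-\wv_n^{l+1}+\av^{l}\odot\hat\wv_{n-1})+\muv_n^l+\alpha\,\partial\|\uv_n^{l+1}\|_1$, and combining with \eqref{eq:dual_update} yields $\muv_n^{l+1}=-\alpha\sv+\rho(\av^{l+1}-\av^l)\odot\hat\wv_{n-1}$ with $\|\sv\|_\infty\le 1$, so $\|\muv_n^{l+1}\|$ is controlled by a constant plus a multiple of $\|\av^{l+1}-\av^l\|$. For $\Phib_n$: the closed-form $\Lb_n$-update \eqref{eq:Theta_update} has all nonzero eigenvalues lower-bounded by $1/\sqrt\rho$, hence $\|(\Lb_n^{l+1})^{\dagger}\|_F$ is uniformly bounded; feeding the $\Lb_n$-optimality condition (which also contains the normal cone of the fixed-rank PSD variety) into \eqref{eq:dual_update} then bounds $\|\Phib_n^{l+1}\|_F$ by a constant plus a multiple of $\|\Lc\wv_n^{l+1}-\Lc\wv_n^{l}\|$. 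A similar computation on the $\wv_n$-subproblem handles $\zv_n$. Thus all three dual increments, and hence (by telescoping together with the decrease inequality below) the duals themselves, are dominated by primal increments.

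For the sufficient decrease I would proceed block by block: each majorized subproblem is strongly convex with a known modulus (proportional to $\rho(4p-1)$ for $\wv_n$ from \eqref{eq:w_update}, to $\tau$ for each $\xv_t$ from \eqref{eq:x_update}, to $\rho$ for $\uv_n$ and for $\av$, and with the usual $-\log\mathrm{det}^\star$-plus-quadratic curvature for $\Lb_n$, while $\Vb_n$ lies on the compact Stiefel manifold), so switching each block produces a quadratic decrease of $L_\rho$; adding the dual-update increase $\tfrac1\rho\sum\|\Delta(\text{dual})\|^2$ and substituting the dual bounds from the previous paragraph, for $\rho$ large enough the right-hand side collapses to $L_\rho^l-c\sum\|\Delta(\text{primal})\|^2$ with $c>0$. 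Lower-boundedness of $L_\rho$ follows by completing the square to absorb $\Phib_n,\muv_n,\zv_n$ into $\tfrac\rho2\|\cdot\|^2$ terms, using that the data-fidelity, $\ell_1$, $\ell_0$ and $\gamma\av^\top\onev$ terms in \eqref{eq:objective} are nonnegative, and using the elementary estimate $-\log\mathrm{det}^\star(\Lb_n)+\tfrac{\rho}{4}\|\Lb_n\|_F^2\ge -C(p,\rho)$ on the PSD cone to dominate the only piece unbounded below. Monotone plus bounded below gives convergence of $\{L_\rho^l\}$; the same completion of squares shows $L_\rho$ is coercive in $(\Lb_n,\wv_n,\uv_n,\Xb_n,\av,\Vb_n)$ once the duals are bounded (the equality residuals tie $\wv_n$, $\mathfrak{d}\wv_n$ and $\uv_n$ together, the fidelity plus heavy-tailed log term control $\Xb_n$, and $\Vb_n$ is already confined to a compact set), which forces the whole iterate sequence to be bounded.

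The main obstacle I expect is the dual bound for $\Phib_n$. Unlike the textbook $\log\det$ case, the $\Lb_n$-update is a projection onto the \emph{nonconvex} fixed-rank PSD variety, so its optimality condition carries a normal-cone term that must be reabsorbed without letting $\Phib_n^{l+1}$ blow up; and because the algorithm updates $\Lb_n$ \emph{before} $\wv_n$, the penalty residual mixes $\wv_n^{l}$ and $\wv_n^{l+1}$, so the bookkeeping that produces the clean "$\|\Delta\Phib_n\|\lesssim\|\Delta\wv_n\|$" estimate is delicate. The saving grace is precisely the eigenvalue lower bound $1/\sqrt\rho$ visible in \eqref{eq:Theta_update}, which keeps $(\Lb_n^{l+1})^{\dagger}$ uniformly bounded and makes the argument go through.
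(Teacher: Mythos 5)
Your proposal follows the Wang--Yin--Zeng-style template for nonconvex ADMM (bound the duals by primal increments via optimality conditions, prove sufficient decrease, prove a lower bound, then invoke coercivity of $L_\rho$), which is a genuinely different and in principle stronger route than the paper's. The paper's own argument is far more elementary: it is a direct induction on $l$ starting from the bounded initialization, showing that each closed-form update ($\Lb_{n}^{l}$ via the eigendecomposition, $\wv_{n}^{l}$, $\uv_{n}^{l}$, $\Xb_{n}^{l}$, $\av^{l}$ via coercivity of the respective subproblem objectives, $\Vb_n^{l}$ via compactness of the Stiefel manifold, and the duals via the residual updates) maps bounded inputs to bounded outputs. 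No descent inequality or dual-gradient identity is needed there; the descent machinery appears only afterwards, in the proof of Theorem~1, and by then the boundedness of the iterates is already assumed available from the lemma.

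However, as a proof of the lemma your plan has concrete gaps. First, the ``saving grace'' you rely on for $\Phib_{n}$ is not valid as stated: the nonzero eigenvalues of $\Lb_{n}^{l+1}$ are $\tfrac12\bigl(\gamma_i+\sqrt{\gamma_i^2+4/\rho}\bigr)$ where the $\gamma_i$ are the top $p-k$ eigenvalues of $\Lc\wv_{n}^{l}+\Phib_{n}^{l}/\rho$, and this quantity tends to $0$ as $\gamma_i\to-\infty$; since nothing forces that matrix to be PSD, the uniform lower bound $1/\sqrt{\rho}$ (and hence the uniform bound on $(\Lb_{n}^{l+1})^{\dagger}$) presupposes control of $\Phib_{n}^{l}$ --- precisely the quantity you are trying to bound --- and the normal cone of the fixed-rank variety is likewise left uncontrolled. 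Second, there is a circularity you do not break: your lower bound on $L_\rho$ comes from completing the square, which leaves a term $-\tfrac{1}{2\rho}\bigl(\normop{\Phib_{n}^{l}}_F^2+\normop{\muv_{n}^{l}}^2+\normop{\zv_{n}^{l}}^2\bigr)$ and hence already requires bounded duals, while your dual bounds are ``constant plus primal increment'' and the primal increments are only summable \emph{after} sufficient decrease and lower-boundedness are both in hand. The clean version of this argument needs the block carrying each dual to be updated last with a Lipschitz-differentiable objective; here the relevant blocks carry $\ell_1$, $\ell_0$, a nonnegativity constraint and a rank constraint, the $\wv_{n}$-step is only an inexact majorized minimization (so its optimality condition certifies the surrogate, not $L_\rho$), and three duals are all tied to the same variable $\wv_{n}$ which is updated in the middle of the sweep. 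Without resolving these points the chain dual bound $\Rightarrow$ decrease $\Rightarrow$ lower bound $\Rightarrow$ boundedness does not close.
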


\begin{proof}
Let $\Lb_{n}^0$, $\wv_{n}^0$, $\uv_{n}^0$, $\Xb_{n}^0$, $\av^0$, $\Vb_n^0$, $\Phib_{n}^0$, $\muv_{n}^0$, and $\zv_{n}^0$ be the initialization of the sequences $\big\{ \Lb_{n}^l \big\}$, $\big\{\wv_{n}^l \big\}$, $\big\{ \uv_{n}^l \big\}$, $\big\{\Xb_{n}^l\big\}$, $\big\{\av^l\big\}$, $\big\{\Vb_n^l\big\}$, $\big\{\Phib_{n}^l\big\}$, $\big\{\muv_{n}^l\big\}$, and $\big\{\zv_{n}^l \big\}$, respectively, and $\big\|\Lb_{n}^0\big\|_F$, $\big\|\wv_{n}^0\big\|$, $\big\|\uv_{n}^0\big\|$, $\big\|\Xb_{n}^0\big\|_F$, $\big\|\av^0\big\|$, $\big\|\Vb_n^0\big\|_F$, $\big\|\Phib_{n}^0\big\|_F$, $\big\|\muv_{n}^0\big\|$, and $\big\|\zv_{n}^0\big\|$ are bounded.

Recall that the sequence $\big\{\Lb_{n}^l \big\}$ is established by
\begin{equation}
\label{eq:Theta_update_proof}
 \Lb_{n}^{l} = \frac{1}{2} \Ub^{l-1} \Big(\Gamb^{l-1} + \big({ {\Gamb^{l-1}}^ 2 + \frac{ 4}{\rho} \Ib}\big)^{1/2}\Big){(\Ub^{l-1})}^\top,
\end{equation}
where $\Gamb^{l-1}$ contains the largest $p-k$ eigenvalues of $ \Lc \wv_{n}^{l-1} + \Phib_{n}^{l-1}/\rho $, and $\Ub^{l-1}$ contains the corresponding eigenvectors. When $l = 1$, $\big\|\Gamb^{0}\big\|_F$ is bounded since both $\big\| \wv_{n}^{0} \big\|_F$ and $\big\| \Phib_{n}^{0} \big\|_F$ are bounded. Therefore, we can conclude that $\big\|\Lb_{n}^{1}\big\|_F$ is bounded.

The sequence $\big\{\wv_{n}^{l} \big\}$ is established by
\begin{equation}
    \wv_{n}^{l} = \argminn_{\wv_{n}\geq \zerov} \,\,\,\frac{\rho(4p  - 1)}{2}\normop{\wv_{n} - \cv^{l-1}}^2 + \beta \normop{\wv_{n}}_0. 
    \label{eq:w_update_proof}
\end{equation}
Note that $\big\|\cv^{0} \big\|$ is bounded because $\big\|\Lb_{n}^{1}\big\|_F$, $\big\|\wv_{n}^0\big\|$, $\big\|\av^0\big\|$, $\big\|\Vb_n^0\big\|_F$, $\big\|\Phib_{n}^0\big\|_F$, $\big\|\muv_{n}^0\big\|$, and $\big\|\zv_{n}^0\big\|$ are bounded. We can check that the objective function $\frac{\rho(4p  - 1)}{2}\normop{\wv_{n} - \cv^{0}}^2 + \beta \normop{\wv_{n}}_0$ is coercive, which leads to the boundedness of $\big\|\wv_{n}^1\big\|$.

The sequence $\big\{\uv_{n}^l \big\}$ is established by
\begin{equation}
\label{eq:u_update_proof}
\uv_{n}^{l} =\argminn_{\uv } 
    \frac{\rho}{2}\normop{\uv_{n}- {\bf{b}}^{l}}^2 + \alpha\normop{\uv_{n}}_1,
\end{equation}
where ${\bf{b}}^{l} =  \wv^{l}_{n} - \av^{l-1}\odot\hat{\wv}_{n-1} - \frac{1}{\rho}  \muv_{n}^{l-1}$. We can check that $\big\| {\bf{b}}^{1} \big\|$ is bounded and the objective function in \eqref{eq:u_update_proof} is coercive, and thus $\big\| \uv_{n}^{1} \big\|$ is bounded. Similarly, by checking the coercivity, we can establish the boundedness of $\big\| \Xb_{n}^1 \big\|$ and $\big\| \av_{n}^1 \big\|$. 

Recall that the sequence $\big\{\Vb_n^{l} \big\}$ is updated by
\begin{equation}
\label{eq:V_update_proof}
    \Vb_n^{l}  = \hspace{-5pt}\underset{ \Vb_n\in \Real^{p\times k},\Vb_n^\top\Vb_n =\Ib  }{\mathsf{argmin}} ~ \hspace{-5pt}\tr\left(\Lc\wv_n^{l}\Vb_n\Vb_n^\top\right).
\end{equation}
The constraint $\Vb_n^\top\Vb_n =\Ib$ in \eqref{eq:V_update_proof} guarantees the boundedness of $\big\| \Vb_n^{l} \big\|_F$.

According to the dual variable updates, we obtain:
\begin{align}
\label{eq:dual_update_proof}
\begin{split}
\Phib_{n}^{1} &= \Phib_{n}^{0} + \rho \left(\Lc\wv_{n}^{1} - \Lb_{n}^{1}\right),  \\
 \muv_{n}^{1} &=  \muv_{n}^{0} + \rho \left( \uv_{n}^{1} - \wv_{n}^{1} + \av^{1}\odot\hat{\wv}_{n-1} \right),  \\
\zv_{n}^{1} &= \zv_{n}^{0} + \rho \left(\mathfrak{d}\wv_{n}^{1} - \dv \right). 
\end{split}
\end{align}
It is straightforward to conclude that $\big\| \Phib_{n}^{1} \big\|_F$, $\big\| \muv_{n}^{1} \big\|$, and $\big\| \zv_{n}^{1} \big\|$ are bounded. Therefore, it holds for $l =1$ that the sequence $\big\{ \big(\Lb_{n}^l, \wv_{n}^l, \uv_{n}^l, \Xb_{n}^l, \av^l, \Vb_n^l, \Phib_{n}^l, \muv_{n}^l, \zv_{n}^l \big) \big\}$ is bounded.

Now, we assume that $\big\{\!\big(\Lb_{n}^{l-1}, \wv_{n}^{l-1}, \uv_{n}^{l-1}, \Xb_{n}^{l-1}, \av^{l-1}, \Vb_n^{l-1},$ $\Phib_{n}^{l-1}, \muv_{n}^{l-1}, \zv_{n}^{l-1} \!\big) \big\}$ is bounded for some $l \geq 1$, and check the boundedness of $\big\{ \big(\Lb_{n}^{l}, \wv_{n}^{l}, \uv_{n}^{l}, \Xb_{n}^{l}, \av^{l}, \Vb_n^{l}, \Phib_{n}^{l}, \muv_{n}^{l}, \zv_{n}^{l} \big) \big\}$. Similar to the proof in \eqref{eq:Theta_update_proof}, we can prove that $\big\|\Lb_{n}^{l} \big\|_F$ is bounded. By checking the coercivity, we can also obtain the boundedness of $\big\|\wv_{n}^l\big\|$, $\big\| \uv_{n}^{l} \big\|$, $\big\| \Xb_{n}^l \big\|$, and $\big\| \av_{n}^l \big\|$. We can also obtain that $\big\| \Phib_{n}^{l} \big\|_F$, $\big\| \muv_{n}^{l} \big\|$, and $\big\| \zv_{n}^{l} \big\|$ are bounded according to the boundedness of $\big\| \Phib_{n}^{l-1} \big\|_F$, $\big\| \muv_{n}^{l-1} \big\|$, $\big\|\zv_{n}^{l-1} \big\|$, $\big\|\Lb_{n}^{l} \big\|_F$, $\big\|\wv_{n}^l\big\|$, $\big\| \uv_{n}^{l} \big\|$, and $\big\| \av_{n}^l \big\|$. Therefore, we establish the boundedness of the sequence $\big\{ \big(\Lb_{n}^l, \wv_{n}^l, \uv_{n}^l, \Xb_{n}^l, \av^l, \Vb_n^l, \Phib_{n}^l, \muv_{n}^l, \zv_{n}^l \big) \big\}$.

\end{proof}

Now, recall that
\begin{align}
    &L_\rho(\Lb_{n}^l, \wv_{n}^l, \uv_{n}^l, \Xb_{n}^l, \av^l, \Vb_n^l, \Phib_{n}^l, \muv_{n}^l, \zv_{n}^l) =
    \nonumber\\ 
    & \,\quad f(\wv_{n}^l,\Xb_{n}^l, \av^l, \uv_{n}^l, \Lb_{n}^l,\Vb_n^l) 
    + \big< \Phib_{n}^l, \Lc\wv_{n}^l-\Lb_{n}^l\big> \nonumber\\
    & \quad + \frac{\rho}{2}\normop{\Lc\wv_{n}^l-\Lb_{n}^l}_F^2 
    + \big<\muv_{n}^l,\uv_{n}^l -\wv_{n}^l + \av^l\odot \hat{\wv}_{n-1} \big> \nonumber\\
    & \quad + \frac{\rho}{2}\normop{\uv_{n}^l -\wv_{n}^l + \av^l\odot \hat{\wv}_{n-1}}^2 
    + \big<\zv_{n}^l, \mathfrak{d}\wv_{n}^l- \dv \big> \nonumber\\
    & \quad + \frac{\rho}{2}\normop{\mathfrak{d}\wv_{n}^l- \dv }^2,
\end{align}

where $f$ is defined as 
\begin{align}
    \begin{split}
    &f(\wv_{n},\Xb_{n}, \av, \uv_{n}, \Lb_{n}, \Vb_n
    )\triangleq\\
    & \hspace{5pt} - \log \det{\!^\ast} (\Lb_{n} ) + \alpha \normop{\uv_{n}}_1 + \beta \normop{\wv_{n}}_0  \\
     &\hspace{5pt}+\, \frac{\nu+p}{T_n} \sum_{t\in F_{n}}\log\left(1+\frac{\xv_t^\top\Lc\wv_{n}\xv_t}{\nu} \right) \\
      &\hspace{5pt}  + \frac{1}{T_n\sigma_n^2} \normop{\Yb_{n} - \Mb_{n}\odot \Xb_{n}}_F^2 + \gamma \av^\top \onev\\
      &\hspace{5pt} \,+\eta \tr\left(\Lc\wv_n\Vb_n\Vb_n^\top\right).
      \end{split}
\end{align}
The lower boundedness of the sequence $\big\{ L_\rho \big(\Lb_{n}^l, \wv_{n}^l, \uv_{n}^l, \Xb_{n}^l, \av^l, \Vb_n^l, \Phib_{n}^l, \muv_{n}^l, \zv_{n}^l \big) \big\}$ can be established by the boundedness of $\big\{ \big(\Lb_{n}^l, \wv_{n}^l, \uv_{n}^l, \Xb_{n}^l, \av^l, \Vb_n^l, \Phib_{n}^l, \muv_{n}^l, \zv_{n}^l \big) \big\}$ as shown in Lemma~\ref{lemma:3}.

The update of $\Lb_{n}^{l+1}$ is obtained by minimizing $L_\rho$ with respect to $\Lb$ with the other variables fixed as $\wv_{n}^l$, $\uv_{n}^l$, $\Xb_{n}^l$, $\av^l$, $\Vb_n^l$, $\Phib_{n}^l$, $\muv_{n}^l$, $\zv_{n}^l$, respectively. Therefore, we obtain:
\begin{align}  
\begin{split}
L_\rho(\Lb_{n}^{l+1}, \wv_{n}^l, \uv_{n}^l, \Xb_{n}^l, \av^l, \Vb_n^l, \Phib_{n}^l, \muv_{n}^l, \zv_{n}^l)   \\
\leq L_\rho(\Lb_{n}^l, \wv_{n}^l, \uv_{n}^l, \Xb_{n}^l, \av^l, \Vb_n^l, \Phib_{n}^l, \muv_{n}^l, \zv_{n}^l).
\end{split}
\end{align}
The update of $\wv_{n}^{l+1}$ employs the majorization-minimization technique, which ensures a reduction in the objective function value. Consequently, we have:
\begin{equation}
\begin{split}
    L_\rho(\Lb_{n}^{l+1}, \wv_{n}^{l+1}, \uv_{n}^l, \Xb_{n}^l, \av^l, \Vb_n^l, \Phib_{n}^l, \muv_{n}^l, \zv_{n}^l)  \\
    \leq L_\rho(\Lb_{n}^{l+1}, \wv_{n}^l, \uv_{n}^l, \Xb_{n}^l, \av^l, \Vb_n^l, \Phib_{n}^l, \muv_{n}^l, \zv_{n}^l).
    \end{split}
\end{equation}
Similar to the case of $\Lb_{n}^{l+1}$, the update of $\uv_{n}^{l+1}$ is achieved by minimizing $L_\rho$ with respect to $\uv$ with the other variables fixed as $\Lb_{n}^{l+1}$, $\wv_{n}^{l+1}$, $\Xb_{n}^l$, $\av^l$, $\Vb_n^l$, $\Phib_{n}^l$, $\muv_{n}^l$, $\zv_{n}^l$,respectively. Then we obtain:
\begin{equation}
\begin{split}
L_\rho(\Lb_{n}^{l+1}, \wv_{n}^{l+1}, \uv_{n}^{l+1}, \Xb_{n}^l, \av^l, \Vb_n^l, \Phib_{n}^l, \muv_{n}^l, \zv_{n}^l) \\
\leq L_\rho(\Lb_{n}^{l+1}, \wv_{n}^{l+1}, \uv_{n}^l, \Xb_{n}^l, \av^l, \Vb_n^l, \Phib_{n}^l, \muv_{n}^l, \zv_{n}^l).
\end{split}
\end{equation}
The update of $\Xb_{n}^{l+1}$ uses the majorization minimization technique, which guarantees the decrease of the objective function value, i.e.:
\begin{equation}
\begin{split}
L_\rho(\Lb_{n}^{l+1}, \wv_{n}^{l+1}, \uv_{n}^{l+1}, \Xb_{n}^{l+1}, \av^l, \Vb_n^l, \Phib_{n}^l, \muv_{n}^l, \zv_{n}^l)\\
\leq L_\rho(\Lb_{n}^{l+1}, \wv_{n}^{l+1}, \uv_{n}^{l+1}, \Xb_{n}^l, \av^l, \Vb_n^l, \Phib_{n}^l, \muv_{n}^l, \zv_{n}^l).
\end{split}
\end{equation}
Regarding the update of $\av^{l+1}$, one has
\begin{align}
\label{eq:proof_a} 
\begin{split}
& L_\rho(\Lb_{n}^{l+1}, \wv_{n}^{l+1}, \uv_{n}^{l+1}, \Xb_{n}^{l+1}, \av^{l+1}, \Vb_n^l, \Phib_{n}^l, \muv_{n}^l, \zv_{n}^l) \\
&\hspace{0pt}- L_\rho(\Lb_{n}^{l+1}, \wv_{n}^{l+1}, \uv_{n}^{l+1}, \Xb_{n}^{l+1}, \av^l, \Vb_n^l, \Phib_{n}^l, \muv_{n}^l, \zv_{n}^l) \\
 &= \rho\, \big< \big( \av^{l+1} - \av^{l}\big)\odot \hat{\wv}_{n-1}, \uv_{n}^{l+1} - \wv_{n}^{l+1} + \frac{1}{\rho} \muv_{n}^l \big> +
\\
&\quad\,  \frac{\rho}{2} \big\| \av^{l+1} \odot \hat{\wv}_{n-1}\big\|^2 
- \frac{\rho}{2} \big\| \av^{l} \odot \hat{\wv}_{n-1}\big\|^2 + \big< \av^{l+1} - \av^l, \gamma \onev\big>. 
\end{split}
\end{align}
Recall that $\av^{l+1}$ is the minimizer of the following optimization problem:
\begin{equation}\label{eq:update_a_proof}
\min_{\av\geq \zerov} 
    \,\frac{\rho}{2} \big\|\av\odot\hat{\wv}_{n-1}- \wv^{l+1}_{n} + \uv^{l+1}_{n} + \frac{1}{\rho}  \muv_{n}^l \big\|^2 + \gamma \av^\top \onev. 
\end{equation}
Therefore, $\av^{l+1}$ satisfies the following inequality:
\begin{equation}
\begin{split} 
\big< \nabla g(\av), {\bf{z}} - \av\big>
 \geq 0, \quad \forall {\bf{z}} \geq {\bf{0}},
\end{split}
\end{equation}
where $g$ is the objective function of Problem~\eqref{eq:update_a_proof}. By setting ${\bf z = \av^l}$ and $\av = \av^{l+1}$, we further obtain:
\begin{align}
\label{eq:proof-a-minimizer}
\begin{split}
&\rho \big< \av^{l+1} \odot \hat{\wv}_{n-1}, \big(\av^{l} - \av^{l+1} \big) \odot \hat{\wv}_{n-1}\big> \geq \big< \av^{l+1} - \av^l, \gamma \onev\big>  \\
&\hspace{50pt}+ \rho \big< \uv_{n}^{l+1} - \wv_{n}^{l+1} + \frac{1}{\rho} \muv_{n}^l, \big( \av^{l+1} - \av^{l}\big)\odot \hat{\wv}_{n-1}\big>.
\end{split}
\end{align}
Plugging \eqref{eq:proof-a-minimizer} into \eqref{eq:proof_a}, we get:
\begin{align}
\begin{split}
& L_\rho(\Lb_{n}^{l+1}, \wv_{n}^{l+1}, \uv_{n}^{l+1}, \Xb_{n}^{l+1}, \av^{l+1}, \Vb_n^l, \Phib_{n}^l, \muv_{n}^l, \zv_{n}^l) \\
&\quad- L_\rho(\Lb_{n}^{l+1}, \wv_{n}^{l+1}, \uv_{n}^{l+1}, \Xb_{n}^{l+1}, \av^l, \Vb_n^l, \Phib_{n}^l, \muv_{n}^l, \zv_{n}^l)  \\
& \leq \rho\, \big< \av^{l+1} \odot \hat{\wv}_{n-1}, \av^{l} \odot \hat{\wv}_{n-1}\big> - \frac{\rho}{2} \big\| \av^{l+1} \odot \hat{\wv}_{n-1}\big\|^2  \\
&\quad- \frac{\rho}{2} \big\| \av^{l} \odot \hat{\wv}_{n-1}\big\|^2  = - \frac{\rho}{2} \big\| \av^{l+1} \odot \hat{\wv}_{n-1} - \av^l \odot \hat{\wv}_{n-1} \big\|^2.
\end{split}
\end{align}
Since the update of $\Vb_n^{l+1}$ is achieved by minimizing $L_\rho$ with respect to $\Vb_n$ while keeping the other variables fixed, we can obtain:
\begin{equation}
\begin{split}
L_\rho(\Lb_{n}^{l+1}, \wv_{n}^{l+1}, \uv_{n}^{l+1}, \Xb_{n}^{l+1}, \av^{l+1}, \Vb_n^{l+1}, \Phib_{n}^l, \muv_{n}^l, \zv_{n}^l) \\
\leq L_\rho(\Lb_{n}^{l+1}, \wv_{n}^{l+1}, \uv_{n}^{l+1}, \Xb_{n}^{l+1}, \av^{l+1}, \Vb_n^l, \Phib_{n}^l, \muv_{n}^l, \zv_{n}^l).
\end{split}
\end{equation}
Now, consider the update formulas for the dual variables, $\Phib_{n}^{l+1}$, $\muv_{n}^{l+1}$, and $\zv_{n}^{l+1}$:
\begin{align}
\label{eq:dual_update}
\begin{split}
\Phib_{n}^{l+1} &= \Phib_{n}^{l} + \rho \left(\Lc\wv_{n}^{l+1} - \Lb_{n}^{l+1}\right),  \\
 \muv_{n}^{l+1} &=  \muv_{n}^{l} + \rho \left( \uv_{n}^{l+1} - \wv_{n}^{l+1} + \av^{l+1}\odot\hat{\wv}_{n-1} \right),  \\
\zv_{n}^{l+1} &= \zv_{n}^{l} + \rho \left(\mathfrak{d}\wv_{n}^{l+1} - \dv \right). 
\end{split}
\end{align}
According to
\eqref{eq:dual_update}, we obtain:
\begin{align}
\begin{split}
&L_\rho(\Lb_{n}^{l+1}, \wv_{n}^{l+1}, \uv_{n}^{l+1}, \Xb_{n}^{l+1}, \av^{l+1}, \Vb_n^{l+1}, \Phib_{n}^{l+1}, \muv_{n}^{l+1}, \zv_{n}^{l+1}) \\
&\quad= L_\rho(\Lb_{n}^{l+1}, \wv_{n}^{l+1}, \uv_{n}^{l+1}, \Xb_{n}^{l+1}, \av^{l+1}, \Vb_n^{l+1}, \Phib_{n}^l, \muv_{n}^l, \zv_{n}^l) \nonumber\\
& \quad\quad + \frac{1}{\rho} \big( \big\| \Phib_{n}^{l+1} - \Phib_{n}^l \big\|_F^2 + \big\| \muv_{n}^{l+1} - \muv_{n}^l \big\|^2 + \big\| \zv_{n}^{l+1} - \zv_{n}^l \big\|^2 \big).
\end{split}
\end{align}
Combining the above inequalities yields:
\begin{equation}\label{eq:sufficient-decrease}
\begin{split}
&L_\rho(\Lb_{n}^{l+1}, \wv_{n}^{l+1}, \uv_{n}^{l+1}, \Xb_{n}^{l+1}, \av^{l+1}, \Vb_n^{l+1}, \Phib_{n}^{l+1}, \muv_{n}^{l+1}, \zv_{n}^{l+1})\\
&\quad\leq L_\rho(\Lb_{n}^{l}, \wv_{n}^{l}, \uv_{n}^{l}, \Xb_{n}^{l}, \av^{l}, \Vb_n^{l}, \Phib_{n}^l, \muv_{n}^l, \zv_{n}^l) \\
& \quad \quad+ \frac{1}{\rho} \big( \big\| \Phib_{n}^{l+1} - \Phib_{n}^l \big\|_F^2 + \big\| \muv_{n}^{l+1} - \muv_{n}^l \big\|^2 + \big\| \zv_{n}^{l+1}\\
& \quad \quad- \zv_{n}^l \big\|^2 \big) - \frac{\rho}{2} \big\| \av^{l+1} \odot \hat{\wv}_{n-1} - \av^l \odot \hat{\wv}_{n-1} \big\|^2.
\end{split}
\end{equation}
By calculation, we obtain that if $\rho$ is sufficiently large such that
\begin{equation}
\label{eq:condition-rho}
\rho \geq \max_l \tfrac{c \big( \big\| \Phib_{n}^{l+1} - \Phib_{n}^l \big\|_F^2 + \big\| \muv_{n}^{l+1} - \muv_{n}^l \big\|^2 + \big\| \zv_{n}^{l+1} - \zv_{n}^l \big\|^2 \big)^{\frac{1}{2}}}{\big\| \av^{l+1} \odot \hat{\wv}_{n-1} - \av^l \odot \hat{\wv}_{n-1} \big\|}
\end{equation}
holds with some constant $c > \sqrt{2}$, then we have
\begin{equation}
\label{eq:seq-descreasing}
\begin{split}
L_\rho(\Lb_{n}^{l+1}, \wv_{n}^{l+1}, \uv_{n}^{l+1}, \Xb_{n}^{l+1}, \av^{l+1}, \Vb_n^{l+1}, \Phib_{n}^{l+1}, \muv_{n}^{l+1}, \zv_{n}^{l+1})\\
\leq L_\rho(\Lb_{n}^{l}, \wv_{n}^{l}, \uv_{n}^{l}, \Xb_{n}^{l}, \av^{l}, \Vb_n^{l}, \Phib_{n}^l, \muv_{n}^l, \zv_{n}^l),
\end{split}
\end{equation}
for any $l \in \mathbb{N}_+$. Therefore, the sequence $\big\{ L_\rho(\Lb_{n}^{l}, \wv_{n}^{l}, \uv_{n}^{l}, \Xb_{n}^{l}, \av^{l}, \Vb_n^{l}, \Phib_{n}^l, \muv_{n}^l, \zv_{n}^l)\big\}$ is monotonically decreasing and lower bounded, indicating that it is convergent.

By combining \eqref{eq:sufficient-decrease} and \eqref{eq:condition-rho}, we further obtain:
\begin{equation}\label{eq:limit}
\begin{split}
&L_\rho(\Lb_{n}^{l}, \wv_{n}^{l}, \uv_{n}^{l}, \Xb_{n}^{l}, \av^{l}, \Vb_n^{l}, \Phib_{n}^l, \muv_{n}^l, \zv_{n}^l) \\
&\quad- L_\rho(\Lb_{n}^{l+1}, \wv_{n}^{l+1}, \uv_{n}^{l+1}, \Xb_{n}^{l+1}, \av^{l+1}, \Vb_n^{l+1}, \Phib_{n}^{l+1}, \muv_{n}^{l+1}, \zv_{n}^{l+1})  \\
  &  \geq \frac{(c^2 - 2)\rho}{2} \big( \big\| \Lc\wv_{n}^{l+1}-\Lb_{n}^{l+1} \big\|_F^2 + \big\| \mathfrak{d}\wv_{n}^{l+1}- \dv \big\|^2 \\
  &\quad + \big\| \uv_{n}^{l+1} -\wv_{n}^{l+1} + \av^{l+1}\odot \hat{\wv}_{n-1} \big\|^2 \big).
\end{split}
\end{equation}
The convergence of $L_\rho(\Lb_{n}^{l}, \wv_{n}^{l}, \uv_{n}^{l}, \Xb_{n}^{l}, \av^{l}, \Vb_n^{l}, \Phib_{n}^l, \muv_{n}^l, \zv_{n}^l)$ yields:
\begin{align}
&\lim_{l \to + \infty} \big\| \Lc\wv_{n}^{l}-\Lb_{n}^{l} \big\|_F = 0,\nonumber\\
& \lim_{l \to + \infty} \big\| \mathfrak{d}\wv_{n}^{l}- \dv \big\| = 0,  \nonumber\\
&\lim_{l \to + \infty} \big\| \uv_{n}^{l} -\wv_{n}^{l} + \av^{l}\odot \hat{\wv}_{n-1} \big\| = 0, \nonumber
\end{align}
completing the proof.
\end{proof}

\bibliography{My_Library}
\bibliographystyle{IEEEbib}



\end{document}